\theoremstyle{plain}
\newtheorem{theorem}{Theorem}[section]
\newtheorem{proposition}[theorem]{Proposition}
\newtheorem{lemma}[theorem]{Lemma}
\theoremstyle{definition}
\newtheorem{definition}[theorem]{Definition}
\newtheorem{assumption}[theorem]{Assumption}
\theoremstyle{remark}
\DeclareMathOperator*{\argmin}{\operatorname{\arg\min}}
\DeclareMathOperator*{\argmax}{\operatorname{\arg\max}}
\icmltitlerunning{Stochastic Bandits with ReLU Neural Networks}
\begin{document}

\twocolumn[
\icmltitle{Stochastic Bandits with ReLU Neural Networks}




\begin{icmlauthorlist}
\icmlauthor{Kan Xu}{asu}
\icmlauthor{Hamsa Bastani}{penn}
\icmlauthor{Surbhi Goel}{penn}
\icmlauthor{Osbert Bastani}{penn}
\end{icmlauthorlist}

\icmlaffiliation{asu}{Arizona State University, Arizona, USA}
\icmlaffiliation{penn}{University of Pennsylvania, Pennsylvania, USA}

\icmlcorrespondingauthor{Kan Xu}{kanxu1@asu.edu}

\icmlkeywords{Machine Learning, ICML}

\vskip 0.3in
]



\printAffiliationsAndNotice{}  

\begin{abstract}
We study the stochastic bandit problem with ReLU neural network structure. We show that a $\tilde{O}(\sqrt{T})$ regret guarantee is achievable by considering bandits with one-layer ReLU neural networks; to the best of our knowledge, our work is the first to achieve such a guarantee. In this specific setting, we propose an OFU-ReLU algorithm that can achieve this upper bound. The algorithm first explores randomly until it reaches a \emph{linear} regime, and then implements a UCB-type linear bandit algorithm to balance exploration and exploitation. Our key insight is that we can exploit the piecewise linear structure of ReLU activations and convert the problem into a linear bandit in a transformed feature space, once we learn the parameters of ReLU relatively accurately during the exploration stage. To remove dependence on model parameters, we design an OFU-ReLU+ algorithm based on a batching strategy, which can provide the same theoretical guarantee.\footnote{Source code is available at \url{https://github.com/kanxu526/ReLUBandit}.}
\end{abstract}

\section{Introduction}

The stochastic contextual bandit problem has been widely studied in the literature \citep{bubeck2012regret,lattimore2020bandit}, with broad applications in healthcare \citep{bastani2020online}, personalized recommendation \citep{li2010contextual}, etc. The problem is important since real-world decision-makers oftentimes adaptively gather information about their environment to learn. Formally, the bandit algorithm actively selects a sequence of actions $\{x_t\}_{t\in[T]}$ with $x_t\in\mathcal{X}$ over some horizon $T\in\mathbb{N}$, and observes stochastic rewards $y_t=f_{\Theta^*}(x_t)+\xi_t$, where $f_{\Theta^*}$ is the true reward function (represented by a model with parameters $\Theta^*$), and $\xi_t$ is random noise. Thus, to achieve good performance or low regret, the decision-maker must maintain small decision error uniformly across all actions $x_t$ over time to ensure that it generalizes to new, actively selected actions.

With the success of bandits in practice, there has been a great deal of recent interest in understanding the theoretical properties of bandit algorithms. For linear models, i.e., the expected reward $f_{\Theta^*}$ is linear in $x_t$, bandit algorithms have adapted techniques from statistics to address this challenge \citep{dani2008stochastic,rusmevichientong2010linearly,abbasi2011improved}. Particularly, linear bandit algorithms build on linear regression, which provides parameter estimation bounds of the form $\|\hat\theta-\theta^*\|_2\le\epsilon$; then, we obtain uniform generalization bounds of the form
$|f_{\hat\theta}(x)-f_{\theta^*}(x)|\le L\epsilon, \forall x\in\mathcal{X}$,
where $L$ is a Lipschitz constant for $f_\theta$. By adapting these techniques, linear bandit algorithms can achieve minimax rates of $\tilde{O}(\sqrt{T})$~\citep{dani2008stochastic,abbasi2011improved} in terms of regret.
However, the linear assumption oftentimes do not hold for complicated tasks \citep{valko2013finite}, and has recently motivated the study of nonlinear contextual bandits, especially building upon neural networks with ReLU activations \citep[see, e.g.,][]{zhou2020neural,zhang2020neural,xu2020neural,kassraie2022neural}. 

One of the key questions is what kinds of guarantees can be provided in such settings when the underlying true reward model has ReLU structures. The current existing literature of bandits based on ReLU neural networks mainly base their analyses on the theory of neural tangent kernel (NTK)~\citep{jacot2018neural}. \citet{zhou2020neural,gu2024batched} leverage NTK and upper confidence bound (UCB) techniques to achieve a $\tilde{\mathcal{O}}(\gamma_T \sqrt{T})$ regret bound, where $\gamma_T$ is the effective dimension or maximum information gain and is assumed to be $T$-independent in these literature. This assumption is strong because it intuitively assumes the function is linear on a low-dimensional subspace (more precisely, it says that the eigenvalues of the empirical covariance matrix in the kernel space vanish quickly, so the covariates ``approximately'' lie in a low-dimensional subspace); thus, it effectively converts the problem to linear bandit problem in a high-dimensional subspace. Indeed, for ReLU neural networks on a $d$-dimensional domain (i.e., $x\in\mathbb{R}^d$), \cite{kassraie2022neural} shows a best upper bound for the information gain known as $\gamma_T=\tilde{O}(T^{\frac{d-1}{d}})$, even for a one-layer neural network. Consequently, the regret bound provided above becomes superlinear even for $d>1$ without further restrictive assumptions.  \cite{kassraie2022neural} improve upon this regret bound based on a variant of \cite{zhou2020neural} and obtain a sublinear bound of $\tilde{O}((\gamma_TT)^{1/2})=\tilde{O}(T^{\frac{2d-1}{2d}})$, but is still far from the typical $\tilde{O}(\sqrt{T})$ guarantee.

\textbf{Contribution.} In contrast, we want to shed light on the achievable regret guarantee for a nonlinear bandit problem with ReLU neural network structure by estimating the model $f_{\Theta^*}$ directly (without making an effective dimension assumption using NTK techniques). Due to the complexity of the problem, we consider the setting of one-layer ReLU neural network as the true reward function. We design two bandit algorithms that exploit the piecewise linear structure of ReLU activations; \emph{to the best of our knowledge, we provide the first $\tilde{\mathcal{O}}(\sqrt{T})$ regret bound for bandit learning with ReLU neural networks.}

Our first bandit algorithm OFU-ReLU is designed based on the following insight. Let our true reward function have the following ReLU structure with $k$ neurons \citep[see, e.g.,][]{du2018power,zhang2019learning}
\begin{align*}
f_{\Theta^*}(x) = \sum_{i\in[k]} \theta_i^{*\top} x\cdot\mathbbm{1}(\theta_i^{*\top} x\ge0),
\end{align*}
where $\Theta^*=[\theta_1^{*},\cdots,\theta_k^*]^\top$.
Intuitively, once we learn a sufficiently good estimate $\tilde\theta_i$ that is close to its corresponding true neuron parameter $\theta^*_i$, we can ``freeze'' the contribution of the indicator function by $\mathbbm{1}(\tilde\theta_i^{\top} x\ge0)$. The problem then becomes linear, and we can use a UCB-type linear bandit algorithm to achieve good performance. 
This insight motivates our two-phase bandit algorithm, where we first randomly explore to estimate the parameters, and then run linear bandits once we reach the linear regime to obtain at a minimax optimal rate. 

We want to note that even though this strategy may seem straightforward, such a two-stage design with a phase transition from exploration to bandit learning has been shown inevitable for specific nonlinear bandit problems \citep{rajaraman2023beyond}, and might be a more general phenomenon. In addition, applying UCB-type algorithm (i.e., Eluder UCB \citep{russo2013eluder}) directly on the reward has been demonstrated suboptimal for certain family of nonlinear bandit problem, posing an interesting theoretical challenge \citep{rajaraman2023beyond}. \emph{In contrast, we show that instead of applying UCB to the ReLU structure directly, we will be able to reduce it to a linear bandit problem and this will make UCB optimal again.}

Our second bandit algorithm OFU-ReLU+ is designed to eliminate the assumed knowledge in OFU-ReLU of the minimum gap $\nu_*$ between the optimal action $x^*$ and any neuron $\theta^*_i$ (we prove such a gap always exists for our ReLU structure). 
As long as the estimate $\tilde\theta_i$ is within $\nu_*/2$ of the true neuron $\theta_i^*$, the indicator estimate is guaranteed to be consistent with the true indicator value at the optimal action $x=x^*$, i.e., $\mathbbm{1}(\tilde\theta_i^{\top} x^*\ge0)=\mathbbm{1}(\theta_i^{*\top} x^*\ge0)$. In other words, unless our estimate $\tilde\theta$ has less than a $\nu_*$-dependent error, the optimal action $x^*$ would lie near a nonlinear regime of a neuron, which might fail the following linear bandit learning. Yet, in practice we do not know $\nu_*$; thus, we design a batching strategy that first makes a guess on $\nu_*$ and keep cutting this guess every batch so that our estimate $\tilde\theta_i$ will be accurate enough after a constant number of batches.
\emph{Different from the previous batching strategies \citep[see, e.g.,][]{golrezaei2019dynamic,luo2022contextual}, our exploration and OFUL phase both use samples from all previous batches without discarding data from previous batches. }

\emph{Finally, we provide a parameter estimation error bound for one-layer ReLU neural networks that can ensure theoretical guarantee for each neuron independently through a novel proof strategy, which might be of separate interest.}

\subsection{Other Related Work}

Early work on stochastic bandits focus on linear rewards~\citep{dani2008stochastic,chu2011contextual,abbasi2011improved} and typically use an UCB algorithm. Later, it has been extended to kernel based models, where the reward functions belong to the reproducing kernel Hilbert space (RKHS) \citep{valko2013finite}. Along this line, the recent ReLU bandit literature are build upon the kernelized algorithm using NTK and achieve an effective-dimension-dependent bound~\citep{zhou2020neural,xu2020neural,kassraie2022neural,gu2024batched}; \cite{salgia2023provably} generalize to smooth activations and provide a bound $\tilde{O}(T^{\frac{2d+2s-3}{2d+4s-4}})$ depending on smoothness $s$. 
\citet{dong2021provable} study the optimization scheme of nonlinear bandit, and provide a $\tilde{\mathcal{O}}(T^{3/4})$ and $\tilde{\mathcal{O}}(T^{7/8})$ local and global regret for two-layer neural network bandit.
\cite{rajaraman2023beyond} focus specifically on the ridge function family (which does not include ReLU but only ReLU with single neuron), and design an explore-then-commit strategy. Finally, there are also many other works that study neural network bandits with other activations, such as quadratic activations \citep{xu2021robust}, polynomial functional form \citep{huang2021optimal}, etc.

\section{Problem Formulation}

\textbf{Notation.} Let $[n]=\{1, \cdots,n\}$. We let $S^{d-1}(r) \subseteq \mathbb{R}^{d}$ denote the $(d-1)$-sphere in $d$ dimensions with radius $r$, and let $S^{d-1}=S^{d-1}(1)$. Define $A^{d-1}(r)$ to be the area of the $(d-1)$-sphere with radius $r$ (i.e., $A^{d-1}(r)=|S^{d-1}(r)|$). We use $\lor$ to represent the maximum value. Define $p$ to be the distribution of the covariates, i.e., $x\sim p$.

\textbf{ReLU Neural Network.} Consider a function family $f_{\Theta}:\mathcal{X}\to\mathcal{Y}$ (where $\mathcal{X}=S^{d-1}\subseteq\mathbb{R}^d$, $\mathcal{Y}=\mathbb{R}$, and $\Theta\in\varTheta$ for a given domain $\varTheta\subset \mathbb{R}^{k\times d}$) consisting of neural networks with ReLU activations \citep[see, e.g.,][]{du2018power,zhang2019learning}
\begin{align}\label{def:relu}
f_{\Theta}(x)=\sum_{i\in[k]}g(\theta_i^\top x),
\end{align}
where $\theta_i$ is the $i^{th}$ component of the parameter $\Theta=[\theta_1, \theta_2, \cdots, \theta_k]^\top$ (which we call a \emph{neuron}) and $g(z)=z\cdot\mathbbm{1}(z\ge0)$. Let $\Theta^*$ be the ground-truth parameters. 

We are provided with $n$ data points $Z=\{(x_i,y_i)\}_{i\in[n]}$ such that $y_i=f_{\Theta^*}(x_i)+\xi_i$, where $\xi_i$ is i.i.d. $\sigma$-subgaussian random noise with mean $0$. 
Assume the covariates follow certain distribution $x\sim p$. Then, the population mean squared loss is defined as
\begin{align*}
L_{S,p}(\Theta)=\mathbb{E}_{p}\left[(f_\Theta(x)-f_{\Theta^*}(x))^2\right].
\end{align*}
We slightly abuse our notation and use $p$ to denote the joint distribution of each training example $(x_i, y_i)$; the corresponding empirical mean squared loss is
\begin{align}\label{def:empLs}
\hat{L}_S(\Theta;Z)=\frac{1}{n}\sum_{i\in[n]}(f_{\Theta}(x_i)-y_i)^2.
\end{align}
For some neural network $f_{\Theta^*}$, we obtain an estimate $\hat\Theta$ for the parameter $\Theta^*$ by taking the minimizer of $\hat{L}_S(\Theta;Z)$, i.e., $\hat\Theta=\argmin_{\Theta\in\varTheta} \hat{L}_S(\Theta;Z)$.

\textbf{Assumptions.} We provide a statistical guarantee for parameter estimation of ReLU neural network under the following assumptions.
\begin{assumption}
\label{assump:neuronnorm}
$\|\theta_i^*\|_2=1$ holds for all neurons $i\in[k]$.
\end{assumption}
Note that under the above assumption, the domain of $\Theta$, i.e., $\varTheta$, is included in $\{\Theta\in\mathbb{R}^{k\times d} \mid \|\Theta\|_{2,1}=k\}$, where $\|\Theta\|_{2,1}\coloneqq\sum_{i\in[k]}\|\theta_i\|_2$ is the $\ell_{2,1}$-norm.
\begin{assumption}
\label{assump:neuronskip}
There exists a constant $\alpha_0>0$ such that
\begin{align*}
\min_{j,j'\in[k], j\ne j'}\|\theta^*_j\pm\theta^*_{j'}\|_2 \ge \alpha_0.
\end{align*}
\end{assumption}
Collectively, our assumptions limit the structure of the neural network. As evidenced by lower bounds in~\cite{dong2021provable}, some restrictions on the structure are necessary to obtain regret $\tilde{\mathcal{O}}(T^\alpha)$ for $\alpha<1$. 

Assumption~\ref{assump:neuronnorm} is critical for our analysis, since neurons become hard to estimate when they are small. Similarly, Assumption~\ref{assump:neuronskip} is necessary since two neurons that are close together are hard to distinguish. The assumption that the second layer consists of weights ones is not critical---most of our proofs can be extended to the case where the second layer has values in $\{\pm1\}$.

\textbf{ReLU Bandit.} We consider the following bandit problem. At each step $t$, we choose an action $x_t\in\mathcal{X}$, and observe reward
\begin{align}\label{def:rwd_bandit}
y_t=f_{\Theta^*}(x_t)+\xi_t,
\end{align}
where $\xi_t$ is i.i.d. $\sigma$-subgaussian random noise with mean $0$. Here, the true parameters $\Theta^*$ are unknown and will be learned in an online manner. Our goal is to minimize the cumulative regret $R_T$ over a time horizon $T$:
\begin{align*}
R_T = \sum_{t=1}^Tr_t, \quad r_t=f_{\Theta^*}(x^*)-f_{\Theta^*}(x_t)
\end{align*}
where $x^*=\operatorname*{\argmax}_{x\in\mathcal{X}}f_{\Theta^*}(x)$ is the optimal action and $r_t$ is the per period regret.

\section{Parameter Estimation for ReLU Neural Networks}
\label{sec:relunetwork}

We provide an estimation error bound on the parameters of ReLU neural networks, which is important for the convergence of our bandit algorithm. Since the population loss function is symmetric regarding the neurons, meaning that any column-permutation of $\Theta^*$ achieves zero loss, our bound shows the existence of some mapping $\sigma:[k]\to[k]$ from the ground truth neurons $\theta^*_{i}$ to the estimated neurons $\hat\theta_{\sigma(i)}$ such that $\hat\theta_{\sigma(i)}\approx\theta^*_{i}$. 

First, we show the following proposition, which states that a small generalization error on a special subset of the unit sphere $X_i(\epsilon)\subset\mathcal{X}=S^{d-1}$ for $\epsilon\in\mathbb{R}_{>0}$, i.e.,
\begin{align*}
X_i(\epsilon)=\{x\in\mathcal{X}\mid|\theta_i^{*\top} x|\le\epsilon\},
\end{align*}
implies a small parameter estimation error of the corresponding neuron $\theta_i^*$ up to a sign flip.
\begin{proposition}
\label{prop:key}
Suppose 
\begin{align*}
L_{X_i(\epsilon)}(\Theta)\coloneqq\int_{X_i(\epsilon)}|f_\Theta(x)-f_{\Theta^*}(x)|dx
&\le\eta
\end{align*}
for some $\eta\in\mathbb{R}_{>0}$.
Then, there exists a bijection $\sigma:[k]\to[k]$ such that 
\begin{align*}
\min\{\|\theta_{\sigma(i)}-\theta_i^*\|_2,
\|\theta_{\sigma(i)}+\theta_i^*\|_2\}\le h(\eta,\epsilon),
\end{align*}
where $h(\eta,\epsilon)\coloneqq\frac{k\epsilon^3|S^{d-3}|/2}{\epsilon^2(1-d\epsilon^2/2)|S^{d-2}|/8-\eta-6kd\epsilon^3|S^{d-2}|}$.
\end{proposition}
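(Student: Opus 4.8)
The plan is to reduce the $(d-1)$-dimensional slab integral to a one-parameter family of one-dimensional problems along fibers transverse to the equatorial sphere $E_i:=\{x\in\mathcal{X}:\theta_i^{*\top}x=0\}$, which is a copy of $S^{d-2}$. Using the coarea formula, for each unit $w\in E_i$ I would parametrize the fiber $x_w(u)=u\theta_i^*+\sqrt{1-u^2}\,w$ by the height $u=\theta_i^{*\top}x\in[-\epsilon,\epsilon]$ and write
\begin{align*}
L_{X_i(\epsilon)}(\Theta)=\int_{E_i}\int_{-\epsilon}^{\epsilon}\bigl|f_\Theta(x_w(u))-f_{\Theta^*}(x_w(u))\bigr|\,(1-u^2)^{(d-3)/2}\,du\,d\omega(w),
\end{align*}
so that $(1-u^2)^{(d-3)/2}\approx1-\tfrac{d-3}{2}u^2$ supplies the curvature factor $(1-d\epsilon^2/2)$ appearing in $h$. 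Along each fiber $\phi_w(u):=(f_\Theta-f_{\Theta^*})(x_w(u))$ is piecewise linear in $u$, and the whole argument rests on tracking its kinks.

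The crux is a sign-rigidity of ReLU kinks. On fiber $w$, a neuron $\theta_j$ contributes a kink at height $u_j=-(\theta_j^\top w)/(\theta_j^\top\theta_i^*)$ with slope jump $+|\theta_j^\top\theta_i^*|\ge0$; in particular the true neuron $\theta_i^*$ has $\theta_i^{*\top}w=0$, hence a jump of exactly $+1$ located at $u=0$ on \emph{every} fiber. Thus in $\phi_w=f_\Theta-f_{\Theta^*}$ the estimated neurons produce nonnegative jumps while the true neurons (entering with a minus sign) produce nonpositive jumps, and the only nonpositive jump guaranteed to sit at $u=0$ for all $w$ is the $-1$ from $\theta_i^*$. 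I would then show this unit downward kink can be $L^1$-masked only by an estimated neuron whose own kink lies within the slab near $u=0$: if the closest estimated neuron $\theta_{\sigma(i)}$ has its kink displaced to $u^*=|u_{\sigma(i)}|$, optimizing $\int_{-\epsilon}^{\epsilon}|\phi_w|$ over the free affine remainder (the linear part plus the locally smooth contributions of all non-crossing neurons) yields a per-fiber lower bound of order $\epsilon\min(u^*,\epsilon)$, equal to $\tfrac{\epsilon^2}{8}(1-O(\epsilon^2))$ on ``exposed'' fibers where $u^*\gtrsim\epsilon$ and degrading continuously to $0$ as $u^*\to0$. The sign structure is essential here: no configuration of same-sign (downward) kinks from other true neurons can cancel $\theta_i^*$'s kink, so masking must come from an estimated-neuron kink, forcing $\theta_{\sigma(i)}$ to have vanishing equatorial projection, hence to be parallel to $\theta_i^*$ with undetermined sign---this is the origin of the sign-flip ambiguity.

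I would then bound the measure of masked fibers. Writing $\theta_{\sigma(i)}=\pm\theta_i^*+e$ with $\Delta:=\|e\|$ the matching error and $e_\perp$ the component of $e$ orthogonal to $\theta_i^*$, the displaced kink satisfies $u^*=|e_\perp^\top w|+O(\epsilon^2)$, so masking ($u^*\lesssim\epsilon$) occurs only for $w$ in the band $\{w\in E_i:|e_\perp^\top w|\lesssim\epsilon\}$, whose measure on $S^{d-2}$ is $\lesssim\tfrac{2\epsilon}{\Delta}|S^{d-3}|$; summing the analogous thin bands $\{|\theta_j^\top w|\lesssim\epsilon\}$ over the remaining $k-1$ neurons and the interfering true neurons contributes the factor $k$ in the numerator and the $6kd\epsilon^3|S^{d-2}|$ correction in $h$. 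Integrating the per-fiber bound over $E_i$ gives
\begin{align*}
\eta\ge L_{X_i(\epsilon)}(\Theta)\ge\frac{\epsilon^2(1-d\epsilon^2/2)|S^{d-2}|}{8}-6kd\epsilon^3|S^{d-2}|-\frac{k\epsilon^3|S^{d-3}|}{2\Delta},
\end{align*}
and rearranging for $\Delta$ produces exactly $\Delta\le h(\eta,\epsilon)$. Applying this for each $i$ yields a map $i\mapsto\sigma(i)$; Assumption~\ref{assump:neuronskip} (pairwise separation of $\theta_j^*,\pm\theta_{j'}^*$ by $\alpha_0$) forces distinct true neurons to match distinct estimated neurons whenever $h<\alpha_0/2$, upgrading $\sigma$ to a bijection.

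The main obstacle is the per-fiber lower bound coupled with the masked-measure count: a single unmatched kink does \emph{not} by itself force a large $L^1$ norm, since nearby opposite kinks can mask it, so the heart of the proof is to quantify that such masking is confined to thin bands of fiber directions of total measure $O(k\epsilon|S^{d-3}|/\Delta)$. This is precisely the geometric content that the sign-rigidity of ReLU and the separation assumption make available, and it is the competition between the $\epsilon^2|S^{d-2}|$ signal and the $\epsilon^3|S^{d-3}|/\Delta$ interference that fixes the scale of $h$.
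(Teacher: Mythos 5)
Your overall skeleton is essentially the paper's own proof: decompose the slab into one-dimensional fibers transverse to the hyperplane $\{\theta_i^{*\top}x=0\}$, discard the thin bands of fiber directions (each of measure $O(\epsilon/\Delta)\,|S^{d-3}|$, using unit norms and Assumption~\ref{assump:neuronskip}) on which some other neuron's kink crosses the strip, lower bound the loss on every surviving fiber by $\epsilon^2/8$ via the affine-versus-ReLU $L^1$ gap (the paper's Lemma~\ref{lem:linearbound}), integrate, and rearrange for $\Delta$; the separation assumption then upgrades the matching to a bijection exactly as you say. Your sign-rigidity/masking narrative is a correct observation but is not needed for the bound: since you delete \emph{all} fibers crossed by any other kink (estimated or true), each surviving fiber carries only a single kink $g(\theta_i^{*\top}x)$ plus a remainder, and no bookkeeping of kink signs enters the final count.

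There is, however, one genuine gap. Your claim that $\phi_w(u)=(f_\Theta-f_{\Theta^*})(x_w(u))$ is piecewise \emph{linear} in $u$ along the spherical fiber $x_w(u)=u\theta_i^*+\sqrt{1-u^2}\,w$ is false: for any neuron $\theta_j$ other than $\theta_i^*$ itself, $\theta_j^\top x_w(u)=u(\theta_j^\top\theta_i^*)+\sqrt{1-u^2}(\theta_j^\top w)$ is not affine in $u$, so the restriction of each ReLU to a curved fiber is only piecewise smooth, and Lemma~\ref{lem:linearbound} does not apply as stated. This is precisely why the paper first replaces the spherical slab by a straight cylinder $X_j'$ (Lemma~\ref{lem:xapprox}), paying a Lipschitz approximation cost of $6kd\epsilon^3|S^{d-2}|$ --- which is exactly the third term in the denominator of $h(\eta,\epsilon)$. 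In your exact-coarea framing the analogous step is a per-fiber linearization of the non-crossing neurons' contributions (error $O(k\epsilon^2)$ pointwise, hence $O(k\epsilon^3)$ per fiber and $O(k\epsilon^3|S^{d-2}|)$ in total), but you never perform it; worse, you attribute the $6kd\epsilon^3|S^{d-2}|$ term to the bands of ``interfering true neurons,'' whose removal in fact contributes (together with the estimated neurons' bands) only to the $k\epsilon^3|S^{d-3}|/(2\Delta)$ numerator term. As written, your error accounting therefore cannot reproduce $h(\eta,\epsilon)$: the $|S^{d-2}|$-proportional correction has been spent on the wrong item, and the curvature error it must absorb is unaddressed. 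The proof is repaired by inserting the cylinder (or fiber-linearization) approximation and recomputing the budget, after which your argument coincides with the paper's.
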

We provide a detailed proof with illustrations in Appendix \ref{app:pf_prop}.
Intuitively, $X_i$ contains those $x\in\mathcal{X}$ close to the boundary where the neuron $\theta_i^*$ of $f_{\Theta^*}$ is nonlinear (i.e., $\theta_i^{*\top}x=0$). Then, this proposition claims that if the loss on certain $X_i$ is small, then the corresponding neuron $\theta_i^*$ in the ground-truth neural network $f_{\Theta^*}$ can be identified up to a sign flip by a corresponding neuron $\theta_{\sigma(i)}$ in the approximate neural network $f_{\Theta}$, i.e., $\theta_{\sigma(i)}\approx\theta_i^*$ or $\theta_{\sigma(i)}\approx-\theta_i^*$. Moreover, if the loss on all $X_i$'s with $i\in[k]$ are small, then $\Theta$ is close to $\Theta^*$ via the mapping $\sigma$ up to sign flips.

Our main result combines Proposition~\ref{prop:key} with a standard generalization error bound for ReLU neural networks to obtain the following parameter estimation error bounds.
\begin{theorem}
\label{thm:key}
Suppose the distribution $p$ satisfies
\begin{align*}
\frac{1}{|S^{d-1}|}\int_{\mathcal{X}}|f_\Theta(x)-f_{\Theta^*}(x)|dx
\le\mathbb{E}_{p}\left[|f_\Theta(x)-f_{\Theta^*}(x)|\right].
\end{align*}
Then, there exists a bijection $\sigma:[k]\to[k]$ such that
\begin{align*}
\min\{\|\hat\theta_{\sigma(i)}-\theta_i^*\|_2, \|\hat\theta_{\sigma(i)}+\theta_i^*\|_2\}\le727\pi^{-\frac{1}{4}}kd^{\frac{1}{4}}(2\zeta)^{\frac{1}{4}}
\end{align*}
holds for all $i\in[k]$ with at least a probability $1-\delta$,
where $\zeta=\mathit{\tilde\Theta}(\sqrt{k^5d/n})$.
\end{theorem}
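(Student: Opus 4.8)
The plan is to chain a standard generalization bound for the empirical risk minimizer $\hat\Theta$ with Proposition~\ref{prop:key}, after recasting the $L^2$ error as the $L^1$-on-the-sphere quantity that the proposition consumes. First I would control the population loss of $\hat\Theta$. Writing $\ell_\Theta(x,y)=(f_\Theta(x)-y)^2$, one has $\mathbb{E}_p[\ell_\Theta]=L_{S,p}(\Theta)+\sigma^2$, and since $\hat\Theta$ minimizes $\hat L_S(\cdot;Z)$ while $\Theta^*\in\varTheta$ is feasible, the usual excess-risk decomposition bounds $L_{S,p}(\hat\Theta)$ by twice the uniform deviation $\sup_{\Theta\in\varTheta}|\hat L_S(\Theta;Z)-\mathbb{E}_p[\ell_\Theta]|$ plus the noise term $|\hat L_S(\Theta^*;Z)-\sigma^2|$. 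Under Assumption~\ref{assump:neuronnorm} the outputs satisfy $|f_\Theta(x)|\le\sum_{i}\|\theta_i\|_2=k$ on $\mathcal{X}=S^{d-1}$, so $\ell_\Theta$ is bounded and Lipschitz with constants polynomial in $k$; a standard Rademacher/covering bound for the one-layer ReLU class together with concentration of the $\sigma$-subgaussian noise then yields $L_{S,p}(\hat\Theta)\le\zeta$ with $\zeta=\tilde\Theta(\sqrt{k^5d/n})$, with probability at least $1-\delta$. I would invoke this rate as a black box.

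Next I would translate this guarantee into the slab loss $L_{X_i(\epsilon)}$. By Jensen's inequality (equivalently Cauchy--Schwarz), $\mathbb{E}_p[|f_{\hat\Theta}(x)-f_{\Theta^*}(x)|]\le\sqrt{L_{S,p}(\hat\Theta)}\le\sqrt{\zeta}$. The distributional hypothesis of the theorem lifts this to the uniform measure on the sphere, $|S^{d-1}|^{-1}\int_{\mathcal{X}}|f_{\hat\Theta}-f_{\Theta^*}|\,dx\le\sqrt{\zeta}$, and since $X_i(\epsilon)\subseteq\mathcal{X}$ the slab loss is dominated: $L_{X_i(\epsilon)}(\hat\Theta)\le\int_{\mathcal{X}}|f_{\hat\Theta}-f_{\Theta^*}|\,dx\le|S^{d-1}|\sqrt{\zeta}=:\eta$, uniformly over $i\in[k]$.

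Finally I would apply Proposition~\ref{prop:key} with this $\eta$ and optimize the free parameter $\epsilon$. Whenever the leading term $\epsilon^2|S^{d-2}|/8$ dominates the denominator of $h(\eta,\epsilon)$, one has $h(\eta,\epsilon)\asymp k\epsilon\,|S^{d-3}|/|S^{d-2}|$, which is increasing in $\epsilon$; I would therefore take $\epsilon$ as small as positivity of the denominator allows, i.e.\ balance $\epsilon^2|S^{d-2}|/8\asymp\eta$, giving $\epsilon\asymp(\eta/|S^{d-2}|)^{1/2}\asymp\zeta^{1/4}$ up to sphere-area ratios. Substituting and estimating the Gamma-function ratios $|S^{d-1}|/|S^{d-2}|$ and $|S^{d-3}|/|S^{d-2}|$ via Gautschi's inequality --- these contribute the $d^{1/4}$ power and the $\pi^{-1/4}$ factor --- collapses the bound to $727\pi^{-1/4}kd^{1/4}(2\zeta)^{1/4}$, holding simultaneously for all $i\in[k]$ on the same high-probability event.

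The step I expect to be the main obstacle is this last optimization, specifically verifying that an admissible $\epsilon$ exists. The denominator of $h$ must be kept positive and bounded below by a constant fraction of its leading term, which forces $\epsilon$ to be large enough that $\epsilon^2|S^{d-2}|/8\gg\eta$ yet small enough that the curvature factor $(1-d\epsilon^2/2)$ stays bounded away from zero (needing $\epsilon\ll d^{-1/2}$) and the cubic correction $6kd\epsilon^3|S^{d-2}|$ remains negligible (needing $\epsilon\ll1/(kd)$). Checking that this window is nonempty imposes an implicit lower bound on $n$ so that $\eta$ is small, and pinning down the explicit constant $727$ together with the exact $d^{1/4}$ exponent requires non-asymptotic, two-sided control of the three sphere-area ratios rather than only their leading-order asymptotics.
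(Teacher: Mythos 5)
Your proposal is correct and follows essentially the same route as the paper's proof: a covering-number/subgaussian uniform-convergence bound for the ERM giving $L_{S,p}(\hat\Theta)\le 2\zeta$, Cauchy--Schwarz plus the distributional hypothesis and $X_i(\epsilon)\subseteq\mathcal{X}$ to bound the slab loss by $|S^{d-1}|\sqrt{2\zeta}$, and then Proposition~\ref{prop:key} with $\epsilon\asymp(\tilde\eta/\sqrt{d})^{1/2}$ and non-asymptotic sphere-area-ratio bounds to extract $727\pi^{-1/4}kd^{1/4}(2\zeta)^{1/4}$. The obstacles you flag at the end (positivity of the denominator, $\epsilon\ll d^{-1/2}$ and $\epsilon\ll 1/(kd)$, hence a smallness condition on $\tilde\eta$, i.e.\ a lower bound on $n$) are exactly the conditions the paper imposes in its Lemma~\ref{lem:relumain}, with the one additional check you omit being $\tilde\alpha\le\alpha_0/2$ so that Proposition~\ref{prop:key}'s bijection argument applies under Assumption~\ref{assump:neuronskip}.
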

We give a proof and the expression of $\zeta$ in Appendix~\ref{sec:thm:key:proof}. 

One potential limit of our proof strategy is that we may not correctly identify the sign of the ground truth neurons---i.e., our guarantee has the form $\hat\theta_{\sigma(i)}\approx\pm\theta^*_{i}$. However, we show in the next section that this caveat does not affect our bandit algorithm and analysis. Particularly, we show the difference between the estimated neural network $f_{\hat\Theta}$ and the true model $f_{\Theta^*}$ can be captured by a linear structure under sign misspecification. Thus, we can still run linear bandit algorithm to learn $f_{\theta^*}$ in an online manner.

\section{Algorithms for ReLU Bandits}
\label{sec:banditalgo}

Now, we describe our bandit algorithm and provide corresponding regret analysis. We begin with a simple case where we know the gap between the optimal action $x^*$ and the nearest neuron, and then provide a solution when this knowledge of gap is not assumed.

\subsection{Algorithm Design}

We first provide intuition for our design choices. The challenge of running a ReLU bandit algorithm is the nonlinearity of the ReLU neural network, which is due to the indicator function in the ReLU activations---i.e., in
\begin{align*}
\theta_i^\top x\cdot\mathbbm{1}(\theta_i^\top x\ge0),
\end{align*}
the first occurrence of $\theta_i$ is the linear contribution and the second occurrence is the contribution via the indicator function. The key of our design to tackle the nonlinearity is to first learn the indicator contribution, and then use a typical linear bandit algorithm to keep updating the model given the indicator function fixed. Particularly, 
once we have a sufficiently good estimate $\tilde\theta_i\approx\theta_i^*$, then our estimate of the indicator is exact:
\begin{align}
\label{eqn:indicatorequality}
\mathbbm{1}(\tilde\theta_i^\top x^*\ge0)=\mathbbm{1}(\theta_i^{*\top}x^*\ge0),
\quad \forall i\in[k].
\end{align}
Next, we can fix the value of the indicator function using $\tilde\theta_i$ and focus on learning the linear part. That is, we run a linear bandit algorithm with the value of $\theta_i = \tilde\theta_i$ in the indicator functions, but keep learning the value of $\theta_i$ in the linear part. In more detail, if \eqref{eqn:indicatorequality} holds, then we have
\begin{align}\label{def:rwd_bandit_eq}
\mathbb{E}[y]=f_{\Theta^*}(x)
&=\sum_{i\in[k]}(\mathbbm{1}(\tilde{\theta}_i^\top x\ge 0)x)^\top \theta_i^* \nonumber \\
&=
\underbrace{\begin{bmatrix}
\mathbbm{1}(\tilde\theta_1^\top x\ge 0)x \\
\mathbbm{1}(\tilde\theta_2^\top x\ge 0)x \\
\vdots \\
\mathbbm{1}(\tilde\theta_k^\top x\ge 0)x
\end{bmatrix}^\top}_{x^\dagger}
\underbrace{\begin{bmatrix}
\theta_1^* \\
\theta_2^* \\
\vdots \\
\theta_k^*
\end{bmatrix}}_{\theta^\dagger}.
\end{align}
Equivalently, we can run a linear bandit to learn $f_{\Theta^*}(x)$, however, with action being the term $x^\dagger\in\mathbb{R}^{dk}$ in \eqref{def:rwd_bandit_eq}, and parameter being the term $\theta^\dagger\in\mathbb{R}^{dk}$. The action $x^\dagger$ is a function of the original action $x$ and the estimated parameter $\tilde{\Theta}$, and the parameter to update $\theta^\dagger$ is a vectorization of the parameter $\Theta$ of interest. 

\textbf{Challenges.} Though the above two-stage design looks intuitive, there are still two challenges associated with \eqref{eqn:indicatorequality} and \eqref{def:rwd_bandit_eq}. On one hand, even if we have a relatively accurate estimate $\tilde\theta_i$ of $\theta_i^*$, \eqref{eqn:indicatorequality} might not hold for some action $x$ close to both of the estimate $\tilde\theta_i$ and the neuron $\theta_i^*$. On the other hand, our theoretical result in \S\ref{sec:relunetwork} only provides a guarantee on $\tilde\theta_i$ up to a sign flip, and hence still \eqref{eqn:indicatorequality} might not hold; thus, we also need to design our algorithm capturing this bias.

As suggested above, note that for any $x$ such that $\theta_i^{*\top}x\approx0$, even if $\tilde\theta_i\approx\theta_i^*$, it may still be the case that \eqref{eqn:indicatorequality} fails to hold. As a consequence, \eqref{def:rwd_bandit_eq} does not hold, and $f_{\Theta^*}(x)$ is not linear in $x^\dagger$ for $x$ close to any of $\theta_i^*$'s. In other words, the linear bandit is misspecified in some action region, so the algorithm may not converge if the optimal action lies in such regions. Thus, our regret bounds depend on the gap between the optimal action $x^*=\argmax_{x\in\mathcal{X}}f_{\Theta^*}(x)$ and the nearest hyperplane that is perpendicular to one of the neurons in the ground-truth neural network.
\begin{definition}
A ReLU neural network with parameters $\Theta^*$ has a $\nu_*$-gap for $\nu_*\in\mathbb{R}_{\ge0}$ if $|\theta_i^{*\top}x^*|\ge\nu_*$ for all $i\in[k]$.
\end{definition}
The following result ensures that a nontrivial gap $\nu_*>0$ always exists for our ReLU structure.
\begin{proposition}
\label{prop:optimalactiongap}
$\min_{i\in[k]}|\theta_i^{*\top}x^*|>0$ holds for the optimal action $x^*$.
\end{proposition}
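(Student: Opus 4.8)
The plan is to argue by contradiction via a first-order optimality condition on the sphere. Since $\mathcal{X}=S^{d-1}$ is compact and $f_{\Theta^*}$ is continuous, the maximizer $x^*$ exists. Suppose, toward a contradiction, that some neuron sits exactly on its kink at the optimum, i.e.\ the boundary set $B=\{j\in[k]:\theta_j^{*\top}x^*=0\}$ is nonempty. I would make the optimality condition precise by taking, for any tangent vector $u\perp x^*$, the great-circle-type curve $\gamma(t)=(x^*+tu)/\|x^*+tu\|$, which satisfies $\gamma(t)=x^*+tu+O(t^2)$ because $u\perp x^*$ implies $\|x^*+tu\|=\sqrt{1+t^2\|u\|_2^2}=1+O(t^2)$. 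As $x^*$ is a maximizer, the right one-sided derivative $D_u\coloneqq\lim_{t\downarrow0}(f_{\Theta^*}(\gamma(t))-f_{\Theta^*}(x^*))/t$ must satisfy $D_u\le0$ for every such $u$.

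The key computation is $D_u$ itself. Split the neurons into the active set $S=\{i:\theta_i^{*\top}x^*>0\}$, the boundary set $B$, and the strictly inactive set. Only $S$ and $B$ contribute to first order: active neurons are locally linear and contribute $\theta_i^{*\top}u$, strictly inactive neurons stay switched off for small $t$ and contribute nothing, and each boundary neuron contributes $\max(\theta_j^{*\top}u,0)$ since $g(t\,\theta_j^{*\top}u)=t\max(\theta_j^{*\top}u,0)$ for $t>0$. Using Lipschitz continuity of $f_{\Theta^*}$ to discard the $O(t^2)$ correction in $\gamma$, this yields $D_u=\sum_{i\in S}\theta_i^{*\top}u+\sum_{j\in B}\max(\theta_j^{*\top}u,0)$, and optimality forces $D_u\le0$ for every tangent $u$.

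The decisive step is to apply this to both $u$ and $-u$ (both are tangent at $x^*$) and add the two inequalities: the linear terms from $S$ cancel, leaving $\sum_{j\in B}|\theta_j^{*\top}u|=D_u+D_{-u}\le0$. Since each summand is nonnegative, $\theta_j^{*\top}u=0$ for every $j\in B$ and every $u\perp x^*$, which forces $\theta_j^*\in\mathrm{span}(x^*)$. Combined with $\|\theta_j^*\|_2=1=\|x^*\|_2$ from Assumption~\ref{assump:neuronnorm}, this gives $\theta_j^*=\pm x^*$ and hence $|\theta_j^{*\top}x^*|=1\neq0$, contradicting $j\in B$. Therefore $B=\emptyset$, and $\min_{i\in[k]}|\theta_i^{*\top}x^*|>0$, as claimed.

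I expect the main technical care to be in the first step: because the ReLU is nondifferentiable exactly at the boundary neurons, I must work with one-sided rather than two-sided directional derivatives and verify that the spherical constraint only introduces second-order corrections, so that the ambient tangent-space computation of $D_u$ is legitimate. Once that is in place, the symmetrization argument is elementary. I would also emphasize that the cancellation of the active-set linear terms is precisely what makes the argument robust to whether $S$ is empty, so no case analysis on the sign pattern of $\{\theta_i^{*\top}x^*\}$ is needed.
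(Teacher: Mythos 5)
Your proof is correct, and it takes a genuinely different route from the paper's. The paper argues by explicit geometric perturbation in two stages: it first shows that the sum of the active neurons $\bar\theta_A^*=\sum_{i:\theta_i^{*\top}x^*>0}\theta_i^*$ must be aligned with $x^*$ (otherwise one can construct a strictly better action by tilting $x^*$ toward $\bar\theta_A^*$), and then uses this alignment to show $B=\varnothing$: if $j\in B$, moving $x^*$ slightly toward $\theta_j^*$ gains a first-order amount $\theta_j^{*\top}\Delta$ from the kink while losing only a second-order amount $\tfrac12\|\bar\theta_A^*\|\|\Delta\|^2$ from the aligned active part, with explicit quantitative conditions on $\|\Delta\|$. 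Your argument replaces both steps with a single variational one: the one-sided tangential derivative $D_u=\sum_{i\in S}\theta_i^{*\top}u+\sum_{j\in B}\max(\theta_j^{*\top}u,0)$ must be nonpositive for every tangent $u$, and symmetrizing over $u$ and $-u$ cancels the smooth active-set contribution exactly, leaving $\sum_{j\in B}|\theta_j^{*\top}u|\le0$ and hence $\theta_j^*\in\mathrm{span}(x^*)$, contradicting $\theta_j^{*\top}x^*=0$ and $\|\theta_j^*\|_2=1$. What your route buys: no alignment lemma, no case analysis on whether $S$ is empty, and no quantitative bookkeeping on perturbation sizes --- only signs of one-sided derivatives matter, which is cleaner. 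What the paper's route buys: it is constructive (it exhibits a strictly improving action) and it establishes the structural fact $x^*=\bar\theta_A^*/\|\bar\theta_A^*\|_2$ along the way, which is useful intuition about where the optimum sits. Note that both proofs rely in the same essential way on the second-layer weights being $+1$: in your symmetrization, this is what makes each boundary neuron's contribution $\max(z,0)+\max(-z,0)=|z|$ nonnegative, and the argument would break for weights in $\{\pm1\}$, consistent with the paper's remark that its own proof exploits this structure.
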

We give a proof in Appendix~\ref{sec:prop:optimalactiongap:proof}. Note that our proof strategy relies on the ReLU structure where the weights of the second layer equal 1. In other words, any ReLU neural network structure as in \eqref{def:rwd_bandit} that satisfies our assumptions has a positive gap $\nu_* = \min_{i\in[k]}|\theta_i^{*\top}x^*|>0$. 

Given this gap $\nu_*$, as long as our estimate $\tilde\theta_i$ has an estimation error of $\theta_i^*$ smaller than $\nu_*/2$, i.e., $\|\tilde\theta_i - \theta_i^*\|\le \nu_*/2$, our bandit algorithm will be able to find the optimal action $x^*$ in the action space $\mathcal{X}(\tilde\Theta, \nu_*/2)$, where
\begin{align}\label{def:calX_nu}
\mathcal{X}(\Theta, \nu) &\coloneqq \{x\in\mathcal{X}\mid|\theta_i^{\top}x|\ge\nu, \forall i\in[k]\}.
\end{align}

Intuitively, the above claim holds for two reasons: (i) $x^*\in\mathcal{X}(\tilde\Theta, \nu_*/2)$, and (ii) the bandit model $f_{\Theta^*}(x)$ is linear in $x^\dagger$ for any $x\in\mathcal{X}(\tilde\Theta, \nu_*/2)$ (recall $x^\dagger$ is a function of $x$). 
For (i), it suffices to show that $\mathcal{X}(\Theta^*, \nu_*)\subseteq\mathcal{X}(\tilde\Theta, \nu_*/2)$, as our ReLU neural network in \eqref{def:relu} has a positive $\nu_*$-gap and hence $x^*\in\mathcal{X}(\Theta^*, \nu_*)$. To this end, for any $x\in\mathcal{X}(\Theta^*, \nu_*)$, if $\theta_i^{*\top}x>0$, we have
\begin{align}
\label{eqn:frozentheta}
\tilde\theta_i^\top x
&\ge\theta_i^{*\top}x-|\tilde\theta_i^\top x-\theta_i^{*\top}x| \nonumber\\
&\ge\theta_i^{*\top}x -\|\tilde\theta_i-\theta_i^*\|_2 \nonumber \\
&\ge\nu_*-\nu_*/2=\nu_*/2>0. 
\end{align}
Similarly, if $\theta_i^{*\top}x<0$, we have $\tilde\theta_i^\top x\le \nu_*/2$.
Next, to show (ii), it suffices to show that for any $x\in\mathcal{X}(\tilde\Theta, \nu_*/2)$, we have $\mathbbm{1}(\theta_i^{*\top} x\ge 0)=\mathbbm{1}(\tilde\theta_i^\top x\ge 0)$. Following a similar argument as \eqref{eqn:frozentheta}, we can show that $\theta_i^{*\top}x^*\ge0$ if $\tilde\theta_i^\top x\ge0$ (and similarly for $\le$). Thus, our plug-in indicator function is consistent with the true indicator function. 

The other challenge is the sign misidentification from our Theorem~\ref{thm:key}. Specifically, $\tilde\Theta$ is close to the true parameters $\Theta^*$ only up to signs. In the worst case, the values of the corresponding indicators $\mathbbm{1}(\tilde\theta_i^\top x\ge0)$ may differ from the true values $\mathbbm{1}(\theta_i^{*\top}x\ge0)$ when $\tilde\theta_i$ is close to $-\theta_i^*$ instead of $\theta_i^*$. In other words, the function $f_{\Theta^*}(x)$ will still be nonlinear of $x^\dagger$ and \eqref{def:rwd_bandit_eq} may no longer hold, leading to misspecification, when $\|\tilde\theta_i+\theta_i^*\|_2\le\nu_*/2$ instead of $\|\tilde\theta_i-\theta_i^*\|_2\le\nu_*/2$, even if we reduce our search region to $\mathcal{X}(\tilde\Theta, \nu_*/2)$

In order to correct this misspecification bias, we propose to add additional $k$ delicately designed linear components to \eqref{def:rwd_bandit_eq}. 
We show that the misspecification when $\|\tilde\theta_i+\theta_i^*\|_2\le\nu_*/2$ can be captured by a linear structure of $k$ additional transformed features of $x$, enabling the linear bandit algorithm to function again. In more detail, we can write
\begin{multline*}
f_{\Theta^*}(x)
=\sum_{i\in[k]}(\mathbbm{1}(\tilde{\theta}_i^\top x\ge 0)x)^\top \theta_i^*
\\
+ \sum_{i\in[k]} ((\frac{1}{2}-\mathbbm{1}(\tilde{\theta}_i^\top x\ge 0)x)^\top \\
\cdot \left(\frac{\mathbbm{1}(\theta_i^{*\top}x\ge 0)-\mathbbm{1}(\tilde{\theta}_i^\top x\ge 0)}{\frac{1}{2}-\mathbbm{1}(\tilde{\theta}_i^\top x\ge 0)}\theta_i^*\right).
\end{multline*}
Note that compared to \eqref{def:rwd_bandit_eq}, we have an additional second term that captures the misspecification; this term equals 0 for any $x\in\mathcal{X}(\tilde\Theta, \nu_*/2)$ when $\|\tilde\theta_i - \theta_i^*\|\le \nu_*/2$, as we have shown in \eqref{eqn:frozentheta}.
Similarly, when $\|\tilde\theta_i + \theta_i^*\|\le \nu_*/2$, we can show that $\mathbbm{1}(\tilde\theta_i^\top x\ge0)=1-\mathbbm{1}(\theta_i^{*\top} x\ge0)$---i.e., if $\theta_i^{*\top}x\gtreqless0$, we have $\tilde\theta_i^\top x\lessgtr0$ for any $x\in\mathcal{X}(\tilde\Theta, \nu_*/2)$. In other words, we have
\begin{align*}
\frac{\mathbbm{1}(\theta_i^{*\top}x\ge 0)-\mathbbm{1}(\tilde{\theta}_i^\top x\ge 0)}{\frac{1}{2}-\mathbbm{1}(\tilde{\theta}_i^\top x\ge 0)}
=\left\{
\begin{matrix}
0, & \text{if $\|\tilde\theta_i-\theta_i^*\|_2\le\frac{\nu_*}{2}$}\\
2, & \text{if $\|\tilde\theta_i+\theta_i^*\|_2\le\frac{\nu_*}{2}$}
\end{matrix}.
\right.
\end{align*}

Therefore, the true reward function $f_{\Theta^*}(x)$ in \eqref{def:rwd_bandit} is equivalent to
\begin{align}\label{eq:model_sign}
f_{\theta^\ddagger}(x^\ddagger)\coloneqq x^\ddagger(x, \tilde\Theta)^\top \theta^\ddagger(\Theta^*, \tilde\Theta),
\end{align}
where $x^\ddagger: \mathcal{X}\times\mathbb{R}^{k\times d}\to\mathbb{R}^{2kd}$ and $\theta^\ddagger:\mathbb{R}^{k\times d}\times\mathbb{R}^{k\times d}\to\mathbb{R}^{2kd}$ are two mappings with
\begin{align}
\label{def:xtheta_eq_ucb_x}
x^\ddagger(x, \tilde\Theta) &= 
\begin{bmatrix}
\mathbbm{1}(\tilde\theta_1^\top x\ge 0)x \\
\mathbbm{1}(\tilde\theta_2^\top x\ge 0)x \\
\vdots \\
\mathbbm{1}(\tilde\theta_k^\top x\ge 0)x \\
(\frac{1}{2}-\mathbbm{1}(\tilde{\theta}_1^\top x\ge 0))x \\
(\frac{1}{2}-\mathbbm{1}(\tilde{\theta}_2^\top x\ge 0))x \\
\vdots \\
(\frac{1}{2}-\mathbbm{1}(\tilde{\theta}_k^\top x\ge 0))x
\end{bmatrix},
\end{align}
and
\begin{align}
\label{def:xtheta_eq_ucb_tht}
\theta^\ddagger(\Theta^*, \tilde\Theta) &= 
\begin{bmatrix}
\theta_1^* \\
\theta_2^* \\
\vdots \\
\theta_k^*\\
2\theta_1^*\mathbbm{1}(\|\tilde\theta_1+\theta_1^*\|_2\le\frac{\nu_*}{2}) \\
2\theta_2^*\mathbbm{1}(\|\tilde\theta_2+\theta_2^*\|_2\le\frac{\nu_*}{2}) \\
\vdots \\
2\theta_k^*\mathbbm{1}(\|\tilde\theta_k+\theta_k^*\|_2\le\frac{\nu_*}{2})
\end{bmatrix}. 
\end{align}
In the following, we will use $x^\ddagger$ and $\theta^\ddagger$ to denote the two vectors in \eqref{def:xtheta_eq_ucb_x} and \eqref{def:xtheta_eq_ucb_tht} for simplicity whenever no ambiguity is raised.
The reward function in \eqref{eq:model_sign} has additional $k$ linear components compared to \eqref{def:rwd_bandit_eq}; $x^\ddagger$ builds upon $x^\dagger$ and contains additional $k$ features that captures the misspecification due to sign flip. Now, we will be able to run a linear bandit algorithm with $2kd$ features to learn $\theta^\ddagger$ in an online manner.

\subsection{OFU-ReLU Algorithm}\label{sec:exp_oful_alg}

For now, we assume $\nu_*$ is known throughout our design; we describe how to remove this assumption in \S\ref{sec:doublingtrick}. 

Our algorithm, which we name OFU-ReLU\footnote{``OFU" standards for optimism in the face of uncertainty \citep{abbasi2011improved}}, has two phases:
\begin{itemize}
\item \textbf{Exploration.} 
Randomly sample exploratory actions $x_t\sim p$ for $t_0$ time steps until our estimate $\tilde\Theta_{t_0}$ (i.e., $\tilde\Theta$ estimated using the first $t_0$ samples) satisfies
\begin{align*}
\min\{\|\tilde\theta_{t_0,i}-\theta_i^*\|_2,\|\tilde\theta_{t_0,i}+\theta_i^*\|_2\}\le\nu_*/2
\end{align*}
with high probability. 
\item \textbf{OFUL.} 
Run the OFUL algorithm~\citep{abbasi2011improved} to learn the true reward function $f_{\theta^\ddagger}(x^\ddagger)$ in \eqref{eq:model_sign}, which is linear in the parameter $\theta^\ddagger$ and features $x^\ddagger(x, \tilde\Theta_{t_0})$, over the region $x^\ddagger\in\mathcal{X}^\ddagger(\tilde\Theta_{t_0}, \nu^*/2)$, where
\begin{align}\label{def:calX_nu_ddagger}
\mathcal{X}^\ddagger(\Theta, \nu) \coloneqq \{x^\ddagger(x,\Theta)\mid x\in\mathcal{X}(\Theta,\nu)\}.
\end{align}
At each time period $t>t_0$, we follow OFUL, choosing arm $x_t^\ddagger = \argmax_{(x,\theta)\in\mathcal{X}^\ddagger(\tilde\Theta_{t_0},\nu^*/2)\times C_t(\lambda, Z_t)}x^{\top}\theta$ and observing reward $y_t$; the confidence ellipsoid $C_t(\lambda, Z_t)$ for the true parameter $\theta^\ddagger$ depends on a regularization hyperparameter $\lambda$ from OFUL, and can be computed using Theorem 2 in \cite{abbasi2011improved} with $S=\sqrt{5k}$ (note that $\|\theta^\ddagger\|\le \sqrt{5k}$) and all the data previously observed $Z_t = \{(x_\tau, y_\tau)\}_{\tau\in[t-1]}$.
\end{itemize}
We detail our algorithm in Algorithm~\ref{alg:eturelu}.
\begin{algorithm}[tb]
\begin{algorithmic}
\STATE \textbf{Input:} exploration length $t_0$, regularization parameter $\lambda$
\STATE Initialize $Z_0\gets\varnothing$
\FOR {$t\in[t_0]$}
\STATE Sample action $x_t\sim_{\text{i.i.d.}} p$
\STATE Take action $x_t$ and obtain reward $y_t$
\STATE $Z_t \leftarrow Z_{t-1} \cup \{(x_t, y_t)\}$
\ENDFOR
\STATE Compute $\tilde\Theta_{t_0}\gets\argmin_{\Theta} \hat{L}_S(\Theta;Z_{t_0})$
\FOR {$t \in (t_0+1,T]$}
\STATE Compute confidence ellipsoid $C_t(\lambda, Z_t)$ for $\theta^\ddagger$
\STATE $x_t^\ddagger \gets \argmax_{(x,\theta)\in\mathcal{X}^\ddagger(\tilde\Theta_{t_0},\nu^*/2)\times C_t(\lambda, Z_t)}x^{\top}\theta$
\STATE Play $x_t$ with $x^\ddagger(x_t, \tilde\Theta_{t_0}) = x_t^\ddagger$ and obtain reward $y_t$
\STATE $Z_t\gets Z_{t-1}\cup\{(x_t,y_t)\}$
\ENDFOR
\end{algorithmic}
\caption{OFU-ReLU}
\label{alg:eturelu}
\end{algorithm}
Given our design, we can obtain a regret bound scaling as $\tilde{O}(kd\sqrt{T})$; particularly, we control the parameter estimation error of $\tilde\Theta_{t_0}$ using Theorem \ref{thm:key} in \S\ref{sec:relunetwork}, and analyze the regret of OFUL stage using Theorem 3 in \cite{abbasi2011improved}. We provide a proof sketch below.
\begin{theorem}\label{thm:algo1_reg}
The cumulative regret of Algorithm~\ref{alg:eturelu} satisfies
\begin{align*}
R_T=\tilde{O}\left(k^{14}d^3(1/\nu_*^8\lor d^4)+kd\sqrt{T}\right).
\end{align*}
\end{theorem}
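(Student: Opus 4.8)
The plan is to split the cumulative regret at the phase boundary,
\[
R_T = \underbrace{\textstyle\sum_{t=1}^{t_0} r_t}_{R_{\mathrm{explore}}} + \underbrace{\textstyle\sum_{t=t_0+1}^{T} r_t}_{R_{\mathrm{OFUL}}},
\]
and bound the two terms separately. A free uniform bound on the per-step regret is available: since $\|\theta_i^*\|_2=1$ and $\|x\|_2=1$, the reward $f_{\Theta^*}(x)=\sum_{i\in[k]}g(\theta_i^{*\top}x)$ lies in $[0,k]$, so $r_t\le k$ at every round. Hence $R_{\mathrm{explore}}\le k\,t_0$, and the problem reduces to (i) fixing the smallest exploration length $t_0$ for which $\tilde\Theta_{t_0}$ provably reaches the linear regime with high probability, and (ii) showing the OFUL phase contributes only $\tilde O(kd\sqrt T)$.

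For (i) I would apply Theorem~\ref{thm:key} to the $t_0$ i.i.d.\ exploration samples; two requirements determine $t_0$. First, the estimation bound must be in force, which through Proposition~\ref{prop:key} forces the shell width $\epsilon$ to obey $\epsilon=O(1/(kd))$ so that the denominator of $h(\eta,\epsilon)$ stays positive (the terms $d\epsilon^2/2$ and $6kd\epsilon^3|S^{d-2}|$ must be dominated by the leading $\epsilon^2|S^{d-2}|/8$). Threading this ceiling through $\zeta=\tilde\Theta(\sqrt{k^5d/n})$ and the sphere-area ratio $|S^{d-3}|/|S^{d-2}|=\Theta(\sqrt d)$ produces a $\nu_*$-independent baseline $t_0=\tilde\Omega(k^{13}d^7)$. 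Second, the per-neuron error $727\pi^{-1/4}kd^{1/4}(2\zeta)^{1/4}$ must fall below $\nu_*/2$; solving $\sqrt{k^5d/t_0}\lesssim \nu_*^4/(k^4d)$ gives $t_0=\tilde\Omega(k^{13}d^3/\nu_*^8)$. Taking the larger of the two yields $t_0=\tilde O\!\big(k^{13}d^3(1/\nu_*^8\lor d^4)\big)$, so $R_{\mathrm{explore}}\le k\,t_0=\tilde O\!\big(k^{14}d^3(1/\nu_*^8\lor d^4)\big)$, which is precisely the first term of the claim. The (at most $\delta$) probability that exploration fails is handled by choosing $\delta$ inverse-polynomial in $T$, so its contribution $\le \delta\,kT$ is negligible.

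For (ii) I would condition on the success event from step (i), namely $\min\{\|\tilde\theta_{t_0,i}-\theta_i^*\|_2,\|\tilde\theta_{t_0,i}+\theta_i^*\|_2\}\le\nu_*/2$ for all $i\in[k]$, and first establish exact linearity of the transformed problem. The calculation~\eqref{eqn:frozentheta} together with the sign-correction construction~\eqref{eq:model_sign}--\eqref{def:xtheta_eq_ucb_tht} shows that on $\mathcal{X}(\tilde\Theta_{t_0},\nu_*/2)$ every plug-in indicator is frozen to its correct value (the extra $k$ features absorb the sign flips), so $\mathbb{E}[y]=x^{\ddagger\top}\theta^\ddagger$ holds with no misspecification. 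Moreover $x^*\in\mathcal{X}(\Theta^*,\nu_*)\subseteq\mathcal{X}(\tilde\Theta_{t_0},\nu_*/2)$, hence $x^\ddagger(x^*,\tilde\Theta_{t_0})\in\mathcal{X}^\ddagger(\tilde\Theta_{t_0},\nu_*/2)$ and $f_{\Theta^*}(x^*)$ remains the best achievable linear reward over the restricted set, so clipping the action set costs nothing relative to the true optimum. The OFUL phase is therefore a correctly specified $2kd$-dimensional linear bandit with $\|\theta^\ddagger\|_2\le\sqrt{5k}$ and bounded features, and Theorem~3 of~\citet{abbasi2011improved} directly gives $R_{\mathrm{OFUL}}=\tilde O(2kd\sqrt T)=\tilde O(kd\sqrt T)$. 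Adding the two phases and absorbing constants proves the theorem.

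The step I expect to be the main obstacle is (i): correctly propagating the dimension dependence out of Proposition~\ref{prop:key}. The constraints that make $h(\eta,\epsilon)$ simultaneously positive and small---the $\epsilon=O(1/(kd))$ ceiling, the domination of $\eta$ (the $L_1$ loss on the thin shell $X_i(\epsilon)$, supplied by the generalization bound underlying Theorem~\ref{thm:key}), and the $\Theta(\sqrt d)$ area ratio---must be balanced together, and it is exactly this interplay that converts a naive $d^3$ into the $d^4$ (equivalently $d^7$ inside $t_0$) appearing in the bound. By contrast, the reward boundedness, the no-misspecification argument, and the off-the-shelf OFUL regret are routine, so the care lies in verifying the high-probability success of exploration and in tracking the precise powers of $k$ and $d$ through the estimation bound.
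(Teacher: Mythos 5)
Your proposal is correct and follows essentially the same route as the paper's proof: a trivial $O(k)$ per-round bound over an exploration phase whose length $t_0=\tilde O(k^{13}d^3(1/\nu_*^8\lor d^4))$ is set by the two conditions from Theorem~\ref{thm:key} (the $\nu_*$-dependent accuracy requirement and the $\nu_*$-independent validity condition on $\tilde\eta$, exactly the paper's $t_1(\nu_*)$ and $t_2$), followed by Theorem~3 of \citet{abbasi2011improved} applied to the correctly specified $2kd$-dimensional linear bandit with $S=\sqrt{5k}$, and a $\delta=1/\sqrt{T}$ failure event contributing $O(k\sqrt{T})$. The only cosmetic differences are your slightly sharper per-round bound ($k$ versus the paper's $2k$) and your making explicit in the proof the no-misspecification and $x^*$-feasibility arguments that the paper establishes in Section~\ref{sec:banditalgo} and invokes implicitly.
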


\begin{proof}[Proof Sketch]
Suppose the exploration stage ends at time $t_0$. We have
\begin{align*}
\min\left\{\|\tilde\theta_i-\theta_i^*\|_2,\|\tilde\theta_i+\theta_i^*\|_2\right\}\le\nu_*/2, \quad \forall i\in[k]
\end{align*}
with probability at least $1-\delta/2$ by choosing $t_0$ large enough according to Theorem~\ref{thm:key}. In particular, it suffices to take $\delta = 1/\sqrt{T}$ and 
\begin{align}\label{eq:t0choice}
t_0 = \mathit{\tilde\Omega}(k^{13}d^3(1/\nu_*^8\lor d^4)).
\end{align}

Now we analyze the regret of our OFU-ReLU algorithm in three cases.
First, at each time $t$, the per-period regret can be bounded trivially by 
\begin{align*}
r_t
\le (\sum_{i\in[k]} \|\theta_i^*\|_2)(\|x^*\|_2+\|x_t\|_2) 
\le 2k.
\end{align*}
Therefore, the regret during the exploration phase is upper bounded by $\tilde{O}(k^{14}d^3(1/\nu_*^8\lor d^4))$.

In the second stage, we run OFUL to find the optimal policy for the linear function $f_{\theta^\ddagger}(x^\ddagger)$ given our forced-sample estimate $\tilde\Theta_{t_0}$. Applying the regret bound in Theorem 3 of \cite{abbasi2011improved} gives the regret bound in our second stage to be $\tilde{O}(kd\sqrt{T})$ with at least a probability of $1-\delta/2$.

Finally, with a small probability $\delta=1/\sqrt{T}$, we would have linear regret scaling as $2kT$; thus, the expected regret in this case is bounded by $2kT\delta =O(k\sqrt{T})$. Our claim then follows. 
\end{proof}
We provide a detailed proof in Appendix \ref{app:thm_bandit}. Theorem \ref{thm:algo1_reg} shows that our algorithm obtains a $\tilde{O}(kd\sqrt{T})$ regret guarantee as long as the time horizon $T$ is large enough.

\subsection{OFU-ReLU+ Algorithm}
\label{sec:doublingtrick}

Algorithm~\ref{alg:eturelu} requires the knowledge of the gap $\nu_*$, which is typically unknown.
We can remove this assumption based on a batching strategy; we provide a schematic representation of our algorithm in Figure \ref{fig:batch_ill}. Algorithm~\ref{alg:etudrelu} summarizes our algorithm OFU-ReLU+ based on this insight.

At a high level, we split the entire time horizon $T$ into $M$ increasing batches with a grid $0=T_0\le T_1\le \cdots \le T_M=T$. Each batch $i\in[M-1]$, i.e., $t\in(T_{i-1}, T_{i}]$, satisfies $a(T_i - T_{i-1})=T_{i+1} - T_i$---i.e., $T_i=(a^i-1)T_1$ for some constant $a>1$ and $T_1>1$. Note that 
\begin{align}\label{eq:valM}
M=\left\lceil\frac{\log(T/T_1+1)}{\log(a)}\right\rceil.
\end{align}
For each batch, we take a fixed guess of $\nu_*$; we reduce this guess geometrically from one batch to the next. Specifically, let $\nu_0$ be our initial guess of $\nu_*$ at $t=0$; then, the guess $\nu_i$ for batch $i$ is $\nu_i = \nu_0/b^i$ for some constant $b>1$. Our $\nu_i$ will become sufficiently small that $\nu_i\le\nu_*$ for batch $i > \log(\nu_0/\nu_*)/\log(b)$. Our regret analysis in \S\ref{sec:exp_oful_alg} can then be applied from that batch onwards. 

\begin{figure}[t]
  \centering
  \includegraphics[width=\columnwidth]{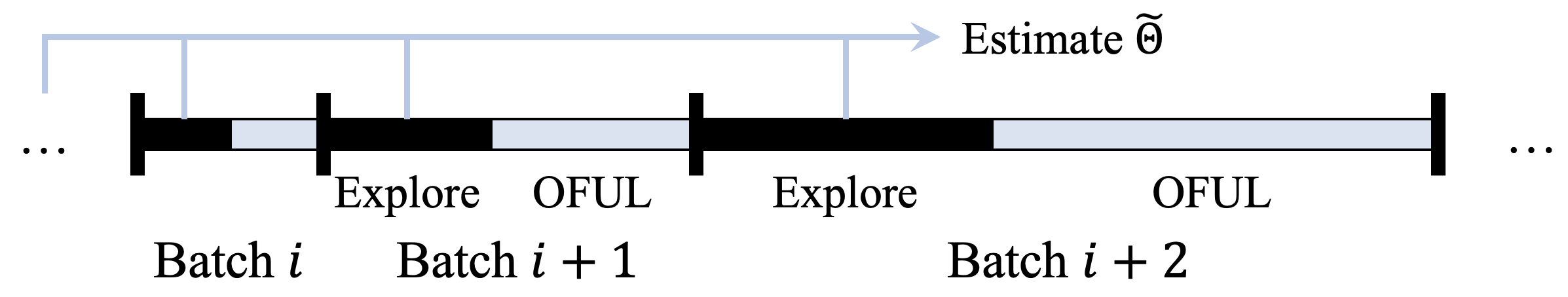}
\caption{Schematic representation of OFU-ReLU+.}
\label{fig:batch_ill}
\vskip -0.2in
\end{figure}

In detail, within each batch $i$, we have an exploration phase and a following OFUL phase as in \S\ref{sec:exp_oful_alg}. We initialize the batch with the exploration period as $t\in(T_{i-1}, T_{i-1}+t_{0, i}]$. Different from the conventional batching strategies \citep[see, e.g.,][]{golrezaei2019dynamic,luo2022contextual}, we estimate $\tilde\Theta$ and run OFUL using samples from all previous batches without discarding data from previous batches. 
Particularly, we estimate $\tilde\Theta$ based on the accumulative samples from all the previous exploration phases $t\in\cup_{j\in[i]}(T_{j-1}, T_{j-1}+t_{0, j}]$ so that the estimation error satisfies
\begin{align}
\label{eq:tildesample}
\min\{\|\tilde\theta_i-\theta_i^*\|_2, \|\tilde\theta_i+\theta_i^*\|_2\}\le\nu_i/2,
\quad \forall i\in[k].
\end{align}
Define a function $t_0$\footnote{Note that $t_0$ in \S\ref{sec:exp_oful_alg} is a function of $\nu_*$. Here we abuse the notation $t_0$ slightly.} of an arbitrary $\nu$ as
\begin{align}\label{eq:rt0_def}
t_0(\nu) = \mathit{\tilde\Theta}\left(k^{13}d^3(1/\nu^8\lor d^4)\right)
\end{align}
according to \eqref{eq:t0choice} (detailed expression in Appendix \ref{app:thm_bandit}).
Then, it suffices to have $t_{0, i} = t_0(\nu_i)-t_0(\nu_{i-1})$ according to \eqref{eq:t0choice}. Let our estimate be $\tilde\Theta_{t_0(\nu_i)}$ (i.e., $\tilde\Theta$ estimated using all $t_0(\nu_i)$ random samples from previous batches). 

In the rest of the batch $i$, we run OFUL; at each time step $t\in(T_{i-1}+t_{0, i}, T_{i}]$, we choose arm $x_t^\ddagger = \argmax_{(x,\theta)\in\mathcal{X}^\ddagger(\tilde\Theta_{t_0(\nu_i)},\nu_i/2)\times C_t(\lambda, Z_t)}x^{\top}\theta$ and observing reward $y_t$. Again, we compute the $C_t(\lambda, Z_t)$ based on \cite{abbasi2011improved} with $S=\sqrt{5k}$ and all the data observed $Z_t=\{(x_\tau, y_\tau)\}_{\tau\in[t-1]}$. The features are $x^\ddagger(x_\tau, \tilde\Theta_{t_0(\nu_i)})$ and the parameter to estimate is $\theta^\ddagger(\Theta^*, \tilde\Theta_{t_0(\nu_i)})$. Note that the features and the parameter are invariant of $\tilde\Theta_{t_0(\nu_i)}$ once $\nu_i\le\nu_*$; thus, our estimate becomes consistent and OFUL is valid from then on.

To bound the regret, we decompose the whole time horizon into three parts and analyze them respectively: 
\begin{enumerate}[(i).]
\item All batch $i$ satisfying $\nu_i>\nu_*$,
\item $t\in(T_{i-1}, T_{i-1}+t_{0,i}]$ for all batch $i$ with $\nu_i\le\nu_*$,
\item $t\in(T_{i-1}+t_{0,i}, T_i]$ for all batch $i$ with $\nu_i\le\nu_*$.
\end{enumerate}
The regret in (i) is independent of $T$ and is based on our choices of $a$, $b$, $T_1$ and $\nu_0$. Our analysis in \S\ref{sec:exp_oful_alg} can be applied similarly to analyze the exploration regret in (ii) and the regret (iii) from OFUL. 
However, for (ii), since $\nu_i$ decreases over time, the regret per batch grows over time. We show in the following theorem that this regret scales as a polynomial term of $T$ of which the order depends on the relative size between $a$ and $b$, and can be chosen to be asymptotically less than $O(\sqrt{T})$.

\begin{algorithm}[t]
\begin{algorithmic}
\STATE \textbf{Input:} regularization parameter $\lambda$, parameters $\nu_0, T_1, a, b$
\STATE Initialize $Z_0\gets\varnothing$, $E_0\gets\varnothing$
\FOR {$i\in[M]$}
\STATE $\nu_i \gets \nu_{i-1}/b$, $t_{0,i}\gets t_0(\nu_i)-t_0(\nu_{i-1})$
\FOR {$t\in(T_{i-1}, T_{i-1}+t_{0,i}]$}
\STATE Sample action $x_t\sim_{\text{i.i.d.}} p$
\STATE Take action $x_t$ and obtain reward $y_t$
\STATE $E_t \gets E_{t-1} \cup \{(x_t, y_t)\}$, $Z_t\gets Z_{t-1}\cup\{(x_t,y_t)\}$
\ENDFOR
\STATE Compute $\tilde\Theta_{t_0(\nu_i)}\gets\argmin_{\Theta}\hat{L}_S(\Theta;E_{T_i+t_{0,i}})$
\FOR {$t\in(T_{i-1}+t_{0,i}, T_i]$}
\STATE Compute confidence ellipsoid $C_t(\lambda, Z_t)$ for $\theta^\ddagger$
\STATE $x_t^\ddagger \gets \argmax_{(x,\theta)\in\mathcal{X}^\ddagger(\tilde\Theta_{t_0(\nu_i)},\nu_i/2)\times C_t(\lambda, Z_t)}x^{\top}\theta$
\STATE Play $x_t$ with $x^\ddagger(x_t, \tilde\Theta_{t_0(\nu_i)}) = x_t^\ddagger$ and obtain $y_t$
\STATE $Z_t\gets Z_{t-1}\cup\{(x_t,y_t)\}$
\ENDFOR
\STATE $T_{i+1} \gets (a^{i+1}-1)T_1$
\ENDFOR
\end{algorithmic}
\caption{OFU-ReLU+}
\label{alg:etudrelu}
\end{algorithm}

\begin{theorem}\label{thm:algo2_reg}
The cumulative regret of Algorithm~\ref{alg:etudrelu} has 
\begin{align*}
R_T =
\tilde{O}\left(k^{14}d^7+k^{14}d^3T^{8\frac{\log(b)}{\log(a)}}+kd\sqrt{T}\right).
\end{align*}
\end{theorem}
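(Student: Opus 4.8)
The plan is to follow the three-region decomposition sketched before the statement, but to track only the dependence on $k$, $d$, and $T$, folding the chosen constants $\nu_0,T_1,a,b$ and the fixed (unknown) gap $\nu_*$ into the hidden $\tilde O(\cdot)$ constant. First I would identify the batch index $\bar i:=\lceil\log(\nu_0/\nu_*)/\log b\rceil$ at which the guess first satisfies $\nu_i\le\nu_*$; since $\nu_i=\nu_0/b^i$, this $\bar i$ is independent of $T$. For every batch $i\ge\bar i$, the choice $t_{0,i}=t_0(\nu_i)-t_0(\nu_{i-1})$ in \eqref{eq:rt0_def} together with Theorem~\ref{thm:key} guarantees that the estimate $\tilde\Theta_{t_0(\nu_i)}$ built from the accumulated exploration samples satisfies $\min\{\|\tilde\theta_j-\theta_j^*\|_2,\|\tilde\theta_j+\theta_j^*\|_2\}\le\nu_i/2\le\nu_*/2$ for all $j\in[k]$. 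Consequently the reduction of \S\ref{sec:exp_oful_alg} applies verbatim on these batches: $x^*\in\mathcal{X}(\tilde\Theta_{t_0(\nu_i)},\nu_i/2)$ and the true reward coincides with the linear model $f_{\theta^\ddagger}(x^\ddagger)$ of \eqref{eq:model_sign} over this region, so OFUL is correctly specified with a $2kd$-dimensional feature. I would also note that the bound is meaningful only once $T$ is large enough that $M\ge\bar i$.

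Next I would bound the total exploration regret in one shot. The exploration budgets telescope, $\sum_{i\in[M]}t_{0,i}=t_0(\nu_M)-t_0(\nu_0)\le t_0(\nu_M)$, and every exploration step incurs per-period regret at most $2k$ via the crude bound $r_t\le(\sum_{i\in[k]}\|\theta_i^*\|_2)(\|x^*\|_2+\|x_t\|_2)\le2k$ already used in Theorem~\ref{thm:algo1_reg}. Hence the exploration regret is at most $2k\,t_0(\nu_M)=\tilde O(k^{14}d^3(1/\nu_M^8\lor d^4))$ by \eqref{eq:rt0_def}. The only real computation is to substitute $\nu_M=\nu_0/b^M$ with $M=\lceil\log(T/T_1+1)/\log a\rceil$ from \eqref{eq:valM}: this gives $b^M=\tilde\Theta((T/T_1)^{\log b/\log a})$ and therefore $1/\nu_M^8=\tilde\Theta(\nu_0^{-8}T^{8\log b/\log a})$. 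Splitting the maximum produces exactly the two terms $\tilde O(k^{14}d^3T^{8\log b/\log a})$ from the $1/\nu^8$ branch and $\tilde O(k^{14}d^7)$ from the $d^4$ branch. This is the step the discussion flags as the crux of region (ii), and it is where the exponent $8\log b/\log a$ is produced.

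For the OFUL regret I would split along $\bar i$. The finitely many misspecified batches $i<\bar i$ have total length $T_{\bar i}$, and again using the per-period bound $2k$ their contribution is at most $2k\,T_{\bar i}$, a quantity independent of $T$ that folds into the constant (this subsumes region (i) once its exploration part has been counted above). For the correctly-specified batches $i\ge\bar i$, Theorem~3 of \citet{abbasi2011improved} with $S=\sqrt{5k}$ and feature dimension $2kd$ gives per-batch regret $\tilde O(kd\sqrt{n_i})$ on an OFUL phase of length $n_i\le T_i-T_{i-1}=(a-1)a^{i-1}T_1$; the geometric sum $\sum_i\sqrt{n_i}\le\sqrt{(a-1)T_1}\,a^{M/2}/(\sqrt a-1)=\tilde O(\sqrt T)$ then yields a total of $\tilde O(kd\sqrt T)$, where pooling the data across batches (re-featurizing all past samples with the current accurate $\tilde\Theta_{t_0(\nu_i)}$) only enlarges the Gram matrix and hence keeps the per-batch bound valid. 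Finally, as in Theorem~\ref{thm:algo1_reg}, I would set the confidence level $\delta=1/\sqrt T$, union-bounded over the $M=\tilde O(1)$ batches, so that the failure event in which some $\tilde\Theta_{t_0(\nu_i)}$ violates its error guarantee adds at most $2kT\cdot\tilde O(\delta)=\tilde O(k\sqrt T)$ to the expected regret. Summing the three contributions gives the claimed bound.

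The hard part will not be any single inequality but the bookkeeping that simultaneously makes region (ii) yield a clean polynomial-in-$T$ exponent and keeps the data-pooling from corrupting the OFUL guarantee of region (iii). Concretely, an exploration sample $x_\tau\sim p$ can fall in the nonlinear sliver $\{x:|\tilde\theta_i^\top x|<\nu_i/2\}$, where the re-featurized linear model is only approximately correct, so I would need to argue that the resulting per-sample bias (of order $\nu_i$) is dominated and does not invalidate the confidence ellipsoid used in region (iii). Aligning the constants hidden inside $t_0(\cdot)$ and inside the geometric sums so that precisely the three stated terms emerge---and verifying that region (i) and the misspecified OFUL batches are genuinely $T$-independent---is what will require the most care.
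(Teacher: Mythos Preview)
Your proposal follows essentially the same route as the paper: the same three-region decomposition, the same telescoping of the exploration lengths to $t_0(\nu_M)$ (which, after substituting $\nu_M=\nu_0/b^M$ and $M$ from \eqref{eq:valM}, is exactly how the paper produces the $k^{14}d^3T^{8\log b/\log a}$ and $k^{14}d^7$ terms), the same appeal to Theorem~3 of \citet{abbasi2011improved} for region (iii), and the same union bound over the $M=O(\log T)$ batches with $\delta=1/\sqrt T$ to control the failure event. The only cosmetic difference is in region (iii): the paper treats the well-specified batches as a single OFUL run of length at most $T$, relying on the remark that the featurization $x^\ddagger(\cdot,\tilde\Theta_{t_0(\nu_i)})$ and the target $\theta^\ddagger$ become invariant once $\nu_i\le\nu_*$, whereas you sum per-batch OFUL bounds with a geometric series $\sum_i\sqrt{n_i}=\tilde O(\sqrt T)$; both yield $\tilde O(kd\sqrt T)$. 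The data-pooling/misspecification concern you flag at the end is not addressed in any greater detail in the paper than in your sketch.
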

\begin{proof}[Proof Sketch]
We bound the regret for the three cases above respectively. First, in case (i), we have $i \le \log(\nu_0/\nu_*)/\log(b)$. Recall that the per-period regret can be trivially bounded by $2k$. Thus, the regret in this case is upper bounded by 
\begin{align*}
2k (a^{\log(\nu_0/\nu_*)/\log(b)}-1) T_1 \le 2k(\nu_0/\nu_*)^{\frac{\log(a)}{\log(b)}}T_1.
\end{align*}

Similarly, regret in case (ii) can also be bounded by
\begin{align*}
2k\sum_{i=\lceil\frac{\log(\nu_0/\nu_*)}{\log(b)}\rceil}^{M-1} t_{0,i} =& \tilde{O}\left(k^{14}d^7+k^{14}d^3T^{8\frac{\log(b)}{\log(a)}}\right),
\end{align*}
where we use the definition of $t_{0,i}$, that of $t_0(\nu)$ in \eqref{eq:rt0_def}, and the value of $M$ in \eqref{eq:valM}.

Next, we calculate the regret of running OFUL in case (iii), which is upper bounded again by $\tilde{O}(kd\sqrt{T})$ following the proof strategy of Theorem 3 in \cite{abbasi2011improved}, similar to our proof for OFU-ReLU.

Finally, with a union bound, there's a small probability $M/\sqrt{T}$ that we will have a linear regret since the above analysis holds only with high probability. We can show the regret of this part scales as $\tilde{O}(k\sqrt{T})$. Combining all the above gives our final result.
\end{proof}

We provide a detailed proof in Appendix \ref{app:pf_ofureluplus}.
Compared with Theorem \ref{thm:algo1_reg}, here we gain an additional $T^{8\frac{\log(b)}{\log(a)}}$ dependence due to the increasing difficulty of learning the unknown gap $\nu_*$. 
Theorem \ref{thm:algo2_reg} implies that, as long as our choices of the multipliers $a$ and $b$ satisfy $8\log(b)/\log(a)\le 1/2$, we recover a $\tilde{O}(\sqrt{T})$ regret dependence---i.e., when the length of exploration period grows slower than the batch time horizon. for instance, taking $a=2$ and $b=2^{1/32}$, we obtain
\begin{align*}
R_T = \tilde{O}\left(
k^{14}d^7+k^{14}d^3T^{1/4}+kd\sqrt{T}\right).
\end{align*}
Finally, if the time horizon $T$ is at least a polynomial term of $k$ and $d$, then as before, we recover an $\tilde{O}(kd\sqrt{T})$ regret guarantee.

\section{Experiments}

We compare our algorithm OFU-ReLU with several benchmarks, including OFUL \citep{abbasi2011improved}, which assumes the true model is linear and introduces misspecification errors, and three different versions of NeuralUCB \citep{zhou2020neural}, i.e., NeuralUCB-F, NeuralUCB-T and NeuralUCB-TW. Particularly, NeuralUCB-F follows the setup in \S 7.1 of \cite{zhou2020neural} with $m=20$ neurons and two layers; NeuralUCB-T assumes the knowledge of the neural network structure of the true reward, i.e., $m=k$ neurons and one layer; and NeuralUCB-TW inherits the structure from NeuralUCB-T but expands the layer size into $m=2k$. 

We consider the true model of a ReLU structure as in \eqref{def:rwd_bandit}, with multiple settings presented in Figure~\ref{fig:simulation}. The parameter $\Theta^*$ is randomly sampled from the sphere $\|\theta_i\|=1$ for $i\in[k]$. The noise follows a normal distribution $N(0, 0.01)$. We randomly draw $1,000$ arms from the unit sphere in each round $t$ and choose an optimal arm from this arm set. Note that with a discretized arm set, our claim of a nontrivial gap $\nu_*$ always holds. For OFUL and OFU-ReLU, we use the theoretically suggested confidence ellipsoid for UCB. Since we do not know the gap $\nu_*$, we set the length of exploration phase for OFU-ReLU to be 20 for our method. We tune the hyperparameters $\lambda$ for all the methods. 

Figure \ref{fig:simulation} shows the performance of our bandit algorithm versus the other benchmarks with a 95\% confidence interval. We find our algorithm significantly outperforms all the other benchmarks. OFUL assumes a linear reward model structure and thus incurs large regret due to the misspecification error. All NeuralUCB benchmarks use gradient descent to learn the model structure over time and thus take long time to converge in general. Note that even as a fair comparison with NeuralUCB-T, where the true network structure is given, our method is still significantly better in terms of regret. It is worth noting that our method takes only $20$ time steps to converges in a time horizon of $T=1,000$, while NeuralUCB algorithms generally take a long time to converge (e.g., \cite{zhou2020neural} consider a longer horizon $T\approx 10,000$ in all their experiments). Our empirical results complement our theoretical analysis and suggest the efficiency of our algorithm in practice, especially in a short time horizon, despite a theoretically long exploration phase due to our parameter estimation error bound.

\begin{figure}[t]
\centering
\begin{subfigure}[b]{0.4\textwidth}
  \centering
  \includegraphics[width=\textwidth]{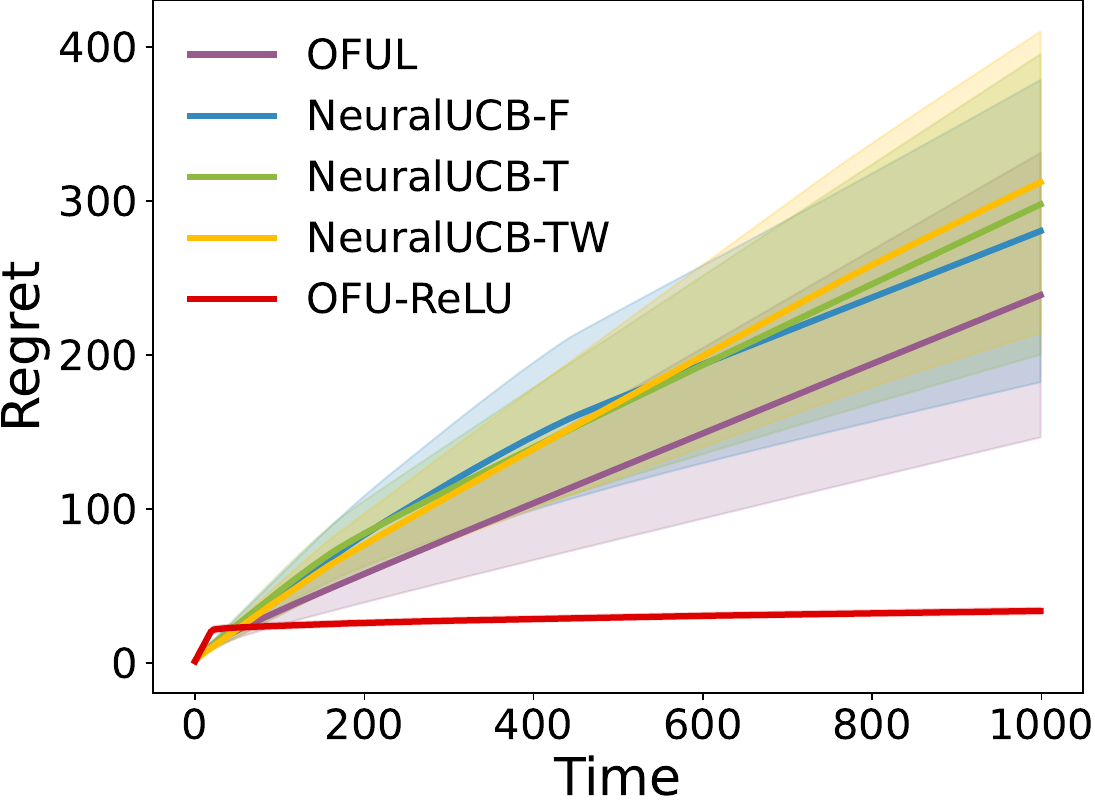}
  \caption{$d=2,k=3$}
\end{subfigure}
\begin{subfigure}[b]{0.4\textwidth}
  \centering
  \includegraphics[width=\textwidth]{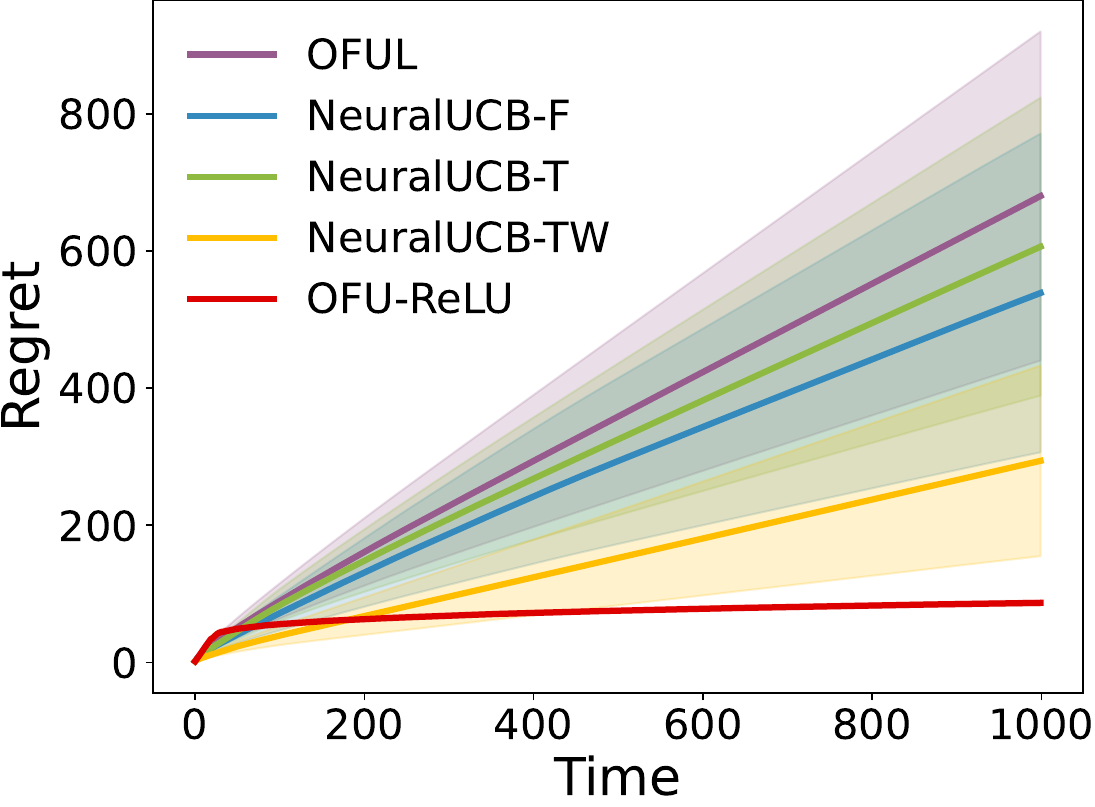}
  \caption{$d=2,k=10$}
\end{subfigure}
\caption{Cumulative regret of a time horizon $T=1,000$ over 50 trials with 95\% confidence interval.}
\label{fig:simulation}
\vskip -0.2in
\end{figure}

\section{Conclusion}

We analyze a bandit problem with the reward given by one-layer ReLU neural network structure, and propose algorithms that can provide a regret bound of $\tilde{O}(\sqrt{T})$. To the best of our knowledge, our work is the first to obtain such regret guarantee for bandit learning with neural networks (without an effective dimension assumption). Furthermore, we demonstrate the efficiency of our algorithm in a synthetic experiment, which suggests its practical potentials. We believe both our theoretical and empirical results provide the first insight into an efficient design of bandit algorithms based on ReLU neural network structures. 

We conclude by providing directions for future research. Due to the complexity of the problem, we tailor our focus to the one-layer ReLU activations. A natural extension is to generalize our result to piecewise linear activation functions. It is more challenging to explore whether our insight can be generalized to bandit problems with more complex activation functions or multiple-layer architectures.

\section*{Impact Statement}

This paper presents work whose goal is to advance the field of Machine Learning. There are many potential societal consequences of our work, none which we feel must be specifically highlighted here.

\bibliographystyle{icml2024}
\bibliography{ref}

\newpage
\appendix

\onecolumn

\section{Proof of Proposition~\ref{prop:key}}
\label{app:pf_prop}

We give the proof of Proposition~\ref{prop:key}, followed by proofs of the lemmas used in this proof.

\subsection{Intuition}

We illustrate our proof strategy in Figure \ref{fig:proofsketch}.
We first define the following set
\begin{align*}
\mathcal{J}^\alpha_i=\{l\in[k]\mid\|\theta_{l}-\theta_i^*\|_2\le\alpha, ~\text{or}~ \|\theta_{l}+\theta_i^*\|_2\le\alpha\}
\end{align*}
for all $i\in[k]$. It suffices to prove that $\mathcal{J}^\alpha_i$ is a singleton set for every $i\in[k]$ in order to prove Proposition~\ref{prop:key}. Note that when $\alpha\le\alpha_0/2$, $\mathcal{J}^\alpha_i$'s are disjoint (i.e., $\mathcal{J}^\alpha_i\cap\mathcal{J}^\alpha_{i'}=\varnothing$ for any $i,i'\in[k], i\ne i'$); in particular, if $i''\in \mathcal{J}^\alpha_i$, then for any other $i'\in[k]$, we have
\begin{align*}
\|\theta_{i''}\pm\theta_{i'}^*\|_2
\ge\|\theta_i^*-\theta_{i'}^*\|_2-\|\theta_{i''}-\theta_i^*\|_2\ge\alpha_0-\alpha
\ge\alpha,
\end{align*}
that is, $i''\not\in J^\alpha_{i'}$, as claimed, where we use Assumption~\ref{assump:neuronskip}. As a consequence, it suffices to show that $\mathcal{J}^\alpha_i\neq\varnothing$ for every $i\in[k]$. 
To this end, we prove its contrapositive---i.e., if there exists $j\in[k]$ such that $\mathcal{J}^\alpha_j=\varnothing$, then
\begin{align*}
L_{X_j}(\Theta)\ge\eta=\frac{\epsilon^2(1-d\epsilon^2/2)|S^{d-2}|}{8}-\frac{k\epsilon^3|S^{d-3}|}{2\alpha}-(6k+3\sigma)d\epsilon^3|S^{d-2}|.
\end{align*}

Intuitively, if $\theta_j^*$ does not have a matching neuron $\theta_j$ (up to a sign flip) in $\mathcal{J}_j^\alpha$, then we can show that $g(\theta^\top x)$ is linear for any
\begin{align*}
\theta \in \bar\Theta_{\lnot j}^* \coloneqq\{\theta_i\}_{i\in[k]}\cup\{\theta_i^*\}_{i\in[k], i\ne j}
\end{align*}
except $\theta^*_j$ on majority of the strip $X_j'$ (Figure \ref{fig:proofsketch} (b), formally defined in \eqref{def:xjprime}), which is a close approximation of $X_j$. Therefore, $f_\Theta(x)-f_{\Theta^*}(x)$ can be additively decomposed into a linear term plus $g(\theta_j^{*\top}x)$. Besides, we prove that any linear function cannot approximate $g(\theta_j^{*\top}x)$ well on $X_j'$ (Figure \ref{fig:proofsketch} (c)). 
As a result, given the definition of $L_{X_j}(\Theta)$, we can show that $L_{X_j}(\Theta)$ is lower-bounded.

\begin{figure*}[htbp]
\centering
\begin{subfigure}[b]{0.3\textwidth}
  \centering
  \includegraphics[width=\textwidth]
  {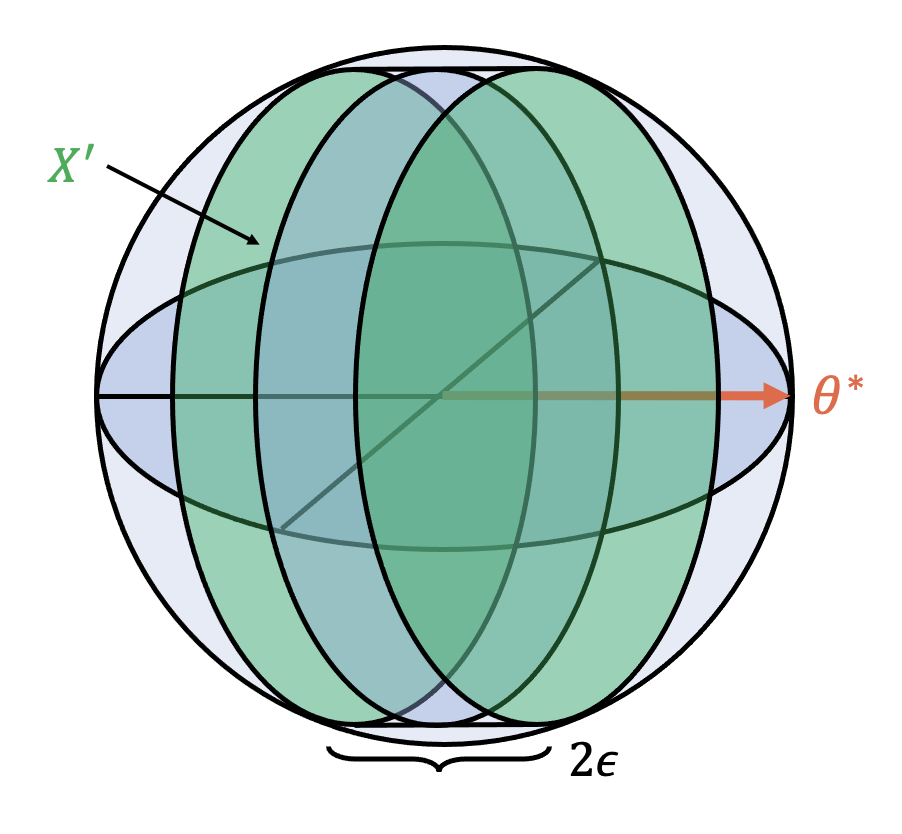}
  \caption{}
\end{subfigure}
\begin{subfigure}[b]{0.3\textwidth}
  \centering
  \includegraphics[width=\textwidth]
  {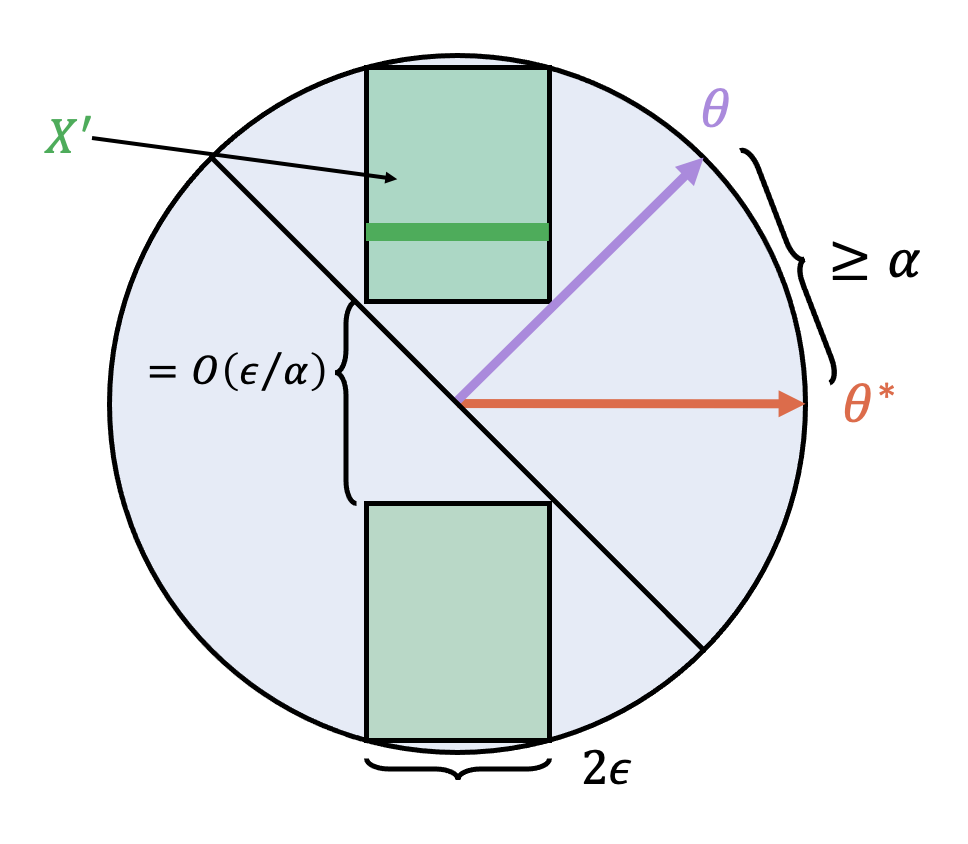}
  \caption{}
\end{subfigure}
\begin{subfigure}[b]{0.3\textwidth}
  \centering
  \includegraphics[width=\textwidth]
  {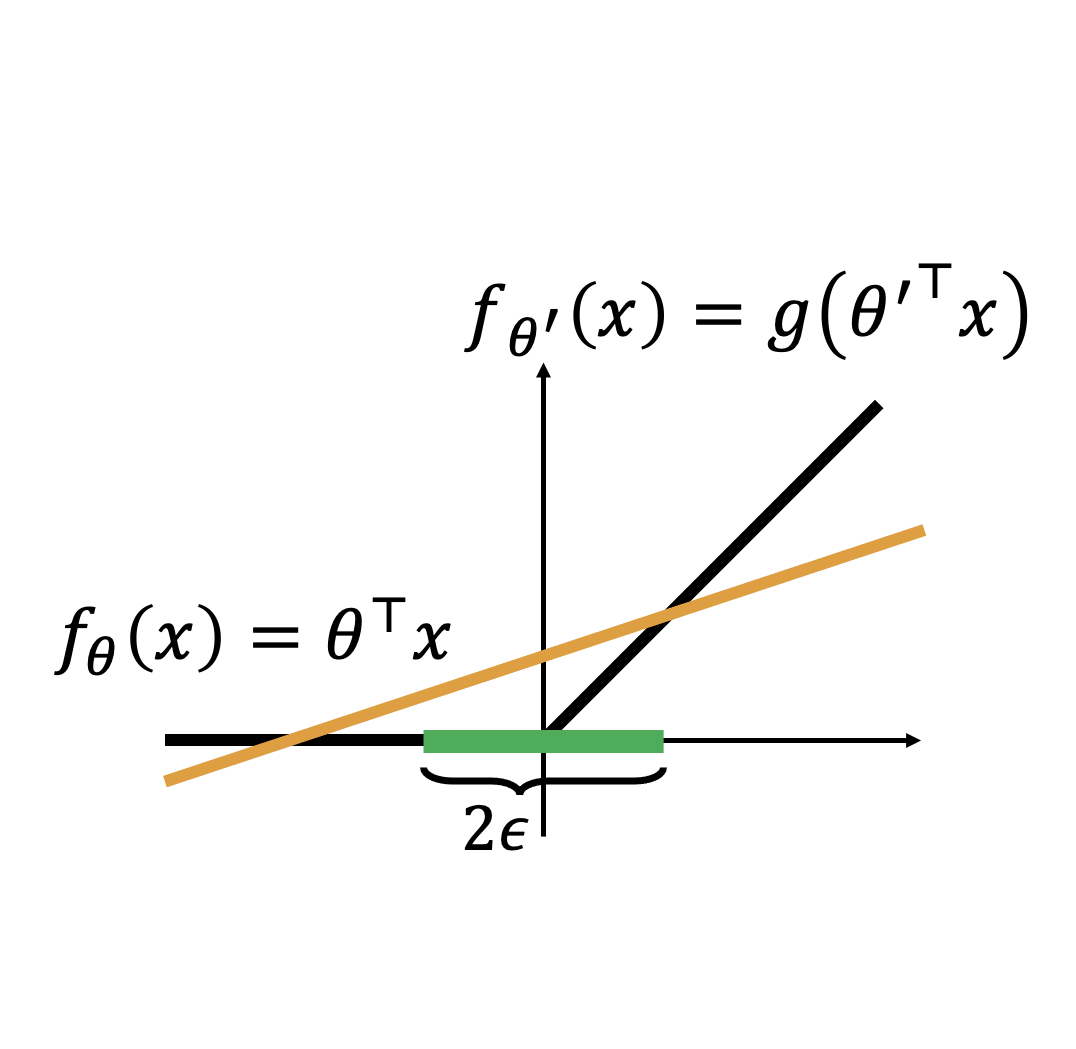}
  \caption{}
\end{subfigure}
\caption{Illustrations for proof sketch of the estimation error for ReLU Neural Networks. (a) The region $X'$ is the cylinder with caps consisting of the two green circles and radius $2\epsilon$. (b) Projected version of subfigure (a). (c) The green region is $X'$ with a section of length $O(\epsilon/\alpha)$ cut out.}
\label{fig:proofsketch}
\end{figure*}

\subsection{Proof of Proposition~\ref{prop:key}}
We list the details of our proof strategy in this section. We introduce additional notation that we will use in the following proof. With a slight abuse of notation, for any vector $x\in\mathbb{R}^d$ Let $x_i$ denote the $i^{th}$ element of $x$ and $x_{i:j}\in\mathbb{R}^{j-i+1}$ the subvector of $x$ consisting of the $i^{th}$ to $j^{th}$ elements.

\paragraph{Step 1.}
Note that $X_j$ is a slice of the sphere $\mathcal{X}=S^{d-1}$. To simplify our following analysis, we approximate $X_j$ using a cylinder $X_j'\subseteq\mathbb{R}^d$---i.e., without loss of generality assuming that for one specific $j\in[k]$
\begin{align}\label{eq:wlog_thetajs}
\theta_j^*=\begin{bmatrix}1&0&\cdots&0\end{bmatrix}^\top,
\end{align}
then 
\begin{align*}
X_j'=[-\epsilon,\epsilon]\times Z=\{x\in\mathbb{R}^d\mid|x_1|\le\epsilon, x_{2:d}\in Z\},
\quad\text{where}\quad
Z=S^{d-2}\left(\sqrt{1-\epsilon^2}\right),
\end{align*}
or equivalently
\begin{align}
\label{def:xjprime}
X_j'=\{\phi(x)\mid x\in X_j\}, \quad \text{where} \quad \phi(x)=\begin{bmatrix}x_1&\sqrt{\frac{1-\epsilon^2}{1-x_1^2}}x_2& \cdots &\sqrt{\frac{1-\epsilon^2}{1-x_1^2}}x_d\end{bmatrix}^\top.
\end{align}
This region is visualized in Figure~\ref{fig:proofsketch} (a); its projection to two dimensions is shown in Figure~\ref{fig:proofsketch} (b). We show that the loss restricted to $X_j'$ is approximately equal to the loss restricted to $X_j$:
\begin{lemma}
\label{lem:xapprox}
Given $X_j'$, it holds that
\begin{align*}
&\left|\int_{X_j}|f_\Theta(x)-f_{\Theta^*}(x)|dx-\int_{X_j'}|f_\Theta(x)-f_{\Theta^*}(x)|dx\right|\le 6kd\epsilon^3|S^{d-2}|.
\end{align*}
\end{lemma}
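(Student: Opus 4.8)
The plan is to pass to a common parametrization of the two $(d-1)$-dimensional surfaces $X_j$ and $X_j'$ and to compare the integrands together with their surface measures. Using the normalization \eqref{eq:wlog_thetajs}, every point of $X_j$ can be written as $x(t,\omega)=(t,\sqrt{1-t^2}\,\omega)$ and every point of $X_j'$ as $y(t,\omega)=(t,\sqrt{1-\epsilon^2}\,\omega)$, where $(t,\omega)$ ranges over $[-\epsilon,\epsilon]\times S^{d-2}$; under this parametrization the map $\phi$ of \eqref{def:xjprime} is simply the identity on $(t,\omega)$. The surface (Hausdorff) measure of the spherical slice is $(1-t^2)^{(d-3)/2}\,dt\,d\omega$, while that of the cylinder is the product measure $(1-\epsilon^2)^{(d-2)/2}\,dt\,d\omega$, where $d\omega$ is the surface measure on $S^{d-2}$. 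I would first record these two Jacobian densities, since they are precisely what make the two integrals differ even though the integrand is the same function $F\coloneqq|f_\Theta-f_{\Theta^*}|$.

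Writing $F_x=F(x(t,\omega))$ and $F_y=F(y(t,\omega))$, the difference of the two integrals becomes a single integral over $[-\epsilon,\epsilon]\times S^{d-2}$, which I would split as
\begin{align*}
\int (F_x-F_y)(1-t^2)^{\frac{d-3}{2}}\,dt\,d\omega
+\int F_y\big[(1-t^2)^{\frac{d-3}{2}}-(1-\epsilon^2)^{\frac{d-2}{2}}\big]\,dt\,d\omega .
\end{align*}
The two error sources are then controlled separately: the first term measures how much the integrand moves when a point is pushed from the sphere onto the cylinder, and the second term measures how much the surface element is distorted.

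For the first term I would use that each of $f_\Theta,f_{\Theta^*}$ is $k$-Lipschitz (the ReLU $g$ is $1$-Lipschitz and $\sum_i\|\theta_i\|=k$ by Assumption~\ref{assump:neuronnorm} and the domain constraint $\|\Theta\|_{2,1}=k$), so $F$ is $2k$-Lipschitz; combined with the elementary estimate $\|x(t,\omega)-y(t,\omega)\|=|\sqrt{1-t^2}-\sqrt{1-\epsilon^2}|=\frac{\epsilon^2-t^2}{\sqrt{1-t^2}+\sqrt{1-\epsilon^2}}\le\frac{\epsilon^2}{2\sqrt{1-\epsilon^2}}$ and $(1-t^2)^{(d-3)/2}\le1$, integrating $|F_x-F_y|\le 2k\|x-y\|$ over $t\in[-\epsilon,\epsilon]$ (length $2\epsilon$) and over $S^{d-2}$ yields a contribution of order $k\epsilon^3|S^{d-2}|$. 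For the second term I would bound $|F_y|\le 2k$ (since $\|y\|\le1$ forces $|f_\Theta(y)|,|f_{\Theta^*}(y)|\le k$) and bound the density gap by Bernoulli's inequality: both $(1-t^2)^{(d-3)/2}$ and $(1-\epsilon^2)^{(d-2)/2}$ lie in $[1-\tfrac{d-2}{2}\epsilon^2,\,1]$ for $|t|\le\epsilon$, so their difference is at most $\tfrac{d}{2}\epsilon^2$, giving a contribution of order $kd\epsilon^3|S^{d-2}|$. Adding the two pieces and collecting constants (using $2k(2+d)\le 6kd$ for $d\ge1$) produces the stated bound $6kd\epsilon^3|S^{d-2}|$.

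The step I expect to be the main obstacle is the careful bookkeeping of the surface measures — in particular deriving the slice density $(1-t^2)^{(d-3)/2}$ and the cylinder density $(1-\epsilon^2)^{(d-2)/2}$ and then showing that their difference is $O(d\epsilon^2)$ uniformly over $|t|\le\epsilon$ — and making the constants combine to exactly $6kd$. A minor subtlety is the case $d=2$, where the exponent $(d-3)/2$ is negative so $(1-t^2)^{-1/2}$ is no longer bounded by $1$; there I would instead use $\int_{-\epsilon}^{\epsilon}(1-t^2)^{-1/2}\,dt=2\arcsin\epsilon\le 2\epsilon/\sqrt{1-\epsilon^2}$, which leaves the final order unchanged.
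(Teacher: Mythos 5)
Your proof is correct, and at the level of strategy it matches the paper's: both transfer the two integrals onto a common domain via the map $\phi$, split the difference into an integrand-displacement term (controlled by $2k$-Lipschitzness of $|f_\Theta-f_{\Theta^*}|$ and $\|x-\phi(x)\|=O(\epsilon^2)$) plus a measure-distortion term (controlled by Bernoulli's inequality, giving relative error $O(d\epsilon^2)$), and integrate each over an area of size roughly $2\epsilon|S^{d-2}|$. Where you genuinely differ is the measure bookkeeping, and your version is the more rigorous one. The paper transfers the integral using the ambient $d\times d$ determinant $|\det\nabla_x\phi(x)|=\bigl(\tfrac{1-\epsilon^2}{1-x_1^2}\bigr)^{(d-1)/2}$, which is the \emph{volume}-change factor of $\phi:\mathbb{R}^d\to\mathbb{R}^d$, not the surface-area transfer factor between the two hypersurfaces; the correct tangential factor is the ratio of your densities, $(1-\epsilon^2)^{(d-2)/2}/(1-x_1^2)^{(d-3)/2}$. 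The two disagree by $(1-x_1^2)/\sqrt{1-\epsilon^2}$ --- visible at the rim $x_1=\pm\epsilon$, where the surfaces coincide pointwise but their tangent planes are tilted by $\arcsin\epsilon$, so the true factor is $\sqrt{1-\epsilon^2}$ while the paper's is $1$. Since both factors equal $1+O(d\epsilon^2)$ uniformly on $|x_1|\le\epsilon$, the paper's final bound is unharmed, but your derivation of the exact slice density $(1-t^2)^{(d-3)/2}$ and cylinder density $(1-\epsilon^2)^{(d-2)/2}$ is what a fully careful argument requires, and it also surfaces the $d=2$ case (where the slice density exceeds $1$ and needs the $\arcsin$ bound), which the paper's ``factor $\le 1$'' shortcut silently bypasses. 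One shared caveat: both your constants and the paper's implicitly require $\epsilon$ bounded away from $1$ (e.g., $\epsilon\le\sqrt{3}/2$ suffices for your $\tfrac{1}{\sqrt{1-\epsilon^2}}\le 2$ and for the paper's claim $\max_{x\in X_j}\|x-\phi(x)\|\le\epsilon^2$), which holds in the regime where Lemma~\ref{lem:relumain} invokes this lemma.
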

The proof is provided in Appendix~\ref{sec:lem:xapprox:proof}.

\paragraph{Step 2.}
Next, we decompose $X_j'$ into strips---i.e.,
\begin{align*}
X_j'=\bigcup_{z\in Z}X^z,
\quad\text{where}\quad
X^z=[-\epsilon,\epsilon]\times\{z\}.
\end{align*}
Note that $X^z$ is a one-dimensional manifold; one such strip is shown as the horizontal green line in Figure~\ref{fig:proofsketch} (b). Our strategy is to lower bound the loss restricted to each of these strips, after which we can integrate over them to obtain a lower bound on the overall loss. With a slight abuse of notation, we further define
\begin{align*}
f_{\Theta_{\lnot j}}=\sum_{i\in[k], i\ne j}g(\theta_i^\top x).
\end{align*}
Precisely, we provide a lower bound of the loss for those $z\in Z$ where $f_\Theta$ and $f_{\Theta_{\lnot j}^*}$ is linear on $X^z$; in particular, for such $z$, we have
\begin{align*}
X^z\cap \mathcal{V}^{\bar\Theta_{\lnot j}^*} = \varnothing,
\quad\text{where}\quad
\mathcal{V}^{\bar\Theta_{\lnot j}^*}=\bigcup_{\theta\in\bar\Theta_{\lnot j}^*}\{x\in X_j'\mid\theta^\top x=0\}.
\end{align*}
Note that $\mathcal{V}^{\bar\Theta_{\lnot j}^*}$ is the boundary at which one of the ReLUs, i.e., $g(\theta^\top x)$ with $\theta \in \bar\Theta_{\lnot j}^*$, transitions from inactive to active. If $X^z$ does not intersect $\mathcal{V}^{\bar\Theta_{\lnot j}^*}$, then $\theta^\top x\neq0$ on $X^z$ for all $\theta \in \bar\Theta_{\lnot j}^*$ and, hence, $f_\Theta-f_{\Theta_{\lnot j}^*}$ must be linear on such $X^z$. 
In the following, we show that such $z$'s make up a large proportion of $Z$; equivalently, we show that the following subset is small:
\begin{align}
\label{def:ZTheta}
Z^{\bar\Theta_{\lnot j}^*}=\bigcup_{\theta\in\bar\Theta_{\lnot j}^*} Z^{\theta},
\quad \text{where} \quad 
Z^{\theta}=\{z\in Z\mid\exists x_1\in[-\epsilon,\epsilon],\,\theta^\top(x_1\circ z)=0\},
\end{align}
where $x_1\circ z\coloneqq\begin{bmatrix}x_1&z_1&\cdots&z_{d-1}\end{bmatrix}^\top$.
\begin{lemma}
\label{lem:linearregion}
For any $\theta\in\bar\Theta_{\lnot j}^*$, we have
\begin{align*}
|Z^{\theta}|\le\frac{2\epsilon}{\alpha}|S^{d-3}|.
\end{align*}
\end{lemma}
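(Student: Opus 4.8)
The plan is to reduce the condition defining $Z^\theta$ to a single linear constraint on $z$, recognize $Z^\theta$ as a band cut from the sphere $Z$ by a slab, and bound its surface measure, where the width of the slab is controlled by how far $\theta$ sits from the poles $\pm\theta_j^*$. Throughout I work under the normalization $\theta_j^*=[1,0,\cdots,0]^\top$ fixed in Step~1.

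First I would rewrite the defining condition. Writing the first coordinate of $\theta$ as $\theta_1$ and the remaining coordinates as $\theta_{2:d}\in\mathbb{R}^{d-1}$, we have $\theta^\top(x_1\circ z)=\theta_1 x_1+\theta_{2:d}^\top z$, so there exists $x_1\in[-\epsilon,\epsilon]$ with $\theta^\top(x_1\circ z)=0$ if and only if $|\theta_{2:d}^\top z|\le|\theta_1|\epsilon$ (this also covers the degenerate case $\theta_1=0$). Hence $Z^\theta=\{z\in Z\mid|\theta_{2:d}^\top z|\le|\theta_1|\epsilon\}$, i.e. the intersection of $Z=S^{d-2}(\sqrt{1-\epsilon^2})$ with the slab of half-width $c\coloneqq|\theta_1|\epsilon/\|\theta_{2:d}\|$ about the hyperplane $\{z\mid\theta_{2:d}^\top z=0\}$.

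Second---the crux---I would lower bound the transverse mass $\|\theta_{2:d}\|$. Since $\mathcal{J}^\alpha_j=\varnothing$ (ruling out estimated neurons near $\pm\theta_j^*$) and Assumption~\ref{assump:neuronskip} handles the true neurons $\theta_i^*$ with $i\ne j$ (recall $\alpha\le\alpha_0/2$), every $\theta\in\bar\Theta_{\lnot j}^*$ satisfies $\min\{\|\theta-\theta_j^*\|_2,\|\theta+\theta_j^*\|_2\}\ge\alpha$. Using $\|\theta\|_2=\|\theta_j^*\|_2=1$, the identities $\|\theta\mp\theta_j^*\|_2^2=2\mp2\theta_1$ turn this into $|\theta_1|\le1-\alpha^2/2$. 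Combined with the elementary inequality $1-\alpha^2/2\le(1+\alpha^2)^{-1/2}$, this gives $\theta_1^2(1+\alpha^2)\le1$, hence $\|\theta_{2:d}\|_2^2=1-\theta_1^2\ge\alpha^2\theta_1^2$, so that $c\le\epsilon/\alpha$.

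Finally I would bound the band's surface area. Setting $u=\theta_{2:d}/\|\theta_{2:d}\|_2$ and $R=\sqrt{1-\epsilon^2}$, we have $Z^\theta=\{z\in Z\mid|u^\top z|\le c\}$; slicing $Z$ by the level sets $s=u^\top z$, each slice is a $(d-3)$-sphere of radius $\sqrt{R^2-s^2}$ and the surface measure factorizes as $dA=R|S^{d-3}|(R^2-s^2)^{(d-4)/2}\,ds$. Integrating over $s\in[-c,c]$ and bounding $(R^2-s^2)^{(d-4)/2}\le R^{d-4}$ together with $R\le1$ yields $|Z^\theta|\le2cR^{d-3}|S^{d-3}|\le2c|S^{d-3}|\le\frac{2\epsilon}{\alpha}|S^{d-3}|$, as claimed. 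The main obstacle is the geometric step in the third paragraph: converting a Euclidean separation from the poles $\pm\theta_j^*$ into the quantitative transverse lower bound $\|\theta_{2:d}\|_2\ge\alpha|\theta_1|$, which is precisely what caps the slab width; this relies essentially on all neurons (estimated and ground-truth) being unit-norm, and some care is needed with the exponent $(d-4)/2$ in low dimensions. The measure computation itself is routine once the band has been identified.
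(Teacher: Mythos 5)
Your proof is correct and takes essentially the same route as the paper's: the paper rotates coordinates so that $\theta=[t_1,t_2,0,\cdots,0]^\top$ while keeping $\theta_j^*=[1,0,\cdots,0]^\top$, derives $|t_2|/|t_1|>\alpha$ from the $\alpha$-separation to $\pm\theta_j^*$ (handling estimated neurons via $\mathcal{J}^\alpha_j=\varnothing$ and true neurons via Assumption~\ref{assump:neuronskip}, exactly as you do), concludes $|z_1|\le\epsilon/\alpha$, and integrates cross-sectional areas, so your transverse bound $\|\theta_{2:d}\|_2\ge\alpha|\theta_1|$ is the identical statement in unrotated coordinates. If anything you are slightly more careful: you retain the slant-factor Jacobian $R/\sqrt{R^2-s^2}$ that the paper's area integral silently drops (its intermediate inequality is in fact reversed, though the final bound still holds for $d\ge4$), and both arguments share the same low-dimensional edge case that you explicitly flag.
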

The proof is given in Appendix~\ref{sec:lem:relulinear:proof}. This result is illustrated in Figure~\ref{fig:proofsketch} (b); the set of $X^z$ for which $z\in Z^{\bar\Theta_{\lnot j}^*}$ (which has size $O(\epsilon/\alpha)$) has been removed from $X_j'$. Note that as $\alpha$ becomes larger, the size of $Z^{\bar\Theta_{\lnot j}^*}$ becomes smaller.

\paragraph{Step 3.}
Next, for $z\in Z \setminus Z^{\bar\Theta_{\lnot j}^*}$, we lower bound the loss on $X^z$. Remember the loss is
\begin{align*}
|f_\Theta(x)-f_{\Theta^*}(x)| = |(f_\Theta(x)-f_{\Theta_{\lnot j}^*}(x)) - g(\theta_j^{*\top}x)|,
\end{align*}
where we argue in Step 2 that on such $X_z$ the first term is linear. Therefore, we can lower bound the loss using the following lemma:
\begin{lemma}
\label{lem:linearbound}
For any $\beta_0,\beta_1\in\mathbb{R}$, we have
\begin{align*}
\int_{-\epsilon}^\epsilon|(\beta_0+\beta_1w)-g(w)|dw\ge\frac{\epsilon^2}{8}.
\end{align*}
\end{lemma}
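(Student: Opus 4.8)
The plan is to reduce the claim to a statement about the pointwise error function $e(w) \coloneqq (\beta_0 + \beta_1 w) - g(w)$. Since $g(w) = 0$ on $[-\epsilon,0]$ and $g(w) = w$ on $[0,\epsilon]$, the function $e$ is continuous on $[-\epsilon,\epsilon]$ and affine on each of the two subintervals $[-\epsilon,0]$ and $[0,\epsilon]$; the only nonsmoothness is the unit drop in slope at the origin inherited from the ReLU kink. The whole point is that an affine map $\beta_0 + \beta_1 w$ has no such kink, so this slope mismatch cannot be removed by any choice of $\beta_0,\beta_1$, and it is exactly this mismatch that I will convert into a quantitative lower bound.

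First I would extract the mismatch through a three-point second difference. Evaluating at $w \in \{-\epsilon, 0, \epsilon\}$ and using $g(-\epsilon)=0$, $g(0)=0$, $g(\epsilon)=\epsilon$, a direct computation gives the identity $e(-\epsilon) - 2e(0) + e(\epsilon) = -\epsilon$, in which both $\beta_0$ and $\beta_1$ cancel. The triangle inequality then yields, for every choice of $\beta_0,\beta_1$,
\[
|e(-\epsilon)| + 2|e(0)| + |e(\epsilon)| \ge \bigl|e(-\epsilon) - 2e(0) + e(\epsilon)\bigr| = \epsilon .
\]
So the values of $|e|$ at the three marked points cannot all be small at once.

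Next I would convert this pointwise information into an integral lower bound, piece by piece. The key elementary fact is that for an affine function $\ell$ on an interval of length $L$ with endpoint values $u$ and $v$ one has $\int |\ell| \ge \tfrac{L}{4}(|u|+|v|)$: when $u,v$ share a sign the integral equals $\tfrac{L}{2}(|u|+|v|)$, and when they have opposite signs the integral equals $\tfrac{L}{2}\cdot\tfrac{u^2+v^2}{|u|+|v|}$, which is at least $\tfrac{L}{4}(|u|+|v|)$ by $u^2+v^2 \ge \tfrac12(|u|+|v|)^2$. Applying this on $[-\epsilon,0]$ and on $[0,\epsilon]$ (each of length $\epsilon$) and summing gives
\[
\int_{-\epsilon}^{\epsilon} |e(w)|\,dw \ge \frac{\epsilon}{4}\bigl(|e(-\epsilon)| + 2|e(0)| + |e(\epsilon)|\bigr) \ge \frac{\epsilon^2}{4} \ge \frac{\epsilon^2}{8},
\]
where the middle inequality is the three-point bound above. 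This proves the lemma, in fact with the sharper constant $\tfrac14$.

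I expect the main obstacle to be the opposite-sign case of the affine-integral fact, since there the integrand vanishes in the interior and one must account for the cancellation; this is exactly where the quadratic inequality $u^2 + v^2 \ge \tfrac12(|u|+|v|)^2$ and the resulting factor of $\tfrac14$ enter, while everything else is a short computation. As a sanity check, the bound is essentially tight: choosing $\beta_1 = \tfrac12$ renders the error $e(w) = \beta_0 - \tfrac12|w|$ even in $w$, and then $\beta_0 = \tfrac{\epsilon}{4}$ attains $\int_{-\epsilon}^{\epsilon}|e| = \tfrac{\epsilon^2}{4}$, which shows the stated constant $\tfrac18$ has room to spare and confirms that no choice of $(\beta_0,\beta_1)$ can do better than $\tfrac{\epsilon^2}{4}$.
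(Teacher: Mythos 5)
Your proof is correct, and it takes a genuinely different route from the paper's. The paper splits the integral at the kink into $F(\beta_0,\beta_1)+F(\beta_0,1-\beta_1)$ with $F(a,b)=\int_0^\epsilon|a-bw|\,dw$, observes that at least one of the slopes $\beta_1$, $1-\beta_1$ must be $\ge 1/2$, and then explicitly minimizes $F(a,b)$ over $a$ for that term (discarding the other), which yields $F\ge b\epsilon^2/4\ge\epsilon^2/8$. You instead isolate the ReLU kink through the second difference $e(-\epsilon)-2e(0)+e(\epsilon)=-\epsilon$, which kills both parameters at once, and then convert endpoint information into integral information via the elementary fact that an affine function $\ell$ on an interval of length $L$ with endpoint values $u,v$ satisfies $\int|\ell|\ge\frac{L}{4}(|u|+|v|)$; summing over the two half-intervals gives the bound. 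Your argument buys two things: a sharper and in fact tight constant $\epsilon^2/4$ (the paper loses a factor of $2$ by discarding one of the two $F$ terms, as your extremal example $\beta_1=\tfrac12$, $\beta_0=\tfrac{\epsilon}{4}$ confirms), and a mechanism that generalizes immediately to any piecewise-linear target with a slope jump $s$ at the origin (the second difference becomes $-s\epsilon$, so the bound scales linearly in the jump). The paper's computation, by contrast, is more explicit but tied to the specific form of $g$ and to the case analysis on which slope is large. Since the lemma is only invoked with the constant $\epsilon^2/8$ downstream (in Step 3 of the proof of Proposition~\ref{prop:key}), either version suffices for the paper's purposes.
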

We provide the proof in Appendix~\ref{sec:lem:linearbound:proof}. Since our loss is the mean absolute error, this result follows from a geometric argument. Intuitively, as illustrated in Figure~\ref{fig:proofsketch} (c), there is a triangular gap between $f_\Theta-f_{\Theta^*_{\lnot j}}$ (which is linear---i.e., $f_\Theta(x)-f_{\Theta^*_{\lnot j}}(x)=\beta^\top x$ for some $\beta\in\mathbb{R}^d$) and $g(\theta_j^{*\top}x)$ on $X^z$; this gap equals the loss, and it cannot be reduced regardless of the value of $\beta$.

\paragraph{Step 4.}
Finally, the proof of Proposition~\ref{prop:key} consists of integrating the lower bound from Step 3 over $z\in Z \setminus Z^{\bar\Theta_{\lnot j}^*}$ to obtain a lower bound on $L_{X_j}(\Theta)$.

\begin{proof}
First, note that
\begin{align}\label{eq:ineqint_xjp}
\int_{X_j'}|f_\Theta(x)-f_{\Theta^*}(x)|dx
&\ge \int_Z\int_{-\epsilon}^\epsilon|f_\Theta(x_1\circ z)-f_{\Theta^*}(x_1\circ z)|dx_1dz \nonumber \\
&\ge\int_{Z \setminus Z^{\bar\Theta_{\lnot j}^*}}\int_{-\epsilon}^\epsilon|f_\Theta(x_1\circ z)-f_{\Theta^*}(x_1\circ z)|dx_1dz \nonumber \\
&=\int_{Z \setminus Z^{\bar\Theta_{\lnot j}^*}}\int_{-\epsilon}^\epsilon|(f_\Theta(x_1\circ z)-f_{\Theta_{\lnot j}^*}(x_1\circ z)) - g(\theta_j^{*\top}(x_1\circ z))|dx_1dz.
\end{align}
Remember for any given $z\in Z \setminus Z^{\bar\Theta_{\lnot j}^*}$, the first term $f_\Theta(x_1\circ z) - f_{\Theta_{\lnot j}^*}(x_1\circ z)$ is linear in $x_1 \circ z$, i.e., there exists a parameter $\tilde\theta_j$ such that $\tilde\theta_j^\top (x_1\circ z) = f_\Theta(x_1\circ z) - f_{\Theta_{\lnot j}^*}(x_1\circ z)$. Without loss of generality, we can modify the coordinate system so that 
\begin{align*}
\tilde\theta_j=\begin{bmatrix}t_1&t_2&0&\cdots&0\end{bmatrix}^\top
\end{align*}
without affecting $\theta_j^*$ in \eqref{eq:wlog_thetajs}. Then, we have
\begin{align*}
\int_{-\epsilon}^\epsilon|(f_\Theta(x_1\circ z)-f_{\Theta_{\lnot j}^*}(x_1\circ z)) - g(\theta_j^{*\top}(x_1\circ z))|dx_1
&=\int_{-\epsilon}^\epsilon|\tilde\theta_j^\top (x_1\circ z)-g(\theta_j^{*\top}(x_1\circ z))|dx_1 \\
&=\int_{-\epsilon}^\epsilon|(t_1 x_1 + t_2 z_1) - g(x_1)|dx_1 \\
&\ge \frac{\epsilon^2}{8},
\end{align*}
where the last inequality uses Lemma~\ref{lem:linearbound}. Given the above result, we can derive from \eqref{eq:ineqint_xjp} that
\begin{align*}
\int_{X_j'}|f_\Theta(x)-f_{\Theta^*}(x)|dx
&\ge|Z\setminus Z^{\Theta_{\lnot j}^*}|\frac{\epsilon^2}{8}
=\left((1-\epsilon^2)^{\frac{d-2}{2}}|S^{d-2}|-\frac{4k\epsilon|S^{d-3}|}{\alpha}\right)\frac{\epsilon^2}{8}.
\end{align*}
This combined with Lemma~\ref{lem:xapprox} gives that 
\begin{align*}
\int_{X_j}|f_\Theta(x)-f_{\Theta^*}(x)|dx
&\ge\left((1-\frac{d}{2}\epsilon^2)|S^{d-2}|-\frac{4k\epsilon|S^{d-3}|}{\alpha}\right)\frac{\epsilon^2}{8} - 6kd\epsilon^3|S^{d-2}|.
\end{align*}
The result then follows.
\end{proof}

\subsection{Proof of Lemma~\ref{lem:xapprox}}
\label{sec:lem:xapprox:proof}

First, note that
\begin{align*}
\int_{X_j'}|f_\Theta(x)-f_{\Theta^*}(x)|dx
&=\int_{X_j}|f_\Theta(\phi(x))-f_{\Theta^*}(\phi(x))|\cdot|\det\nabla_x\phi(x)|dx \\
&=\int_{X_j}|f_\Theta(\phi(x))-f_{\Theta^*}(\phi(x))|\left(\frac{1-\epsilon^2}{1-x_1^2}\right)^{\frac{d-1}{2}}dx,
\end{align*}
where the second equality follows the fact that $\nabla_x\phi(x)$ is a lower triangular matrix. Then, we have
\begin{align*}
& \int_{X_j'}|f_\Theta(x)-f_{\Theta^*}(x)|dx - \int_{X_j}|f_\Theta(x)-f_{\Theta^*}(x)|dx\\
=&\int_{X_j}\left(\frac{1-\epsilon^2}{1-x_1^2}\right)^{\frac{d-1}{2}}|f_\Theta(\phi(x))-f_{\Theta^*}(\phi(x))|-|f_\Theta(x)-f_{\Theta^*}(x)|dx \\
\le&\int_{X_j}|(f_\Theta(x)-f_\Theta(\phi(x)))-(f_{\Theta^*}(x)-f_{\Theta^*}(\phi(x)))|dx \\
\le&\int_{X_j}|f_\Theta(x)-f_\Theta(\phi(x))|+|f_{\Theta^*}(x)-f_{\Theta^*}(\phi(x))|dx,
\end{align*}
and
\begin{align*}
& \int_{X_j}|f_\Theta(x)-f_{\Theta^*}(x)|dx-\int_{X_j'}|f_\Theta(x)-f_{\Theta^*}(x)|dx\\
=&\int_{X_j}|f_\Theta(x)-f_{\Theta^*}(x)|-\left(\frac{1-\epsilon^2}{1-x_1^2}\right)^{\frac{d-1}{2}}|f_\Theta(\phi(x))-f_{\Theta^*}(\phi(x))|dx \\
\le&\int_{X_j}|f_\Theta(x)-f_{\Theta^*}(x)|-(1-\epsilon^2)^{\frac{d-1}{2}}|f_\Theta(\phi(x))-f_{\Theta^*}(\phi(x))|dx \\
\le&\int_{X_j}|f_\Theta(x)-f_{\Theta^*}(x)|-(1-\frac{d}{2}\epsilon^2)|f_\Theta(\phi(x))-f_{\Theta^*}(\phi(x))|dx \\
\le&\int_{X_j}|f_\Theta(x)-f_\Theta(\phi(x))|+|f_{\Theta^*}(x)-f_{\Theta^*}(\phi(x))|dx \\
& \qquad + \frac{d}{2}\epsilon^2\int_{X_j}|f_\Theta(\phi(x))-f_{\Theta^*}(\phi(x))|dx,
\end{align*}
where we use $|x_1|\le\epsilon$ and Bernoulli's inequality. Therefore, we have
\begin{multline*}
\left|\int_{X_j}|f_\Theta(x)-f_{\Theta^*}(x)|dx-\int_{X_j'}|f_\Theta(x)-f_{\Theta^*}(x)|dx\right|\\
\le\int_{X_j}|f_\Theta(x)-f_\Theta(\phi(x))|+|f_{\Theta^*}(x)-f_{\Theta^*}(\phi(x))|dx + \frac{d}{2}\epsilon^2\int_{X_j}|f_\Theta(\phi(x))-f_{\Theta^*}(\phi(x))|dx.
\end{multline*}
Note that both $f_\Theta$ and $f_{\Theta^*}$ are $k$-Lipschitz. In particular, for any $x,x'\in\mathbb{R}^d$, we have
\begin{align*}
|f_\Theta(x)-f_\Theta(x')|
=\left|\sum_{i=1}^k (g(\theta_i^\top x)-g(\theta_i^\top x'))\right|
\le\sum_{i=1}^k|\theta_i^\top(x-x')|
&\le k\|x-x'\|_2.
\end{align*}
The same result holds for $f_{\Theta^*}$. As a result, we can derive 
\begin{align*}
\int_{X_j}|f_\Theta(x)-f_\Theta(\phi(x))|+|f_{\Theta^*}(x)-f_{\Theta^*}(\phi(x))|dx
\le 2k |X_j| \max_{x\in X_j}\|x-\phi(x)\|,
\end{align*}
and 
\begin{align*}
\int_{X_j}|f_\Theta(\phi(x))-f_{\Theta^*}(\phi(x))|dx
\le 2k |X_j| \max_{x\in X_j}\|\phi(x)\|.
\end{align*}
Note that
\begin{align*}
\max_{x\in X_j} \|x-\phi(x)\|_2
=\max_{x\in X_j} \sqrt{1-x_1^2}-\sqrt{1-\epsilon^2}
\le\epsilon^2,
\end{align*}
and 
\begin{align*}
\max_{x\in X_j} \|\phi(x)\|_2 \le 1.
\end{align*}
Besides, we have
\begin{align*}
|X_j|
=\int_{-\epsilon}^{\epsilon}A^{d-2}\left(\sqrt{1-x_1^2}\right)dx_1
\le2\epsilon|S^{d-2}|,
\end{align*}
where $A^n(r)$ is the area of the $n$-sphere with radius $r$. Then, the claim follows.

\subsection{Proof of Lemma~\ref{lem:linearregion}}
\label{sec:lem:relulinear:proof}

Consider the set $Z^{\theta_i}$ for some $i\in[k]$ first. Without loss of generality, we can modify the coordinate system so that
\begin{align*}
\theta_i=\begin{bmatrix}t_1&t_2&0& \dots &0\end{bmatrix}^\top
\end{align*}
without affecting $\theta_j^*$ in (\ref{eq:wlog_thetajs}). By assumption, we have $\|\theta_i\|_2=\sqrt{t_1^2+t_2^2}=1$. In the following, we first consider $t_1\ge0$. Remember
\begin{align*}
\alpha^2<\|\theta_i-\theta_j^*\|_2^2
&=(1-t_1)^2+t_2^2 =2(1-t_1)
=2\left(1-\frac{t_1}{\sqrt{t_1^2+t_2^2}}\right)
=2\left(1-\frac{1}{\sqrt{1+t_2^2/t_1^2}}\right).
\end{align*}
This implies
\begin{align*}
\frac{|t_2|}{|t_1|}
>\sqrt{\left(\frac{1}{1-\alpha^2/2}\right)^2-1}
\ge\sqrt{\left(1+\frac{\alpha^2}{2}\right)^2-1}
\ge\alpha
\end{align*}
For any $z\in Z^{\theta_i}$, the condition $\theta_i^\top(x_1\circ z)=0$ is equivalent to
\begin{align*}
t_1x_1+t_2z_1=0.
\end{align*}
As a consequence, we have
\begin{align*}
|z_1|
\le\frac{|t_1|\cdot|x_1|}{|t_2|}
\le\frac{\epsilon}{\alpha}.
\end{align*}
Therefore, we can obtain that
\begin{align*}
|Z^{\theta_j}|
\le\int_{-\epsilon/\alpha}^{\epsilon/\alpha}A^{d-3}\left(\sqrt{1-\epsilon^2-z_1^2}\right)dz_1
\le\frac{2\epsilon}{\alpha}|S^{d-3}|,
\end{align*}
where $A^n(r)$ is the area of the $n$-sphere with radius $r$. The claim for $\theta\in\{\theta_i\}_{i\in[k]}$ then follows. The case $t_1 < 0$ can be analyzed similarly using $\alpha < \|\theta_i + \theta_j^*\|_2$. In addition, the claim for $\theta\in\{\theta^*_i\}_{i\in[k], i\ne j}$ also holds noticing $2\alpha < \alpha_0 < \|\theta_i^* \pm \theta_j^*\|_2$ for $i\ne j$ by Assumption~\ref{assump:neuronskip}.

\subsection{Proof of Lemma~\ref{lem:linearbound}}
\label{sec:lem:linearbound:proof}

We first have
\begin{align*}
\int_{-\epsilon}^\epsilon|(\beta_0+\beta_1w)-g(w)|dw
&=\int_{-\epsilon}^0|\beta_0+\beta_1w|dw+\int_0^\epsilon|\beta_0+(\beta_1-1)w|dw \\
&=\int_0^\epsilon|\beta_0-\beta_1w|dw+\int_0^\epsilon|\beta_0+(\beta_1-1)w|dw \\
&=F(\beta_0,\beta_1)+F(\beta_0,1-\beta_1),
\end{align*}
where we define $F(a,b)=\int_0^\epsilon|a-bw|dw$. Note that we must have either $\beta_1\ge1/2$ or $1-\beta_1\ge1/2$. Additionally, it holds that $F(-|a|, b) \ge F(|a|, b)$ for $b>0$. Therefore, it suffices to consider the case $a\ge0,b>1/2$ to provide a lower bound. In this case, $F(a, b)$ takes the minimum when $a\le b\epsilon$. Therefore, we have
\begin{align*}
F(a, b)& = \int_0^{a/b}(a-bw)dw+\int_{a/b}^\epsilon(bw-a)dw \\
&=\left(\frac{a^2}{b}-\frac{a^2}{2b}\right)+\left(\frac{b\epsilon^2}{2}-a\epsilon\right)-\left(\frac{a^2}{2b}-\frac{a^2}{b}\right) \\
&=\frac{a^2}{b}+\frac{b\epsilon^2}{2}-a\epsilon.
\end{align*}
As a function of $a$, this expression is minimized when $a=b\epsilon/2$. In such case,
\begin{align*}
F(a, b) \ge F\left(\frac{b\epsilon}{2},b\right) =\frac{b\epsilon^2}{4}\ge\frac{\epsilon^2}{8},
\end{align*}
where we use $b \ge 1/2$. The result then follows.

\section{Proof of Theorem~\ref{thm:key}}
\label{sec:thm:key:proof}

We give the proof of Theorem~\ref{thm:key}, followed by proofs of the lemmas used in this proof.
First, we have the following technical lemma:
\begin{lemma}
\label{lem:relulipschitz}
We have (a) the mean squared loss $L_{S,p}$ is $4k$-Lipschitz in $\Theta$ with respect to $\ell_{2,1}$ norm, and (b) when $n\ge\log(\frac{2}{\delta})$, its empirical counterpart $\hat{L}_{S,p}$ is $(4k+6\sigma)$-Lipschitz with at least a probability of $1-\delta/2$.
\end{lemma}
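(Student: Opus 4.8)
The plan is to base both parts on the elementary factorization $a^2-b^2=(a-b)(a+b)$ applied to the two squared residuals, arranging things so that the ``difference'' factor always reduces to a difference of network outputs and the ``sum'' factor is a uniformly bounded quantity. The two structural facts I would isolate first are: each ReLU unit $g$ is $1$-Lipschitz and $\|x\|_2=1$ on $\mathcal{X}=S^{d-1}$, so that for any $\Theta,\Theta'$ one has $|f_\Theta(x)-f_{\Theta'}(x)|\le\sum_{i\in[k]}|g(\theta_i^\top x)-g(\theta_i'^\top x)|\le\sum_{i\in[k]}\|\theta_i-\theta_i'\|_2=\|\Theta-\Theta'\|_{2,1}$; and on the domain $\varTheta$ the uniform output bound $|f_\Theta(x)|\le\|\Theta\|_{2,1}=k$ holds (and likewise $|f_{\Theta^*}(x)|\le k$ by Assumption~\ref{assump:neuronnorm}). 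Note this makes the Lipschitz claim a statement over $\varTheta$, which is all we need.

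For part (a), I would write $L_{S,p}(\Theta)-L_{S,p}(\Theta')=\mathbb{E}_p[(f_\Theta(x)-f_{\Theta^*}(x))^2-(f_{\Theta'}(x)-f_{\Theta^*}(x))^2]$ and factor the integrand as $(f_\Theta(x)-f_{\Theta'}(x))\cdot(f_\Theta(x)+f_{\Theta'}(x)-2f_{\Theta^*}(x))$. The first factor is bounded by $\|\Theta-\Theta'\|_{2,1}$ as above, while the second is bounded by $|f_\Theta(x)|+|f_{\Theta'}(x)|+2|f_{\Theta^*}(x)|\le 4k$ via the uniform output bound. Pulling absolute values inside the expectation then yields $|L_{S,p}(\Theta)-L_{S,p}(\Theta')|\le 4k\,\|\Theta-\Theta'\|_{2,1}$, which is the claimed $4k$-Lipschitzness.

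For part (b), the same factorization applied to $\hat{L}_S(\Theta;Z)-\hat{L}_S(\Theta';Z)$ leaves the difference factor unchanged, but the per-sample sum factor now becomes $f_\Theta(x_i)+f_{\Theta'}(x_i)-2y_i=f_\Theta(x_i)+f_{\Theta'}(x_i)-2f_{\Theta^*}(x_i)-2\xi_i$, bounded by $4k+2|\xi_i|$. Averaging over the sample gives the empirical Lipschitz constant $4k+\tfrac{2}{n}\sum_{i\in[n]}|\xi_i|$, so it remains to show $\tfrac1n\sum_{i\in[n]}|\xi_i|\le 3\sigma$ with probability at least $1-\delta/2$. This concentration step is the main obstacle, and the only place where the subgaussian assumption and the requirement $n\ge\log(2/\delta)$ enter. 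I would handle it by a Chernoff argument: since $\xi_i$ is $\sigma$-subgaussian, $\mathbb{E}[e^{\lambda|\xi_i|}]\le\mathbb{E}[e^{\lambda\xi_i}]+\mathbb{E}[e^{-\lambda\xi_i}]\le 2e^{\lambda^2\sigma^2/2}$, and by independence $\mathbb{E}[e^{\lambda\sum_i|\xi_i|}]\le 2^n e^{n\lambda^2\sigma^2/2}$. Hence $P(\sum_i|\xi_i|>3n\sigma)\le\exp(n\log 2-3n\sigma\lambda+n\lambda^2\sigma^2/2)$, and optimizing at $\lambda=3/\sigma$ collapses the exponent to $n(\log 2-\tfrac92)$. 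Because $\tfrac92-\log 2>1$, this probability is at most $\delta/2$ as soon as $n\ge\log(2/\delta)$, giving the Lipschitz constant $4k+6\sigma$ on the claimed event and completing the proof.
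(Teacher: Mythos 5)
Your proposal is correct, and for the Lipschitz estimates themselves it follows the same route as the paper: both factor the difference of squared residuals into $(f_\Theta-f_{\Theta'})$ times a bounded ``sum'' factor, bound $|f_\Theta(x)-f_{\Theta'}(x)|\le\|\Theta-\Theta'\|_{2,1}$ via the $1$-Lipschitzness of $g$ and $\|x\|_2=1$, bound the sum factor by $4k$ (resp.\ $4k+2|\xi_i|$ after writing $y_i=f_{\Theta^*}(x_i)+\xi_i$), and thereby reduce part (b) to showing $\tfrac1n\sum_{i\in[n]}|\xi_i|\le3\sigma$ with probability at least $1-\delta/2$. Where you genuinely depart is in that concentration step. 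The paper bounds $\tfrac1n\sum_i|\xi_i|$ by $\mathbb{E}[|\xi_i|]$ plus a deviation term, controlling the mean with a cited moment lemma and the deviation with Lemma~\ref{lem:subg-hoeffding} applied to $|\xi_i|$; this rests on the assertion that $|\xi_i|$ is itself $\sigma$-subgaussian, which needs justification since $|\xi_i|$ is uncentered and the Hoeffding bound is really applied to $|\xi_i|-\mathbb{E}[|\xi_i|]$, and moreover the paper's chain $2\bigl(\mathbb{E}[|\xi_i|]+\sqrt{2\sigma^2\log(2/\delta)/n}\bigr)\le6\sigma$ requires $\mathbb{E}[|\xi_i|]\le(3-\sqrt{2})\sigma$, which the generic moment bound for $\sigma$-subgaussian variables (of order $\sqrt{2\pi}\,\sigma$) does not obviously supply. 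Your direct Chernoff argument --- $\mathbb{E}[e^{\lambda|\xi_i|}]\le\mathbb{E}[e^{\lambda\xi_i}]+\mathbb{E}[e^{-\lambda\xi_i}]\le2e^{\lambda^2\sigma^2/2}$, optimized at $\lambda=3/\sigma$ to give failure probability $\exp(-n(9/2-\log 2))\le e^{-n}\le\delta/2$ once $n\ge\log(2/\delta)$ --- is self-contained, avoids the subgaussianity-of-$|\xi_i|$ issue entirely, and has constants that can be checked line by line; what the paper's modular route buys is only brevity given the external lemmas. A last minor point in your favor: your output bound $|f_\Theta(x)|\le\|\Theta\|_{2,1}=k$ uses only the constraint $\|\Theta\|_{2,1}=k$ rather than per-neuron unit norms, so your Lipschitz statement is valid on the enlarged set $\bar\varTheta$ over which Lemma~\ref{lem:lossconvergence} later invokes it, exactly as needed downstream.
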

We give a proof in Appendix~\ref{sec:lem:relulipschitz:proof}.
Next, we have the following useful lemma, which says that the empirical loss converges to the true loss.
\begin{lemma}
\label{lem:lossconvergence}
Given the same setup as in Lemma~\ref{lem:empiricalloss}, we have
\begin{align*}
\mathbb{P}_{p}\left[\sup_{\Theta}|(\hat{L}_S(\Theta;Z)-\sigma(\xi))-L_{S,p}(\Theta)|\le\zeta\right]\ge1-\delta,
\end{align*}
where $\sigma(\xi)\coloneqq\frac{1}{n}\sum_{i\in[n]}\xi_i^2$.
\end{lemma}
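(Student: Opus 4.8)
The plan is to prove this uniform deviation bound through a covering-number argument over the parameter space $\varTheta$, combined with pointwise concentration and the Lipschitz continuity of the loss established in Lemma~\ref{lem:relulipschitz}. First I would expand the empirical loss using $y_i = f_{\Theta^*}(x_i)+\xi_i$ in order to isolate and cancel the noise term. Writing $\Delta_\Theta(x) \coloneqq f_\Theta(x)-f_{\Theta^*}(x)$, we obtain
\begin{align*}
\hat{L}_S(\Theta;Z)-\sigma(\xi)
= \frac{1}{n}\sum_{i\in[n]}\Delta_\Theta(x_i)^2 - \frac{2}{n}\sum_{i\in[n]}\xi_i\Delta_\Theta(x_i),
\end{align*}
so that the quantity to be controlled decomposes as
\begin{align*}
(\hat{L}_S(\Theta;Z)-\sigma(\xi))-L_{S,p}(\Theta)
= \underbrace{\left(\frac{1}{n}\sum_i\Delta_\Theta(x_i)^2 - \mathbb{E}_p[\Delta_\Theta(x)^2]\right)}_{(A)}
- \underbrace{\frac{2}{n}\sum_i\xi_i\Delta_\Theta(x_i)}_{(B)}.
\end{align*}
It then suffices to bound $\sup_\Theta|(A)|$ and $\sup_\Theta|(B)|$ separately, each on an event of probability $1-\delta/2$, and combine by a union bound.

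For the pointwise estimates at a fixed $\Theta$, I would use that for $x\in S^{d-1}$ and $\Theta\in\varTheta\subseteq\{\|\Theta\|_{2,1}=k\}$ we have $f_\Theta(x)\in[0,k]$, hence $|\Delta_\Theta(x)|\le k$ and $\Delta_\Theta(x)^2\in[0,k^2]$. Thus term $(A)$ is an average of bounded i.i.d.\ random variables and concentrates at rate $O(k^2\sqrt{\log(1/\delta)/n})$ by Hoeffding's inequality (this is the place to invoke the framework of Lemma~\ref{lem:empiricalloss}). For term $(B)$, each summand $\xi_i\Delta_\Theta(x_i)$ is $k\sigma$-sub-Gaussian conditionally on $x_i$, so $(B)$ concentrates at rate $O(k\sigma\sqrt{\log(1/\delta)/n})$ via a sub-Gaussian tail bound.

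To upgrade these to uniform bounds, I would take a minimal $\epsilon$-net $\mathcal{N}$ of $\varTheta$ in the $\ell_{2,1}$ norm; since $\varTheta$ is a bounded subset of $\mathbb{R}^{k\times d}$ of diameter $O(k)$, its log-covering number is $\log|\mathcal{N}|=O(kd\log(k/\epsilon))$. A union bound over $\mathcal{N}$ controls $(A)$ and $(B)$ simultaneously at all net points, replacing $\log(1/\delta)$ above by $\log(|\mathcal{N}|/\delta)$ and thereby inflating the pointwise rates by the factor $\sqrt{kd\log(k/\epsilon)}$. For arbitrary $\Theta$ I would then pass to its nearest net point $\Theta'$ and bound $|(A)(\Theta)-(A)(\Theta')|$ and $|(B)(\Theta)-(B)(\Theta')|$ using the Lipschitz constants from Lemma~\ref{lem:relulipschitz} (together with the Lipschitz constant of $L_{S,p}$), incurring an extra net-approximation error of $O((k+\sigma)\epsilon)$. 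Choosing $\epsilon$ polynomially small in $n$ renders this term negligible and absorbs its logarithm into the $\tilde{O}$. The dominant contribution comes from $(A)$: multiplying its pointwise rate $k^2/\sqrt{n}$ by the covering factor $\sqrt{kd}$ gives $\tilde{O}(\sqrt{k^5d/n})$, matching the claimed $\zeta=\tilde{\Theta}(\sqrt{k^5d/n})$, while $(B)$ contributes only the lower-order $\tilde{O}(\sigma\sqrt{k^3d/n})$.

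The main obstacle is obtaining clean \emph{uniform} control over the supremum over $\Theta$: the squared-error process $(A)$ lacks a tidy per-coordinate sub-Gaussian structure, so I must balance three ingredients---the boundedness of $\Delta_\Theta$ (for pointwise concentration), the Lipschitz continuity of the loss in $\Theta$ (for discretization), and the correct covering number of $\varTheta$ (for the union bound)---so that they combine to yield precisely the $k^5d/n$ scaling rather than a looser power of $k$. A secondary technical subtlety is that $(B)$ involves the unbounded sub-Gaussian noise $\xi_i$, whose Lipschitz-extension error and maximal inequality over $\mathcal{N}$ must be handled through the high-probability $(4k+6\sigma)$-Lipschitz event of Lemma~\ref{lem:relulipschitz}; I would intersect that event with the concentration events carefully so that the total failure probability remains at most $\delta$.
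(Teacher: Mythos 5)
Your proposal is correct and follows essentially the same route as the paper's proof: the identical decomposition of $(\hat{L}_S(\Theta;Z)-\sigma(\xi))-L_{S,p}(\Theta)$ into a bounded empirical-process term and a noise cross-term, pointwise sub-Gaussian/Hoeffding concentration for each, an $\ell_{2,1}$-net of $\{\|\Theta\|_{2,1}=k\}$ with a union bound, and discretization via the high-probability Lipschitz constants of Lemma~\ref{lem:relulipschitz}, with the failure probabilities split $\delta/2+\delta/2$ exactly as in the paper. The only cosmetic difference is your choice of net radius (polynomially small in $n$ rather than the paper's $\zeta/(16k+12\sigma)$), which does not affect the $\tilde{O}(\sqrt{k^5d/n})$ rate.
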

The proof is given in Appendix~\ref{sec:lem:lossconvergence:proof}. This lemma follows from a standard covering number argument. Using this result, we provide a sample complexity bound for learning the true parameter given the i.i.d. training examples $\{(x_i, y_i)\}_{i\in[n]}$ from $p$.  

Now we define the mean absolute loss function and its corresponding empirical version:
\begin{align*}
L_{A,p}(\Theta)=\mathbb{E}_p\left[|f_\Theta(x)-f_{\Theta^*}(x)|\right], \quad
\hat{L}_A(\Theta;Z)=\frac{1}{n}\sum_{i\in[n]}|f_{\Theta}(x_i)-y_i|.
\end{align*}
\begin{lemma}
\label{lem:empiricalloss} 
For any $\delta\in\mathbb{R}_{>0}$, we have
\begin{align*}
\mathbb{P}_{p}\left[L_{A,p}(\hat\Theta)\le\sqrt{2\zeta}\right]\ge1-\delta,
\end{align*}
where $\zeta$ is defined as
\begin{align*}
\zeta =
\sqrt{\frac{4096k^2(k\lor\sigma)^2}{n}\left(dk \max\left\{1, \log\left(1 + \sqrt{\frac{n}{dk}}\right)\right\} + \log\frac{4}{\delta}\right)}. 
\end{align*}
\end{lemma}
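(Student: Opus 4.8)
The plan is to combine the optimality of the empirical risk minimizer $\hat\Theta$ with the uniform convergence guarantee of Lemma~\ref{lem:lossconvergence}, and then pass from the mean squared loss to the mean absolute loss by Jensen's inequality. The key observation is that although $\hat\Theta$ is defined as the minimizer of the \emph{squared} empirical loss $\hat{L}_S$, the quantity we ultimately want to control, $L_{A,p}(\hat\Theta)$, is dominated by the \emph{squared} population loss $L_{S,p}(\hat\Theta)$, which is exactly what Lemma~\ref{lem:lossconvergence} bounds. So the mean-absolute target is reached essentially for free once the squared population loss at $\hat\Theta$ is small.

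First I would record the optimality comparison. Since $\hat\Theta = \argmin_\Theta \hat L_S(\Theta; Z)$ and $\hat L_S(\Theta^*; Z) = \frac{1}{n}\sum_{i\in[n]}(f_{\Theta^*}(x_i) - y_i)^2 = \frac{1}{n}\sum_{i\in[n]}\xi_i^2 = \sigma(\xi)$, we immediately have $\hat L_S(\hat\Theta; Z) \le \sigma(\xi)$, i.e. $\hat L_S(\hat\Theta;Z) - \sigma(\xi) \le 0$. Next I would invoke Lemma~\ref{lem:lossconvergence}, which holds with probability at least $1-\delta$ (the $\log(4/\delta)$ in $\zeta$ absorbs a union bound over the covering event and the Lipschitz event of Lemma~\ref{lem:relulipschitz}). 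Applying it at $\Theta = \hat\Theta$ gives
\begin{align*}
L_{S,p}(\hat\Theta) \le \bigl(\hat L_S(\hat\Theta; Z) - \sigma(\xi)\bigr) + \zeta \le \zeta.
\end{align*}
Finally, since $t\mapsto t^2$ is convex, Jensen's inequality yields $L_{A,p}(\hat\Theta) = \mathbb{E}_p[|f_{\hat\Theta}(x)-f_{\Theta^*}(x)|] \le \sqrt{\mathbb{E}_p[(f_{\hat\Theta}(x)-f_{\Theta^*}(x))^2]} = \sqrt{L_{S,p}(\hat\Theta)} \le \sqrt{\zeta} \le \sqrt{2\zeta}$, which is the claim.

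The factor of $2$ in $\sqrt{2\zeta}$ is harmless slack: the clean argument above actually delivers $\sqrt{\zeta}$, and the weaker bound is what one obtains from the generic two-sided ERM template (decompose $L_{S,p}(\hat\Theta) = [L_{S,p}(\hat\Theta) - (\hat L_S(\hat\Theta;Z) - \sigma(\xi))] + [\hat L_S(\hat\Theta;Z) - \hat L_S(\Theta^*;Z)] + [\hat L_S(\Theta^*;Z) - \sigma(\xi)]$, where the first bracket is at most $\zeta$, the second is nonpositive by optimality, and the third vanishes). It is safest to keep the $2\zeta$ so that the constants stay uniform with the two-sided presentation.

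The main obstacle is \emph{not} in this deduction, which is short; all the genuinely technical work — the covering-number / metric-entropy computation that produces the explicit $\zeta$ with its $dk\log(1+\sqrt{n/(dk)})$ term and its $k\lor\sigma$ dependence — is deferred to Lemma~\ref{lem:lossconvergence} (and the Lipschitz control of Lemma~\ref{lem:relulipschitz} that feeds it), both of which I may assume. Within the present lemma the only points requiring care are tracking the probability budget so that the single failure event of Lemma~\ref{lem:lossconvergence} already accounts for the full $\delta$, and confirming that the noise term $\sigma(\xi)$ cancels exactly at $\Theta^*$ so that no extra concentration of $\frac{1}{n}\sum_{i\in[n]}\xi_i^2$ is needed.
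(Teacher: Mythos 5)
Your proposal is correct and follows essentially the same route as the paper's proof: ERM optimality at $\Theta^*$, uniform convergence from Lemma~\ref{lem:lossconvergence}, and Cauchy--Schwarz/Jensen to pass from $L_{S,p}$ to $L_{A,p}$. Your one-sided refinement (using $\hat{L}_S(\Theta^*;Z)=\sigma(\xi)$ exactly, so that $L_{S,p}(\hat\Theta)\le\zeta$ rather than $2\zeta$) is valid and slightly sharper than the paper's two-sided decomposition, which is precisely the ``generic template'' you describe in your closing remark.
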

We give a proof in Appendix~\ref{sec:lem:empiricalloss:proof}.

\begin{lemma}
\label{lem:relumain}
Consider any small $\tilde\eta\in\mathbb{R}_{>0}$ satisfying
$\tilde\eta\le\min\{\frac{1}{1152^2\sqrt{2\pi}k^2d^{3/2}},\frac{\alpha_0^2\pi^{1/2}}{1454^2k^2d^{1/2}}\}$. If the generalization error has $L_{A,p}(\Theta)\le\tilde\eta$,
where $p$ is a distribution on the $(d-1)$-sphere such that
\begin{align*}
\frac{1}{|S^{d-1}|}\int_{\mathcal{X}}|f_\Theta(x)-f_{\Theta^*}(x)|dx
\le\mathbb{E}_{p}\left[|f_\Theta(x)-f_{\Theta^*}(x)|\right],
\end{align*}
then there exists a bijection $\sigma:[k]\to[k]$ such that 
\begin{align*}
\min\{\|\theta_{\sigma(i)}-\theta_i^*\|_2,\|\theta_{\sigma(i)}+\theta_i^*\|_2\}
\le\tilde\alpha\coloneqq 727\pi^{-\frac{1}{4}}kd^{\frac{1}{4}}\tilde{\eta}^{\frac{1}{2}}, \quad \forall i \in[k].
\end{align*}
\end{lemma}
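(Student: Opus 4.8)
The plan is to deduce this lemma from Proposition~\ref{prop:key} by \emph{localizing} the global mean-absolute error onto each slab $X_i(\epsilon)$ and then optimizing the slab width $\epsilon$.

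\textbf{Step 1 (localization).} First I would use the stated anti-concentration property of $p$ together with the hypothesis $L_{A,p}(\Theta)\le\tilde\eta$ to pass from the $p$-weighted error to an unweighted one,
\begin{align*}
\int_{\mathcal{X}}|f_\Theta(x)-f_{\Theta^*}(x)|\,dx\le|S^{d-1}|\,L_{A,p}(\Theta)\le|S^{d-1}|\tilde\eta.
\end{align*}
Since $X_i(\epsilon)\subseteq\mathcal{X}$ and the integrand is nonnegative, this yields $L_{X_i(\epsilon)}(\Theta)\le\eta$ with $\eta:=|S^{d-1}|\tilde\eta$ for every $i\in[k]$ and every $\epsilon>0$; this is the only point at which the distributional assumption enters. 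Proposition~\ref{prop:key} then supplies a single bijection $\sigma$ with $\min\{\|\theta_{\sigma(i)}-\theta_i^*\|_2,\|\theta_{\sigma(i)}+\theta_i^*\|_2\}\le h(\eta,\epsilon)$ for all $i$, so it remains to choose $\epsilon$ so that $h(\eta,\epsilon)\le\tilde\alpha$.

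\textbf{Step 2 (choice of $\epsilon$).} The denominator of $h(\eta,\epsilon)$ is $\tfrac{\epsilon^2(1-d\epsilon^2/2)|S^{d-2}|}{8}-\eta-6kd\epsilon^3|S^{d-2}|$, so I would take $\epsilon$ of order $d^{-1/4}\tilde\eta^{1/2}$, concretely $\epsilon^2\propto\frac{|S^{d-1}|}{|S^{d-2}|}\tilde\eta$, so that three things hold simultaneously: (i) $\epsilon\le 1/\sqrt d$, which makes $1-d\epsilon^2/2\ge 1/2$; (ii) $\eta=|S^{d-1}|\tilde\eta$ is at most a quarter of the leading term $\tfrac{\epsilon^2|S^{d-2}|}{16}$, which holds automatically from the definition of $\epsilon$; and (iii) the cubic term $6kd\epsilon^3|S^{d-2}|$ is also at most a quarter of the leading term. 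Requirement (iii) amounts to $\epsilon\lesssim 1/(kd)$, and after substituting $\epsilon\propto d^{-1/4}\tilde\eta^{1/2}$ it is exactly the first hypothesis $\tilde\eta\le\frac{1}{1152^2\sqrt{2\pi}k^2d^{3/2}}$ (which also subsumes (i)). Under (i)--(iii) the denominator is at least $\tfrac12$ of the leading term, giving $h(\eta,\epsilon)\le 16\,k\epsilon\,\frac{|S^{d-3}|}{|S^{d-2}|}$.

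\textbf{Step 3 (surface-area ratios and closing).} Writing $\frac{|S^{d-3}|}{|S^{d-2}|}=\pi^{-1/2}\frac{\Gamma((d-1)/2)}{\Gamma((d-2)/2)}$ and $\frac{|S^{d-1}|}{|S^{d-2}|}=\pi^{1/2}\frac{\Gamma((d-1)/2)}{\Gamma(d/2)}$ and applying Gautschi/Wendel-type bounds on consecutive Gamma ratios, the product $\frac{|S^{d-3}|}{|S^{d-2}|}\bigl(\frac{|S^{d-1}|}{|S^{d-2}|}\bigr)^{1/2}$ collapses to $\pi^{-1/4}d^{1/4}$ up to an absolute constant. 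Substituting the chosen $\epsilon$ then gives $h(\eta,\epsilon)\le 727\,\pi^{-1/4}kd^{1/4}\tilde\eta^{1/2}=\tilde\alpha$. Finally I would verify that the second hypothesis $\tilde\eta\le\frac{\alpha_0^2\pi^{1/2}}{1454^2k^2d^{1/2}}$ is precisely the inequality $\tilde\alpha\le\alpha_0/2$ (using $4\cdot 727^2=1454^2$); this is exactly the regime in which the matching sets $\mathcal{J}_i^{\tilde\alpha}$ appearing in the proof of Proposition~\ref{prop:key} are pairwise disjoint by Assumption~\ref{assump:neuronskip}, so that $\sigma$ is a genuine bijection and the per-neuron bound holds for all $i\in[k]$.

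\textbf{Main obstacle.} The hard part is not the structure but the bookkeeping: I must track every absolute constant through the three-way balancing of the denominator of $h$ and through \emph{nonasymptotic} (rather than merely asymptotic) Gamma-ratio inequalities, so that the two conditions on $\tilde\eta$ emerge in the stated clean form and the constant $727$ is rigorously justified rather than only up to scaling.
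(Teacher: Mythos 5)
Your proposal is correct and follows essentially the same route as the paper's own proof: both localize the $p$-weighted error to each slab via the distributional assumption (setting $\eta=|S^{d-1}|\tilde\eta$), invoke Proposition~\ref{prop:key}, choose $\epsilon\asymp d^{-1/4}\tilde\eta^{1/2}$ (the paper takes $\epsilon=(36\sqrt{2\pi}\,\tilde\eta/\sqrt{d})^{1/2}$) so that the first hypothesis on $\tilde\eta$ makes the $1-d\epsilon^2/2$ and $kd\epsilon^3$ corrections in the denominator of $h$ harmless, and then bound the sphere-area ratios (the paper via Lemma~10 of \cite{xu2021group}, you via equivalent Gamma-ratio inequalities) to arrive at $727\pi^{-1/4}kd^{1/4}\tilde\eta^{1/2}$. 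Your identification of the second hypothesis as exactly the condition $\tilde\alpha\le\alpha_0/2$ (which is what Proposition~\ref{prop:key} needs for the matching to be a genuine bijection) is also precisely how the paper closes its argument.
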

We give a proof in Appendix~\ref{sec:lem:relumain:proof}. Note that the assumptions on the distribution $p$ can be easily satisfied, e.g., when $p=\text{Uniform}(S^{d-1})$ is a uniform distribution on the sphere. 
The above result says given a small generalization error for certain parameter estimate $\Theta$, its estimation error of the ground-truth parameter $\Theta^*$ up to a sign flip is also small correspondingly.

Finally, Theorem~\ref{thm:key} follows directly from Lemmas~\ref{lem:empiricalloss} \&~\ref{lem:relumain} by taking $\tilde\eta=\sqrt{2\zeta}$. 

\subsection{Proof of Lemma~\ref{lem:relulipschitz}}
\label{sec:lem:relulipschitz:proof}

The mean squared loss satisfies
\begin{align*}
|L_{S,p}(\Theta) - L_{S,p}(\Theta')|
& \le \mathbb{E}_{p}[|(f_{\Theta}(x)-f_{\Theta^*}(x))^2 - (f_{\Theta'}(x)-f_{\Theta^*}(x))^2|] \\
& \le \mathbb{E}_{p}[|(f_{\Theta}(x)-f_{\Theta^*}(x)+f_{\Theta'}(x)-f_{\Theta^*}(x))(f_{\Theta}(x)-f_{\Theta'}(x))|] \\
& \le 4k \mathbb{E}_{p}[|f_{\Theta}(x)-f_{\Theta'}(x)|] \\
& \le 4k\sum_{i\in[k]} \|\theta_i - \theta_i'\|_2,
\end{align*}
where we use $\|x\|_2=1$ and $\|\theta_i\|_2=\|\theta_i'\|_2=1$.
Similarly, the empirical loss satisfies
\begin{multline*}
|\hat{L}_S(\Theta;Z) - \hat{L}_S(\Theta';Z)|
=\left|\frac{1}{n}\sum_{i\in[n]} [(f_{\Theta}(x_i)-y_i)^2 - (f_{\Theta'}(x_i)-y_i)^2]\right| \\
\le \frac{1}{n}\sum_{i\in[n]} |(f_{\Theta}(x_i)-f_{\Theta^*}(x_i))^2 - (f_{\Theta'}(x_i)-f_{\Theta^*}(x_i))^2| + \frac{2}{n}\sum_{i\in[n]}|\xi_i(f_{\Theta}(x_i) - f_{\Theta'}(x_i))| \\
\le (4k + \frac{2}{n}\sum_{i\in[n]}|\xi_i|)\sum_{i\in[k]} \|\theta_i - \theta_i'\|_2.
\end{multline*}
Since $\xi_i$ is $\sigma$-subgaussian, so is $|\xi_i|$.
When $n \ge \log(\frac{2}{\delta})$, we have 
\begin{align*}
\frac{2}{n}\sum_{i\in[n]}|\xi_i| \le 2\left(E[|\xi_i|] + \sqrt{\frac{2\sigma^2\log(\frac{2}{\delta})}{n}}\right) \le 6\sigma
\end{align*}
with at least a probability of $1 - \delta/2$, where we use Lemma~1.4 from \cite{rigollet2015high} and Lemma~\ref{lem:subg-hoeffding} by taking $t=\sqrt{2\sigma^2\log(\frac{2}{\delta})/n}$. The claim follows. 

\subsection{Proof of Lemma~\ref{lem:lossconvergence}}
\label{sec:lem:lossconvergence:proof}

Then, construct an $\zeta/(16k+12\sigma)$-net $\mathcal{E}_{\Theta}$ with respect to $\ell_{2,1}$ norm of $\bar\varTheta=\{\Theta\in\mathbb{R}^{k\times d} \mid \|\Theta\|_{2,1}=k\}$ (remember $\bar\varTheta$ is a superset of $\Theta$'s domain $\varTheta$). For any such $\Theta\in\bar\varTheta$, there exists $\Theta'\in\mathcal{E}_{\Theta}$ such that
\begin{align*}
|(\hat{L}_S(\Theta;Z)-L_{S,p}(\Theta)) - (\hat{L}_S(\Theta'; Z)-L_{S,p}(\Theta'))| \le (8k+6\sigma)\|\Theta - \Theta'\|_{2,1} \le \zeta/2,
\end{align*}
with high probability $1-\delta/2$.
Given the above inequality, we have
\begin{multline}
\label{eqn:lem:relulossbound:1}
\mathbb{P}_p\left[\sup_{\Theta\in\varTheta} |(\hat{L}_S(\Theta;Z)-\sigma(\xi))-L_{S,p}(\Theta)|\ge \zeta\right]
\le \mathbb{P}_p\left[\sup_{\Theta\in\bar\varTheta} |(\hat{L}_S(\Theta;Z)-\sigma(\xi))-L_{S,p}(\Theta)|\ge \zeta\right] \\
\le \mathbb{P}_p\left[\max_{\Theta \in \mathcal{E}_{\Theta}} |(\hat{L}_S(\Theta;Z)-\sigma(\xi))-L_{S,p}(\Theta)| \ge \frac{\zeta}{2}\right] + \delta/2 \\
\le \sum_{\Theta\in\mathcal{E}_\Theta} \mathbb{P}_{p}\left[|(\hat{L}_S(\Theta;Z)-\sigma(\xi))-L_{S,p}(\Theta)| \ge \frac{\zeta}{2}\right] + \delta/2.
\end{multline}
Note that
\begin{align*}
(\hat{L}_S(\Theta;Z)-\sigma(\xi))-L_{S,p}(\Theta) = \mathcal{L}_1 + 2\mathcal{L}_2, 
\end{align*}
where 
\begin{align*}
\mathcal{L}_1 = \frac{1}{n}\sum_{i\in[n]}(f_\Theta(x_i)-f_{\Theta^*}(x_i))^2 - \mathbb{E}_{p}\left[(f_\Theta(x)-f_{\Theta^*}(x))^2\right], \quad 
\mathcal{L}_2 = \frac{1}{n}\sum_{i\in[n]}(f_\Theta(x_i)-f_{\Theta^*}(x_i))\xi_i.
\end{align*}
The probability within the sum in \eqref{eqn:lem:relulossbound:1} is then upper bounded by 
\begin{align*}
\mathbb{P}_{p}\left[|(\hat{L}_S(\Theta;Z)-\sigma(\xi))-L_{S,p}(\Theta)| \ge \frac{\zeta}{2}\right] 
\le \mathbb{P}_{p}\left[|\mathcal{L}_1| \ge \frac{\zeta}{4}\right] + \mathbb{P}_{p}\left[|\mathcal{L}_2| \ge \frac{\zeta}{8}\right].
\end{align*}
In the following, we bound each of the above two terms respectively. For the first term, 
since $|f_\Theta(x_i)-f_{\Theta^*}(x_i)|\le2k$, $(f_\Theta(x_i)-f_{\Theta^*}(x_i))^2$ is $4k^2$-subgaussian; thus, by Lemma \ref{lem:subg-hoeffding}, we have
\begin{align}
\mathbb{P}_{p}\left[|\mathcal{L}_1| \ge \frac{\zeta}{4}\right]
& \le 2\exp\left(-\frac{n\zeta^2}{2048k^4}\right). \label{eqn:lem:lossbound:3}
\end{align}
Next, for the second term, since $\xi_i$ is $\sigma$-subgaussian and $(f_\Theta(x_i)-f_{\Theta^*}(x_i))$ and $\xi_i$ are independent, we can show that $(f_\Theta(x_i)-f_{\Theta^*}(x_i))\xi_i$ is $(2k\sigma)$-subgaussian; thus, by Lemma~\ref{lem:subg-hoeffding}, we have
\begin{align}
\mathbb{P}_p\left[|\mathcal{L}_2| \ge \frac{\zeta}{8} \right]
&\leq  2\exp\left(-\frac{n\zeta^2}{512k^2\sigma^2} \right). \label{eqn:lem:lossbound:4}
\end{align}
Therefore, we obtain from (\ref{eqn:lem:relulossbound:1}) that
\begin{multline}
\label{eqn:lem:lossbound:5}
\mathbb{P}_p\left[\sup_{\Theta\in\varTheta} |(\hat{L}_S(\Theta;Z)-\sigma(\xi))-L_{S,p}(\Theta)|\ge \zeta\right] 
\le4|\mathcal{E}_\Theta|\exp\left(-\frac{n\zeta^2}{2048k^2(k^2+\sigma^2)}\right) \\
\le 2\left(1+\frac{2k(16k+12\sigma)}{\zeta}\right)^{dk}\exp\left(-\frac{n\zeta^2}{2048k^2(k^2+\sigma^2)}\right) \\
\le 2\exp\left(-\frac{n\zeta^2}{2048k^2(k^2+\sigma^2)}+dk\log\left(1+\frac{2k(16k+12\sigma)}{\zeta}\right)\right), 
\end{multline}
where the second inequality follows Lemma~\ref{lem:covnum} by noticing that $\|\Theta\|_{2,1}=k$ for $\Theta\in\bar\varTheta$.
Finally, we choose $\zeta$ so that (\ref{eqn:lem:lossbound:5}) is smaller than $\delta/2$---in particular, letting
\begin{align*}
\zeta = \sqrt{\frac{4096k^2(k\lor\sigma)^2}{n}\left(dk \max\left\{1, \log\left(1 + \sqrt{\frac{n}{dk}}\right)\right\} + \log\frac{4}{\delta}\right)},
\end{align*}
then we have \eqref{eqn:lem:lossbound:5} is upper bounded by $\delta/2$, where we use $k+\sigma\ge1$. The result follows. 

\subsection{Proof of Lemma~\ref{lem:empiricalloss}}
\label{sec:lem:empiricalloss:proof}

Since $\hat\Theta$ minimizes $\hat{L}_S(\Theta;Z)$, we have
\begin{align*}
0 & \le L_{S,p}(\hat\Theta) - L_{S,p}(\Theta^*) \\
& \le L_{S,p}(\hat\Theta) - (\hat{L}_S(\hat\Theta;Z)-\sigma(\xi)) + (\hat{L}_S(\Theta^*;Z)-\sigma(\xi)) - L_{S,p}(\Theta^*) \\
& \le 2 \sup_{\Theta}|(\hat{L}_S(\Theta;Z)-\sigma(\xi))-L_{S,p}(\Theta)| \le 2\zeta,
\end{align*}
with probability at least $1-\delta$, where we use Lemma~\ref{lem:lossconvergence}. Thus, by Cauchy-Schwarz inequaltiy and the fact that $L_{S,p}(\Theta^*)=0$, we have
\begin{align*}
L_{A,p}(\hat\Theta)
\le \sqrt{L_{S,p}(\hat\Theta)}
\le\sqrt{2\zeta},
\end{align*}
as claimed. 

\subsection{Proof of Lemma~\ref{lem:relumain}}
\label{sec:lem:relumain:proof}

Using the condition on $p$, we have
\begin{align*}
\int_{X_i}|f_\Theta(x)-f_{\Theta^*}(x)|dx
&\le\int_{\mathcal{X}}|f_\Theta(x)-f_{\Theta^*}(x)|dx
\le|S^{d-1}| L_{A,p}(\Theta)
\le|S^{d-1}|\tilde{\eta}.
\end{align*}
Then, letting $\eta=|S^{d-1}|\tilde\eta$, we have that $L_{X_i}(\Theta) \le \eta$ holds for all $i\in[k]$. Thus, by Proposition~\ref{prop:key} (we will check the conditions on $\alpha$ later), we have
\begin{align*}
\min\{\|\theta_{\sigma(i)}-\theta_i^*\|_2,
\|\theta_{\sigma(i)}+\theta_i^*\|_2\}\le \alpha
\end{align*}
for all $i\in[k]$. Plugging the value of $\eta$ above into $\alpha$, it yields that
\begin{align}
\label{eq:alphaineq}
\alpha
&=\frac{\frac{k\epsilon^3}{2}\frac{|S^{d-3}|}{|S^{d-1}|}}{\frac{\epsilon^2(1-d\epsilon^2/2)}{8}\frac{|S^{d-2}|}{|S^{d-1}|}-\tilde\eta-6kd\epsilon^3\frac{|S^{d-2}|}{|S^{d-1}|}}.
\end{align}
Using $|S^{d-1}|=\frac{2\pi^{d/2}}{\Gamma(d/2)}$ and Lemma~10 from \cite{xu2021group}, it holds that for any $d>1$
\begin{align*}
\frac{|S^{d-3}|}{|S^{d-1}|}\le\frac{d}{2\pi}, \qquad 
\frac{|S^{d-2}|}{|S^{d-1}|}\ge\frac{\sqrt{d}}{2\sqrt{2\pi}}.
\end{align*}
Combining the above, we obtain from \eqref{eq:alphaineq} 
\begin{align*}
\alpha&\le\frac{4\sqrt{2}kd\epsilon^3}{(\pi d)^{1/2}(1-d\epsilon^2/2-48kd\epsilon)\epsilon^2-16\sqrt{2}\pi\tilde\eta}.
\end{align*}
Take $\epsilon=(36\sqrt{2\pi}\frac{\tilde{\eta}}{\sqrt{d}})^{1/2}$. Given our choice of $\tilde\eta$, we can show that $d\epsilon^2/2\le 1/4$ and $48kd\epsilon\le 1/4$. Thus, we have
\begin{align*}
\alpha\le 727\pi^{-\frac{1}{4}}kd^{\frac{1}{4}}\tilde{\eta}^{\frac{1}{2}}.
\end{align*}
Note that such $\alpha$ satisfies $\alpha\le\alpha_0/2$ by our condition on $\tilde\eta$. The claim follows. 

\section{Proof of Proposition~\ref{prop:optimalactiongap}}
\label{sec:prop:optimalactiongap:proof}

We give a proof of Proposition~\ref{prop:optimalactiongap}.

\subsection{Intuition}

We illustrate our proof strategy in Figure~\ref{fig:proofsketch_relub}.
For all the neurons $\theta_i^*$'s, we can decompose them into three subsets --- i.e., the neurons that are ``positively'' activated, $A=\{i\in[k]\mid \theta_i^{*\top}x^*>0\}$, those that are orthogonal to $x^*$, $B=\{i\in[k]\mid \theta_i^{*\top}x^*=0\}$, and those that are inactive, $C=\{i\in[k]\mid \theta_i^{*\top}x^*<0\}$. Define $\bar\theta_A^* = \sum_{i\in A}\theta_i^*$. We first show that $\bar\theta_A^*$ and $x^*$ should be in the same direction (Figure \ref{fig:proofsketch_relub} (a)). Then, given this fact, we prove by showing that there is no neuron in the set $B$; otherwise, we can always find an action $x'$ to improve the value of $f_{\Theta^*}(x)$ (Figure \ref{fig:proofsketch_relub} (b)). 

\begin{figure*}[t]
\centering
\begin{subfigure}[b]{0.3\textwidth}
  \centering
  \includegraphics[width=\textwidth]{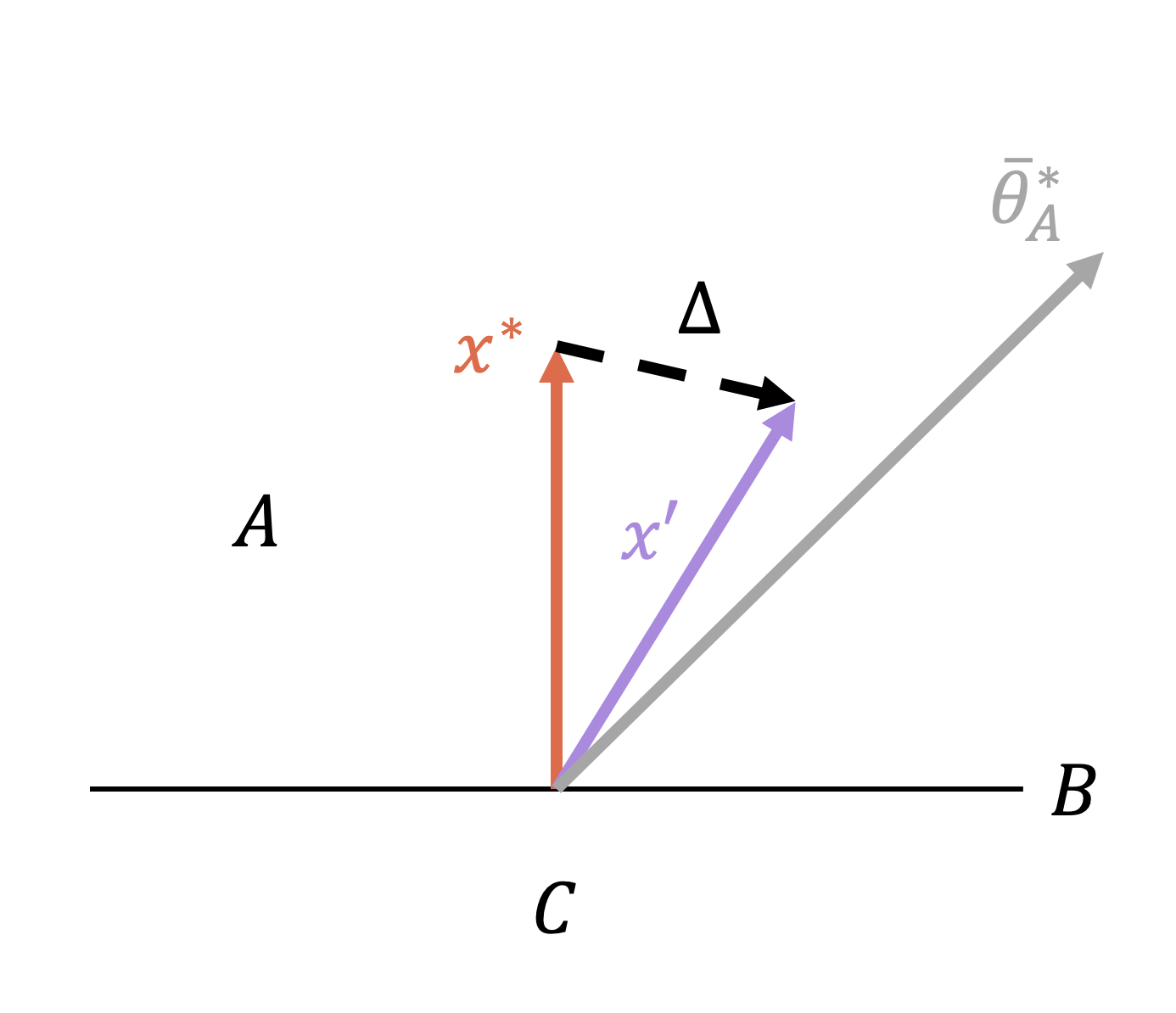}
  \caption{}
\end{subfigure}
\begin{subfigure}[b]{0.3\textwidth}
  \centering
  \includegraphics[width=\textwidth]
  {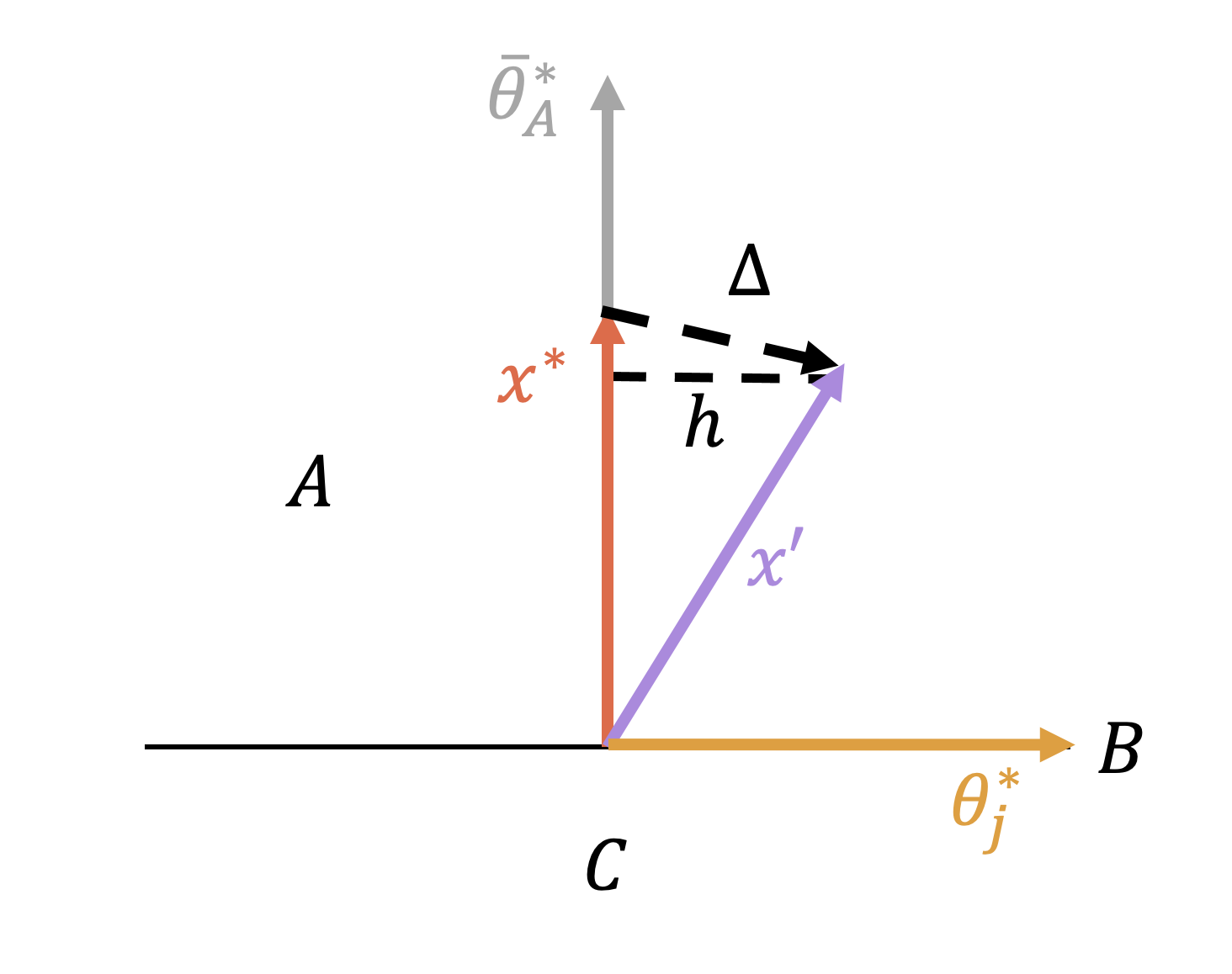}
  \caption{}
\end{subfigure}
\caption{Illustrations for proof sketch of the optimal action gap $\nu_*$.}
\label{fig:proofsketch_relub}
\end{figure*}

To see that, the maximum value of $f_{\Theta^*}(x)$ is equal to $f_{\Theta^*}(x^*) = \bar\theta_A^{*\top}x^*$. Therefore, if $\bar\theta_A^*$ is not aligned with $x^*$, we can always move $x^*$ a bit closer to $\bar\theta_A^*$ to a position $x'$ such that $\bar\theta_A^{*\top}x' > \bar\theta_A^{*\top}x^*$ without negatively affecting the other neurons in $B\cup C$. Then, suppose the set $B\ne\varnothing$ and there's at least one $j\in B$. We can always find an increasing direction $\Delta$ by moving $x^*$ a bit closer to $\theta_j^*$. Intuitively, $\Delta$ is almost orthogonal to $\bar\theta_A^*$ and in parallel with $\theta_j^*$; thus, the increase by moving $x^*$ closer to $\theta_j^*$ exceeds the the decrease by moving it away from $\bar\theta_A^*$ when $\Delta$ is small enough. 

\subsection{Proof of Proposition~\ref{prop:optimalactiongap}}

First, we show that $\bar\theta_A^*$ and $x^*$ are in the same direction. We prove by contradiction. Suppose $\bar\theta_A^*$ and $x^*$ are not aligned. We will show that we can find an action $x'$ such that $f_{\Theta^*}(x') > f_{\Theta^*}(x^*)$. Take $x'$ to be $\Delta$ away from $x^*$, i.e., $x' = x^* + \Delta$, such that $x'$, $x^*$ and $\bar\theta_A^*$ are in the same hyperplane (see Figure \ref{fig:proofsketch_relub} (a)). We first let $\Delta$ be small enough such that the neurons in $A$ remains strictly activated, i.e., $\theta_i^{*\top}x'>0$ for $i\in A$ (it suffices to take $\|\Delta\|_2<\min_{i\in A}|\theta_i^{*\top}x^*|$).
Then, define the angle between two vectors $z$ and $y$ as $\angle(z,y)\coloneqq \arccos(\frac{z^\top y}{\|z\|_2\|y\|_2})$. 

As long as 
$\Delta$ is such that $\angle(x^*, \bar\theta_A^*) > \angle(x', \bar\theta_A^*)$, it's easy to show that $\bar\theta_A^{*\top}x^* < \bar\theta_A^{*\top}x'$ by noting that $\|x^*\|_2=\|x'\|_2=1$ and $\angle(x^*, \bar\theta_A^*),\angle(x', \bar\theta_A^*) < \pi/2$. Additionally, we also require $\Delta$ to be small enough such that the neurons in $C$ remains inactive choosing action $x'$, i.e., $\theta_i^{*\top}x'<0$ for $i\in C$; note that it suffices to take $\|\Delta\|_2<\min_{i\in C}|\theta_i^{*\top}x^*|$. Finally, we have $\sum_{i\in B}g(\theta_i^{*\top}x')\ge\sum_{i\in B}g(\theta_i^{*\top}x^*)=0$ since $g(z)$ is non-negative. Therefore, we conclude that 
\begin{align*}
f_{\Theta^*}(x') = \bar\theta_A^{*\top}x' + \sum_{i\in B}g(\theta_i^{*\top}x') > \bar\theta_A^{*\top}x^* + \sum_{i\in B}g(\theta_i^{*\top}x^*) = f_{\Theta^*}(x^*),
\end{align*}
which is a contradiction. 

Now, we further show that $B=\varnothing$. Similarly, we use a proof by contradiction. Suppose there is at least some $j\in B$. We take an action $x' = x^* + \Delta$, such that $x'$, $x^*$ and $\theta_j^*$ are in the same hyperplane (see Figure \ref{fig:proofsketch_relub} (b)). Note that both $\bar\theta_A^*$ and $x^*$ are orthogonal to $\theta_j^*$ by definition of the set $B$. Then, it holds that $\bar\theta_A^{*\top}x' + \theta_j^{*\top}x' > \bar\theta_A^{*\top}x^* + \theta_j^{*\top}x^*$, as long as $\Delta$ is such that
\begin{align}\label{eq:bempty_suff}
\theta_j^{*\top}\Delta > -\bar\theta_A^{*\top}\Delta.
\end{align}
Since $\|x^*\|=\|x'\|=1$ and $x' = x^* + \Delta$, we have $-2x^{*\top}\Delta = \|\Delta\|^2$. 
Besides, let $h$ denote the perpendicular distance from $x'$ to $x^*$ and it satisfies 
\begin{align*}
\sqrt{1-h^2}+\sqrt{\|\Delta\|_2^2-h^2}=1 
\Leftrightarrow
h=\|\Delta\|_2\sqrt{1-\frac{\|\Delta\|_2^2}{4}}.
\end{align*}
By noting that $\bar\theta_A^*/\|\bar\theta_A^*\|_2=x^*$, it suffices to have 
\begin{align}\label{eq:bempty_suff_h}
\cos(\angle(\theta_j^*,\Delta))\|\Delta\|_2 \ge \frac{1}{2}\|\bar\theta_A^*\|\|\Delta\|_2^2 
\Leftrightarrow
\frac{h}{\|\Delta\|_2} \ge \frac{1}{2}\|\bar\theta_A^*\|\|\Delta\|_2
\Leftrightarrow
\|\Delta\|_2 \le \frac{4}{1+\|\bar\theta_A^*\|_2^2}
\end{align}
so that \eqref{eq:bempty_suff} holds.
In addition, we take $\Delta$ to be small enough such that the neurons in $A$ remains strictly activated and those in $C$ remain inactive; to that end, it suffices to take $\|\Delta\|_2<\min_{i\in A\cup C}|\theta_i^{*\top}x^*|$. Thus, we reach a contradiction that $f_{\Theta^*}(x') > f_{\Theta^*}(x^*)$. Our claim follows. 

\section{Proof of Theorem \ref{thm:algo1_reg}}
\label{app:thm_bandit}

Suppose the exploration stage ends at time $t_0$. By Theorem~\ref{thm:key}, we can ensure
\begin{align*}
\min\{\|\tilde\theta_i-\theta_i^*\|_2, \|\tilde\theta_i+\theta_i^*\|_2\}\le\nu_*/2,
\quad \forall i\in[k]
\end{align*}
with probability at least $1-\delta/2$ by choosing $t_0$ large enough so that $727\pi^{-\frac{1}{4}}kd^{\frac{1}{4}}(2\zeta)^{\frac{1}{4}}\le\nu_*/2$ (note that $\zeta$ depends on the sample size $n=t_0$). In particular, it suffices to take $\delta = 1/\sqrt{T}$ and 
\begin{align}
\label{eq:t0max_1}
t_0\ge t_1(\nu_*),\quad \text{where} \quad t_1(\nu)\coloneqq\frac{C_1 k^{10} d^2(k\lor\sigma)^2}{\nu^8}\left(dk(\log(d(k\lor\sigma))\lor\log\log T)+\log(64T)\right)
\end{align}
for some constant $C_1$. 
Recall that Theorem~\ref{thm:key} requires 
$\sqrt{2\zeta}\le\min\left\{\frac{1}{1152^2\sqrt{2\pi}k^2d^{3/2}},\frac{\alpha_0^2\pi^{1/2}}{1454^2k^2d^{1/2}}\right\}$; thus, we also require
\begin{align}
\label{eq:t0max_2}
t_0\ge t_2\coloneqq C_2k^{10}d^6(k\lor\sigma)^2\left(dk(\log(d(k\lor\sigma))\lor\log\log T)+\log(64T)\right)
\end{align}
for some constant $C_2$. Therefore, we have, with a slight abuse of notation, 
\begin{align}
\label{eq:t0max}
t_0=t_0(\nu_*)=\mathit{\tilde\Theta}(k^{13}d^3(1/\nu_*^8\lor d^4)), \quad \text{where} \quad t_0(\nu) = \max\{t_1(\nu), t_2\}.
\end{align}

Now we analyze the regret of our algorithm. 
At each time $t$, the per-period regret $r_t$ can be upper bounded by 
\begin{align*}
r_t = f_{\Theta^*}(x^*) - f_{\Theta^*}(x_t)
\le (\sum_{i\in[k]} \|\theta_i^*\|_2)(\|x^*\|_2+\|x_t\|_2) 
\le 2k.
\end{align*}
Therefore, the regret during the exploration stage is upper bounded by 
\begin{align*}
\sum_{t\in[t_0]} r_t \le 2kt_0 = \tilde{O}(k^{14}d^3(1/\nu_*^8\lor d^4)).
\end{align*}

In the second stage, we run OFUL to find the optimal policy for the linear function $f_{\theta^\ddagger}(x^\ddagger)$ given our estimate $\tilde\Theta_{t_0}$, where $x^\ddagger$ and $\theta^\ddagger$ are defined in \eqref{def:xtheta_eq_ucb_x} and \eqref{def:xtheta_eq_ucb_tht} respectively. Following the same proof strategy of Theorem 3 in \cite{abbasi2011improved}, it gives the regret bound in the second stage to be 
\begin{align*}
\sum_{t\in[T]\setminus[t_0]}r_t \le C_3\sqrt{kdT\log(\lambda+T/(2kd))}\left(\lambda^{1/2}\sqrt{k}+\sigma\sqrt{\log(4T)+2kd\log(1+T/(2\lambda kd))}\right)
\end{align*}
with a probability at least $1-\delta/2$, where the contextual dimension $d$ here is $2kd$, and $S=\sqrt{5k}$ by noting that $\|\theta^\ddagger\|\le \sqrt{5k}$. 

Finally, the above analysis shows that with a probability at least $1-\delta$, we both have a small estimation error in the exploration stage and a guanranteed regret upper bound of OFUL in the second stage. Thus, with a small probability $\delta=1/\sqrt{T}$, we would have linear regret scaling as $2kT$; thus, the expected regret in this case is bounded by $2kT\delta =2k\sqrt{T}$. Our claim then follows. 

\section{Proof of Theorem \ref{thm:algo2_reg}}
\label{app:pf_ofureluplus}

We bound the regret for the three cases respectively: (i) all batch $i$ satisfying $\nu_i>\nu_*$, (ii) $t\in(T_{i-1}, T_{i-1}+t_{0,i}]$ for all batch $i$ with $\nu_i\le\nu_*$, and (iii) $t\in(T_{i-1}+t_{0,i}, T_{i}]$ for all batch $i$ with $\nu_i\le\nu_*$.

First, in case (i), we have $i \le \log(\nu_0/\nu_*)/\log(b)$. Recall that the per-period regret $r_t$ can be trivially bounded by $2k$. Thus, the regret in this case is upper bounded by 
\begin{align*}
2k (a^{\log(\nu_0/\nu_*)/\log(b)}-1) T_1 \le 2k(\nu_0/\nu_*)^{\frac{\log(a)}{\log(b)}}T_1.
\end{align*}
Once $\nu_i\le \nu_*$, the gap $\nu_i$ is sufficiently accurate that the optimal action $x^*$ becomes feasible in the search region. Then, we can follow our proof strategy in Appendix \ref{app:thm_bandit}. 

Consider case (ii). Due to our exploration strategy, we randomly collect $t_{0, i} = t_0(\nu_i)-t_0(\nu_{i-1})$ samples in each batch $i$ ($t_0(\nu)$ defined in \eqref{eq:t0max}); thus, the total number of random samples we collect over time horizon $T$ is upper bounded by
\begin{align*}
\sum_{i=\lceil\frac{\log(\nu_0/\nu_*)}{\log(b)}\rceil}^{M-1} t_{0,i} \le t_0(\nu_{M-1}).
\end{align*}
Thus, the regret in case (ii) is upper bounded by
\begin{align*}
2kt_0(\nu_{M-1}) 
=& \tilde{O}\left(k^{14}d^7+k^{14}d^3T^{8\frac{\log(b)}{\log(a)}}\right),
\end{align*}
where we use the definition of $t_0$ in \eqref{eq:t0max_2} and the value of $M$ in \eqref{eq:valM}.

Next, we calculate the regret of running OFUL in case (iii). Same as the proof in Appendix \ref{app:thm_bandit}, we have
\begin{align*}
\sum_{i=\lceil\frac{\log(\nu_0/\nu_*)}{\log(b)}\rceil}^{M-1}\sum_{t=T_{i-1}+t_{0, i}}^{T_{i}}r_t = \tilde{O}(kd\sqrt{T}).
\end{align*}
Note that the proof strategy in \cite{abbasi2011improved} works as long as the data are fetched sequentially, and the length of the time periods without model misspecification (i.e., batch $i$ with $\nu_i<\nu_*$) scales as $T$, which both satisfy in our algorithm.

Finally, recall that at each batch $i$, there's a probability of $\delta_i=1/\sqrt{T}$ that our analysis above will fail. Thus, with a probability of at most $M/\sqrt{T}$ across all $M$ batches, we will have a linear regret. The expected regret for this small-probability event is upper bounded by
\begin{align*}
2kT\cdot M/\sqrt{T} = \tilde{O}(k\sqrt{T}).
\end{align*}
Combining all the above gives our final result.

\section{Useful Lemmas}

\begin{lemma}\label{lem:covnum}
For a ball in $\mathbb{R}^{d_1 \times d_2}$ with radius $r$ with respect to any norm, there exists an $\zeta$-net $\mathcal{E}$ such that
\begin{align*}
|\mathcal{E}| \le \left(1 + \frac{2r}{\zeta}\right)^{d_1d_2}.
\end{align*}
\end{lemma}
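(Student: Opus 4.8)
The plan is to use the standard volumetric packing argument, treating $\mathbb{R}^{d_1\times d_2}$ as a $D$-dimensional real vector space with $D = d_1 d_2$ (the number of matrix entries). I would fix the given norm $\|\cdot\|$ and write $B(x,s)$ for the closed ball of radius $s$ centered at $x$ under this norm.

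First I would construct the net greedily: let $\mathcal{E}$ be a maximal $\zeta$-separated subset of the ball $B(0,r)$, meaning every pair of distinct points of $\mathcal{E}$ is at distance greater than $\zeta$ and no further point of $B(0,r)$ can be adjoined without violating this. Maximality immediately yields that $\mathcal{E}$ is a $\zeta$-net: any point of $B(0,r)$ must lie within $\zeta$ of some element of $\mathcal{E}$, since otherwise it could be added to $\mathcal{E}$, contradicting maximality.

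Next I would bound $|\mathcal{E}|$ by a volume comparison. Because the points of $\mathcal{E}$ are pairwise more than $\zeta$ apart, the balls $B(x,\zeta/2)$ for $x\in\mathcal{E}$ are mutually disjoint; moreover each is contained in $B(0, r+\zeta/2)$ by the triangle inequality. The crucial fact, valid for \emph{any} norm, is that the Lebesgue volume of a norm ball scales homogeneously, $\mathrm{vol}(B(0,s)) = s^{D}\,\mathrm{vol}(B(0,1))$, which follows from the change-of-variables formula under the scaling map $v\mapsto sv$ together with the homogeneity $\|sv\| = s\|v\|$. Summing the disjoint volumes and comparing to the enclosing ball gives
$$|\mathcal{E}|\,(\zeta/2)^{D}\,\mathrm{vol}(B(0,1)) \le (r+\zeta/2)^{D}\,\mathrm{vol}(B(0,1)),$$
and cancelling $\mathrm{vol}(B(0,1))$ and rearranging yields $|\mathcal{E}| \le \big((r+\zeta/2)/(\zeta/2)\big)^{D} = (1 + 2r/\zeta)^{D}$, which is precisely the claim with $D = d_1 d_2$.

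There is essentially no hard step here: the only points requiring care are the identification of the ambient dimension as $d_1 d_2$ and the observation that volume homogeneity holds for an arbitrary norm rather than only the Euclidean one, which is exactly what makes the resulting constant norm-independent. The disjointness of the small balls and their containment in the enlarged ball are routine triangle-inequality arguments, and the maximal separated set exists because $B(0,r)$ is compact (so the greedy procedure terminates).
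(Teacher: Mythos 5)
Your proof is correct and is essentially the same argument as the paper's: the paper simply cites Proposition 4.2.12 of \cite{vershynin2018high}, whose proof is exactly the volumetric packing argument you give (maximal $\zeta$-separated set, disjoint $\zeta/2$-balls inside the enlarged ball, and norm-independent volume homogeneity in dimension $d_1d_2$). You have just written out in full what the paper delegates to the textbook.
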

\begin{proof}
This claim follows by a direct application of Proposition 4.2.12 in \cite{vershynin2018high}.
\end{proof}
\begin{lemma}
\label{lem:subg-hoeffding}
Letting $\{x_i\}_{i\in[n]}$ be a set of independent $\sigma$-subgaussian random variables with mean $\mu_i$, then for all $t \geq 0$, we have
\begin{align*}
\Pr\left[|\frac{1}{n}\sum_{i\in[n]} (x_i - \mu_i)| \ge t\right] \le 2\exp\left(-\frac{nt^2}{2\sigma^2}\right).
\end{align*}
\end{lemma}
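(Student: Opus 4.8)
The plan is to reduce the bound to a standard Chernoff-type argument applied to the centered sum $S = \sum_{i\in[n]}(x_i - \mu_i)$. First I would recall the moment generating function (MGF) characterization of a $\sigma$-subgaussian random variable: since each $x_i$ has mean $\mu_i$ and is $\sigma$-subgaussian, we have $\mathbb{E}[e^{\lambda(x_i-\mu_i)}] \le e^{\sigma^2\lambda^2/2}$ for every $\lambda\in\mathbb{R}$.

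Second, I would exploit independence to control the MGF of the sum. By independence of the $x_i$, the MGF factorizes across coordinates, so
\[
\mathbb{E}[e^{\lambda S}] = \prod_{i\in[n]}\mathbb{E}[e^{\lambda(x_i-\mu_i)}] \le e^{n\sigma^2\lambda^2/2},
\]
which shows that $S$ is itself $\sqrt{n}\,\sigma$-subgaussian. The only point requiring care here is that the subgaussian parameter accumulates as $\sqrt{n}\,\sigma$ rather than $n\sigma$, since it is the variance proxies $\sigma^2$ that add, not the parameters $\sigma$ themselves.

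Third, I would apply the Chernoff bound to the upper tail. Writing $s = nt$, so that the event $\{|\frac{1}{n}S|\ge t\}$ coincides with $\{|S|\ge s\}$, Markov's inequality applied to $e^{\lambda S}$ gives $\Pr[S\ge s] \le e^{-\lambda s + n\sigma^2\lambda^2/2}$ for all $\lambda>0$. Optimizing the exponent over $\lambda$ by taking $\lambda = s/(n\sigma^2)$ yields $\Pr[S\ge s] \le \exp(-s^2/(2n\sigma^2))$. An identical argument applied to $-S$ controls the lower tail, and a union bound over the two tails produces the factor of $2$. Substituting $s = nt$ gives the exponent $-(nt)^2/(2n\sigma^2) = -nt^2/(2\sigma^2)$, which is exactly the claimed bound.

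I do not anticipate a genuine obstacle, as this is the classical Hoeffding-type inequality for subgaussian variables; the only bookkeeping that demands attention is the rescaling by $1/n$ and correctly tracking how the variance proxy scales when summing independent subgaussians. Consequently, rather than reprove it from scratch, one could alternatively invoke a standard concentration result (e.g.\ the general Hoeffding inequality in \cite{vershynin2018high}) applied to the rescaled variables $(x_i-\mu_i)/n$, each of which is $(\sigma/n)$-subgaussian, which immediately yields the same exponent.
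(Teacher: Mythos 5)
Your proof is correct, and it is essentially the argument behind the result the paper simply cites without proof: the paper's entire proof is a pointer to Proposition 2.5 of \cite{wainwright2019high}, which is exactly this Chernoff/MGF bound for sums of independent subgaussian variables. Your self-contained derivation (MGF factorization giving a $\sqrt{n}\,\sigma$ variance proxy, optimized Chernoff bound, union bound over the two tails) is the standard proof of that cited result, and your closing remark about instead invoking a textbook concentration inequality is precisely what the paper does.
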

\begin{proof}
See Proposition 2.5 of \cite{wainwright2019high}.
\end{proof}


\end{document}